\documentclass{article}
\PassOptionsToPackage{numbers, compress}{natbib}

\usepackage{ragged2e}
 \usepackage[preprint]{neurips_2026}

\usepackage[utf8]{inputenc}
\usepackage[T1]{fontenc}    
\usepackage{hyperref}      
\usepackage{url}           
\usepackage{booktabs}       
\usepackage{amsfonts}      
\usepackage{nicefrac}       
\usepackage{microtype}      
\usepackage{xcolor}

\usepackage{graphicx} 
\usepackage{amsmath}
\usepackage{amssymb}
\usepackage{bm}
\usepackage[ruled,vlined]{algorithm2e}

\NewDocumentCommand{\hhh}{o}{{\mathcal{H}}
    \IfValueT{#1}{^{#1}}
}
\NewDocumentCommand{\xxx}{}{{\mathcal{X}}}

\newcommand{\eqdef}{
    \ensuremath{\stackrel{\mbox{\upshape\tiny def.}}{=}}
}

\NewDocumentCommand{\coR}{o}{{\overline{\rm co}(\mathcal R
    \IfValueT{#1}{^{#1}}
)}}

\usepackage{soul}

\usepackage{xcolor}
\definecolor{aogreen}{rgb}{0.0, 0.5, 0.0}

\newcommand{\pa}{\operatorname{pa}}
\newcommand{\ch}{\operatorname{ch}}
\usepackage{float}
\usepackage{algorithm2e}
\usepackage{graphicx} 
\usepackage{url}
\usepackage{xcolor}

\usepackage{xparse}
\usepackage{natbib}

\usepackage{amsfonts,amsmath,amsthm,amssymb,enumitem}
\newtheorem{theorem}{Theorem}[section]
\newtheorem{lemma}[theorem]{Lemma}
\newtheorem{proposition}[theorem]{Proposition}

\newtheorem{definition}[theorem]{Definition}
\newtheorem{corollary}[theorem]{Corollary}

\usepackage{tikz}
\usetikzlibrary{matrix}

\usepackage{hyperref} 
    \hypersetup{
    	colorlinks = true,
    	linkcolor = blue,
    	anchorcolor = blue,
    	citecolor = blue,
    	filecolor = blue,
    	urlcolor = blue
    }

\usepackage{comment}

\usepackage{marginnote}
\usepackage{xcolor}
\definecolor{darkcyan}{rgb}{0.0, 0.55, 0.55}
\definecolor{MidnightBlue}{RGB}{25,25,112}
\definecolor{MidnightBlueComplementingGreen}{RGB}{25,112,25}
\definecolor{MidnightBlueComplementingPurple}{RGB}{112,25,112}
\definecolor{MidnightBlueComplementingRed}{RGB}{112,25,69}
\definecolor{WowColor}{rgb}{.75,0,.75}
\definecolor{MildlyAlarming}{rgb}{0.85,0.25,0.1}
\definecolor{SubtleColor}{rgb}{0,0,.50}
\definecolor{antiquefuchsia}{rgb}{0.57, 0.36, 0.51}
\definecolor{fashionfuchsia}{rgb}{0.96, 0.0, 0.63}
\definecolor{jade}{rgb}{0.0, 0.66, 0.42}
\definecolor{caribbeangreen}{rgb}{0.0, 0.8, 0.6}
\definecolor{aquamarine}{rgb}{0.5, 0.8, 0.85}
\definecolor{attentioncolor}{RGB}{152,90,81}
\definecolor{burgred}{RGB}{40,3,22}
\definecolor{AKGreen}{RGB}{17,123,92}
\definecolor{Turquoise}{RGB}{64,224,208}

\definecolor{darkjade}{RGB}{0,122,84}
\definecolor{Window1}{RGB}{92,150,31}%
    \definecolor{Window1dark}{RGB}{41,67,13}%
\definecolor{Window2}{RGB}{255,168,28}
    \definecolor{Window2dark}{RGB}{114,75,12}
\definecolor{Window3}{RGB}{255,96,33}
    \definecolor{Window3dark}{RGB}{97,36,12}
\definecolor{InputColor}{RGB}{20,255,177}
    \definecolor{InputColorlight}{RGB}{222,237,229}

\usepackage[colorinlistoftodos]{todonotes}

\NewDocumentCommand{\Noah}{mo}{
    \IfValueF{#2}{
                        {{
                            \textcolor{magenta}{ 
                            \textbf{N:}
                            \textit{{#1}}
                            }
                        }}
        }
    \IfValueT{#2}{
                        \marginnote{{\scriptsize
                            \textcolor{magenta}{ 
                            \textbf{N:}
                            \textit{{#1}}
                            }
                        }}
        }
                    }

\definecolor{blue_nice}{rgb}{0.0, 0.5, 0.69}
\NewDocumentCommand{\AK}{mo}{
    \IfValueF{#2}{
                        {{
                            \textcolor{blue_nice}{ 
                            \textbf{Annie:}
                            \textit{{#1}}
                            }
                        }}
        }
    \IfValueT{#2}{
                        \marginnote{{\scriptsize
                            \textcolor{blue_nice}{ 
                            \textbf{Annie:}
                            \textit{{#1}}
                            }
                        }}
        }
                    }
\NewDocumentCommand{\Gudi}{mo}{
    \IfValueF{#2}{
                        {{
                            \textcolor{caribbeangreen}{ 
                            \textbf{G:}
                            \textit{{#1}}
                            }
                        }}
        }
    \IfValueT{#2}{
                        \marginnote{{\scriptsize
                            \textcolor{caribbeangreen}{ 
                            \textbf{G:}
                            \textit{{#1}}
                            }
                        }}
        }
                    }

\NewDocumentCommand{\bea}{mo}{
    \IfValueF{#2}{
                        {{
                            \textcolor{magenta}{ 
                            \textbf{B:}
                            \textit{{#1}}
                            }
                        }}
        }

    \IfValueT{#2}{
                        \marginnote{{\scriptsize
                            \textcolor{magenta}{ 
                            \textbf{B:}
                            \textit{{#1}}
                            }
                        }}
        }
                    }
\usepackage{tikz}

\usepackage{subcaption}
\usepackage{placeins}
\usepackage{algpseudocode}
\title{Classification Fields: Arbitrarily Fine Recursive Hierarchical Clustering From Few Examples}

\author{
\makebox[\textwidth][c]{%
\begin{tabular}{c}
\textbf{Yicen Li}$^{1,2}$\thanks{Corresponding author. Email: \texttt{li2642@mcmaster.ca}},
\textbf{Ruiyang Hong}$^{1,2}$,
\textbf{Anastasis Kratsios}$^{1,2}$\thanks{Corresponding author. Email: \texttt{kratsioa@mcmaster.ca}}
\\
\textbf{Haitz Sáez de Ocáriz Borde}$^{3}$,
\textbf{Paul D. McNicholas}$^{1,2}$ \\[0.5em]
{\normalfont $^{1}$Department of Mathematics and Statistics, McMaster University, Canada} \\
{\normalfont $^{2}$Vector Institute, Canada} \\
{\normalfont $^{3}$University of Cambridge, United Kingdom}
\end{tabular}%
}
}

\usepackage[most]{tcolorbox}

\newtcolorbox{questionbox}{
  enhanced,
  hbox,
  colback=gray!6,
  colframe=gray!50,
  boxrule=0.4pt,
  arc=1.5mm,
  left=3mm,
  right=3mm,
  top=1.2mm,
  bottom=1.2mm,
  before=\begin{center},
  after=\end{center}
}

\begin{document}

\maketitle

\vspace{-15pt}
\begin{abstract}
Classical clustering methods usually return either a finite partition of the observed data or a finite dendrogram over it. This finite-sample view is inadequate when the hierarchy of interest is a recursive geometric object with fine-scale refinements that continue beyond the levels directly observed. We introduce classification fields: infinite-depth hierarchical cluster structures on $\mathbb{R}^d$ generated by a local parent-to-child refinement rule. A classification field generator maps each parent centre to an ordered, bounded, and separated tuple of child residuals. Together with a root and a scale factor, this rule recursively generates cluster centres, Voronoi cells, and a metric DAG encoding the hierarchy. Given only a finite prefix of such a hierarchy, we learn a classification field predictor that approximates the generator and can be rolled out to unseen depths. 
We prove exponential truncation convergence in the completed cell metric and
ReLU realizability with width \(O(\varepsilon^{-\gamma})\) and depth
\(\widetilde O(\varepsilon^{-3\gamma/2})\), where
\(\gamma=\log K/(-\log s)\), up to finite-window aspect-ratio factors.
The approximation holds at the level of the induced compact metric structures, measured in the completed cell-metric Hausdorff distance.
Experimental validation on matched CFG-generated hierarchies, IFS fractals, and image-induced recursive clustering hierarchies shows that learned predictors preserve ordered child slots, unordered geometry, and hierarchy-level path metrics under recursive rollout. These results support the claim that finite hierarchical observations can reveal local refinement rules capable of generating substantially deeper classification fields.

\end{abstract}
\vspace{-10pt}
\section{Introduction}
Most hierarchical clustering methods operate on a fixed finite sample: given $N$ observations, they return a partition of those observations into at most $k \leq N$ groups, or a finite dendrogram whose leaves are the observed data points~\cite{manghiuc2021hierarchical,laenen2023nearly,braverman2025learningaugmented}. This is appropriate when the sample itself is the object to be clustered. We instead study settings where the observed hierarchy is only a finite prefix of a richer recursive structure. The number of latent fine-scale cluster centers may grow far beyond the number of observed nodes, i.e., $k \gg N$, and refinements may continue at depths not present in the training hierarchy. In this regime, the target is not merely a finite decomposition of the available data, but a multiscale geometric object generated by a reusable local refinement rule. This leads to a central question: if we observe only finite levels of a hierarchy:

\begin{questionbox}
\parbox{0.92\linewidth}{\itshape\justifying
Can we learn a local refinement rule from a finite observed prefix and recursively roll it out to generate substantially deeper, unobserved levels of the hierarchy?
}
\end{questionbox}

We answer this question affirmatively by introducing classification fields: infinite-depth hierarchical cluster structures generated by local parent-to-child refinement rules. Unlike classical hierarchical clustering, which describes a fixed collection of observations, classification fields model clusters themselves as recursively refinable geometric objects. Even recent differentiable or learned approaches to hierarchy typically fit a finite observed tree, embed a fixed structure, or optimize a finite clustering objective~\cite{stewart2023differentiable,manduchi2023tree,wang2026deep}; they do not directly learn a refinement mechanism that can be recursively rolled out beyond the observed hierarchy.

We formalize classification fields as recursively generated metric directed acyclic graphs (DAG). The local rule is a classification field generator (CFG), which maps each parent centre to an ordered $K$-tuple of child residuals. Starting from a root and a scale factor, recursive application of the CFG generates cluster centres at all depths; the associated Voronoi cells give the cluster-level geometric realization, while genealogical addresses encode parent-child structure. This separation between genealogy and geometry induces tree-based, point-based, and cell-based distances, allowing finite truncations and completed infinite-depth fields to be compared within a common metric-geometric framework \cite{borde2023neural}. From this viewpoint, hierarchical clustering becomes a problem of learning a local recursive refinement law, rather than constructing a finite combinatorial summary of a fixed dataset.

This framework also clarifies the learning problem. Given only a finite prefix of a hierarchy, we learn a classification field predictor (CFP) that approximates the underlying CFG and can be applied recursively to generate deeper levels. Under boundedness, separation, and regularity assumptions on the generator, we show that sufficiently deep finite truncations approximate the completed classification field under the completed cell metric. Building on quantitative memorization and approximation results for ReLU networks \cite{kratsios2023small, kim2023minimum,yu2024generalizablity}, we further show that ReLU predictors can realize the finite refinement windows needed for such approximations, yielding a finite neural rollout mechanism for infinite-depth hierarchical completion. Empirically, we evaluate this viewpoint in three settings of increasing mismatch from the theory: matched CFG-generated hierarchies, out-of-family IFS fractal hierarchies, and image-induced recursive clustering hierarchies built from CLIP~\cite{radford2021learningtransferablevisualmodels} embeddings of CIFAR datasets \cite{krizhevsky2009learning}. Across these settings, CFP rollouts preserve ordered child slots, unordered geometry, and hierarchy-level path metrics better than simpler baselines.

Overall, this paper makes three contributions. First, we introduce classification fields as a metric-geometric model for infinite-depth hierarchical clustering: recursively generated DAGs whose nodes carry genealogical addresses, geometric centres, and cluster cells. Second, we develop a metric theory for these objects, including tree, point, and cell based distances and quantitative truncation to completion bounds. Third, we establish neural realizability of recursive refinement by showing that finite refinement windows can be represented by ReLU predictors and rolled out as finite approximations to the limiting field. The experiments validate this classification-field viewpoint on matched recursive hierarchies, fractal hierarchies, and approximate image-induced hierarchies.

\section{Related Work}

\paragraph{Hierarchical clustering.}
Classical hierarchical clustering organizes a fixed finite dataset into a
hierarchy, from agglomerative procedures such as Ward's method
\cite{ward1963hierarchical,johnson1967hierarchical} to modern objective-based
formulations including Dasgupta's cost and related approximation guarantees
\cite{dasgupta2016cost,moseley2023approximation,cohen2019hierarchical,
roy2017hierarchical,manghiuc2021hierarchical,laenen2023nearly,
braverman2025learningaugmented}. Recent work further develops continuous and
differentiable relaxations that make hierarchical clustering amenable to
gradient-based optimization \cite{wang2020objective,chami2020trees,kollovieh2024expected,
stewart2023differentiable,manduchi2023tree}. 
A complementary population-level line studies cluster trees and density
level-set estimation, where the target hierarchy is induced by the connected
components of density upper level sets rather than by a finite dendrogram
\cite{hartigan1981consistency,chaudhuri2010rates,
chaudhuri2014consistent,stuetzle2010generalized,JMLR:v26:24-1052}.
Classification fields address a different object: a hierarchy generated by a
learnable local parent-to-child refinement rule, with recursive rollout of
centres, cells, and path metrics beyond the observed levels.

\paragraph{Deep clustering and learned structured representations.}
Deep clustering methods learn feature representations and cluster assignments, and improve clustering by adapting the representation space to the clustering objective \cite{caron2018deep,xie2016unsupervised,van2020scan,zhou2024comprehensive}. Another related line learns geometric representations of structured objects such as trees and DAGs, enabling discrete hierarchies to be embedded, optimized, or predicted with neural models \cite{ganea2018hyperbolic,borde2024neural,manduchi2023tree,wang2026deep,lin2025joint,borde2023neural}. These works show that neural representations can capture clustering and hierarchical structure, but they typically operate on a fixed dataset, tree, or graph. In contrast, a classification field predictor learns a local refinement map from a finite hierarchy prefix and uses recursive rollout to define deeper truncations of an infinite-depth metric hierarchy. This also differs from Matryoshka representation learning \cite{kusupati2022matryoshka}, which nests information across embedding dimensions or model capacity rather than across recursive hierarchy depth.

\paragraph{Recursive geometric generation.}
Classification fields are also related to recursive geometric generation, especially iterated function systems and self-similar fractals \cite{hutchinson1981fractals,barnsley1986solution}. IFS theory studies how repeated contraction maps generate self-similar attractors, and our fractal experiments use this setting as an out-of-family test of recursive refinement learning \cite{poli2022self}. The target, however, is different: IFS methods usually focus on the limiting set, whereas classification fields retain the explicit level-wise hierarchy, parent-child relations, and induced path metrics. Our theory additionally uses metric comparison tools for finite and infinite hierarchical objects \cite{mémoli2021gromovhausdorffdistanceultrametricspaces,borde2023neural,lin2025joint} together with quantitative ReLU memorization results \cite{kim2023minimum,yu2024generalizablity}. These tools let us formalize finite-depth observations as approximations to a completed infinite-depth classification field.
\section{Infinite Depth Hierarchies}
In this section we formalize classification fields. 
A classification field is the infinite-depth hierarchical object obtained by recursively applying a local parent-to-child refinement rule.
The rule itself is called a classification field generator (CFG).
Given a parent centre, it returns an ordered $K$-tuple of child residuals. Together with a root and a scale factor, repeated application of the CFG generates centres, Voronoi cells, and a genealogical graph. 
Thus, the classification field is the full recursively generated hierarchy, while the CFG is the local rule that generates it.

\begin{figure}[htbp]
    \centering
    \includegraphics[width=\linewidth]{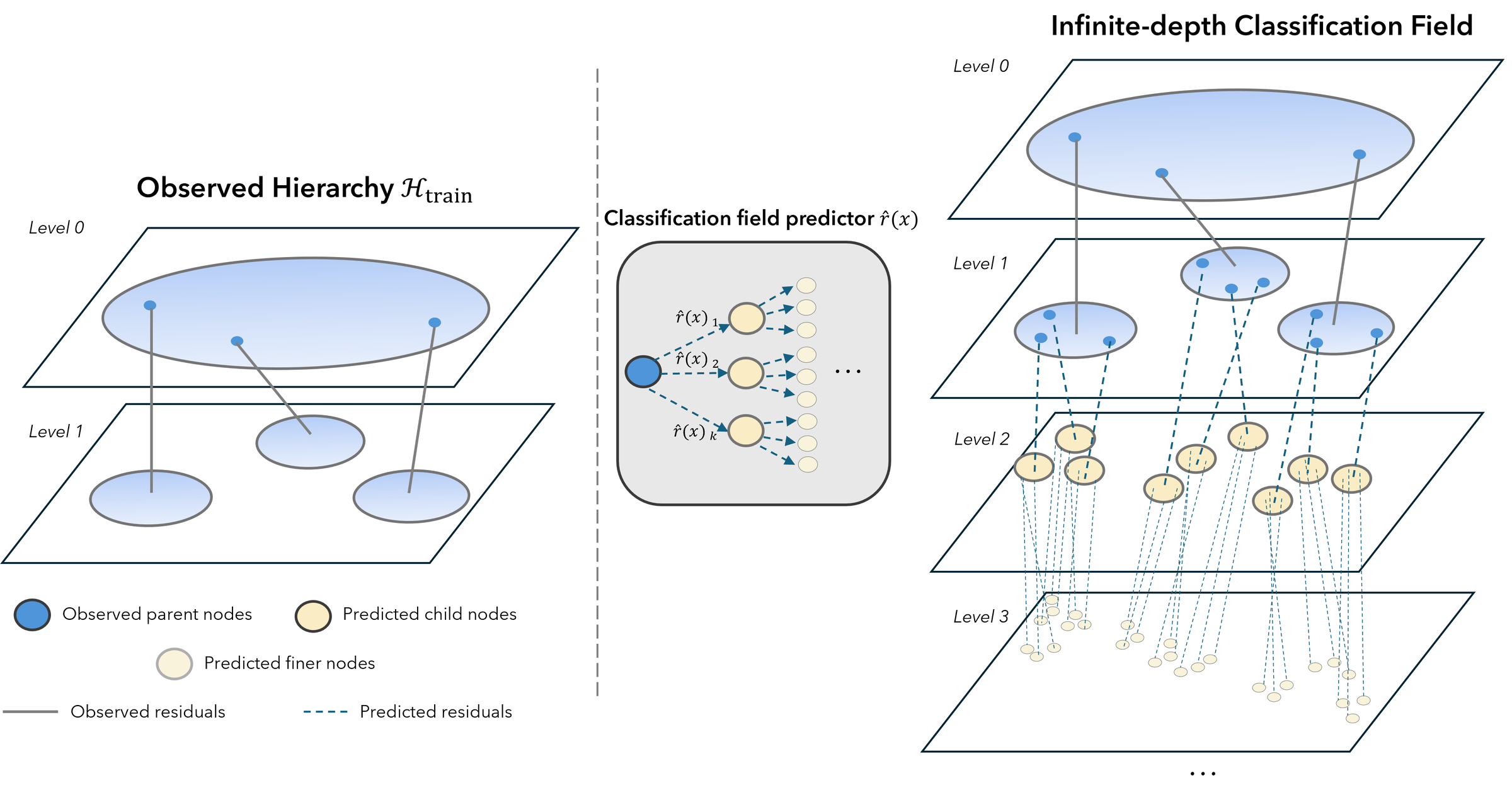}
    \caption{Learning a classification field from finite observations. The observed hierarchy $H_{train}$ contains only the first few refinement levels. A classification field predictor $\hat r$ maps each parent centre to an ordered tuple of child residuals, approximating the local classification field generator. Recursive rollout of $\hat r$ generates deeper unseen levels and yields a finite approximation to the target infinite-depth classification field.}
\label{fig:main}
\label{fig:main}
    \label{fig:placeholder}
\end{figure}

\subsection{Hierarchical Generators}
\label{s:hierarchicalgenerators}
We call a function
$
r:\mathbb{R}^d\to (\mathbb{R}^d)^K
$
a \textit{classification field generator} (CFG) if:
there exist $0<\lambda_{\min}\le \lambda_{\max}\le 1$ and $0<s_{\operatorname{sep}}$ such that: for every $x\in \mathbb{B}_2^d$ and each $k,\tilde{k}\in [K]_+$ with $k\neq \tilde{k}$

\begin{align}
\label{eq:separation_property}
s_{\operatorname{sep}} & \le \|r_k(x) - r_{\tilde{k}}(x)\|_2
,
\\
\lambda_{\min} & \le \|r_k(x)\|_2 \le \lambda_{\max}
.
\label{eq:boundedness_property}
\end{align}
In other words, for each input \(x\in\mathbb B_2^d\), the vectors
\(\{r_k(x)\}_{k=1}^K\) lie in a Euclidean annulus with inner radius
\(\lambda_{\min}\) and outer radius \(\lambda_{\max}\), and form an
\(s_{\operatorname{sep}}\)-separated configuration. This configuration is
not required to be maximal or orthogonal. We note that, there may be exponentially many such points in the sphere, since we are not requiring orthogonality in any manner.
We also note that: we are not requiring any continuity of $r$.

A natural question is how to construct such maps, and the following proposition answers this question.  We use the following idea of separated set in the above annulus; which we call a ``reference packing''. 
\begin{definition}[Reference packing]
\label{def:reference_packing}
Fix $d,K\in\mathbb{N}_+$ and parameters
\[
0<\lambda_-\le \lambda_+<\infty,
\qquad
\varepsilon>0.
\]
A $d\times K$ matrix $
C=(c_1|\cdots|c_K)$ is called a  \emph{$(\lambda_-,\lambda_+,\varepsilon)$-reference packing} if
\begin{align}
\lambda_-
\le
\|c_k\|_2 
\le
\lambda_+
\qquad
&\forall k\in[K],
\\
\|c_i-c_j\|_2
\ge
\varepsilon
\qquad
&\forall i,j\in[K]
\text{ with }i\neq j.
\end{align}
\end{definition}
Given a reference packing,  which as shown in Appendix~\ref{a:generatingpacking} Proposition~\ref{prop:haar_samples_separated}, can be randomly generated rather efficiently, we are able to construct a large family of CFGs.
\begin{proposition}[Parametrization of CFGs]
\label{prop:param_uniformly_separated_annular_configs}
Fix $d,K\in\mathbb{N}_+$ and
$
0<\sigma_{\min}\le \sigma_{\max}<\infty
$, 
$0<\lambda_-\le \lambda_+<\infty$, $
\varepsilon>0$, 
and assume that there is a $(\lambda_-,\lambda_+,\varepsilon)$-reference packing
$
C=(c_1|\cdots|c_K)\in \mathbb{R}^{d\times K}
$.
\hfill\\
For any set of maps
$
A_1,A_2:\mathbb{B}_2^d\to \mathbb{R}^{d\times d}
$
,
$
x\mapsto A_1(x),A_2(x)$, 
with $A_1(x)$ and $A_2(x)$ skew-symmetric for every $x$, and any function
$
\sigma:\mathbb{B}_2^d\to [\sigma_{\min},\sigma_{\max}]^d
$
induce a CFG 
\begin{equation}
\label{eq:r_inducedoski}
\begin{aligned}
    r:\mathbb{B}_2^d & \rightarrow \mathbb{R}^{d\times K}
\\
r(x)& \mapsto e^{A_1(x)}\operatorname{diag}(\sigma(x))e^{A_2(x)}{}^\top \,
C
.
\end{aligned}
\end{equation}
Moreover, 
$s_{\operatorname{sep}} = \sigma_{\min}\varepsilon$, $
\lambda_{\min} = \sigma_{\min}\lambda_-$, and $
\lambda_{\max} = \sigma_{\max}\lambda_+$ in~\eqref{eq:separation_property} and~\eqref{eq:boundedness_property}.
\end{proposition}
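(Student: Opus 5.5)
The plan is to reduce everything to one linear-algebra fact: for every fixed $x\in\mathbb{B}_2^d$ the matrix
\[
M(x)\eqdef e^{A_1(x)}\operatorname{diag}(\sigma(x))\,\bigl(e^{A_2(x)}\bigr)^{\top}
\]
satisfies $\sigma_{\min}\|v\|_2\le \|M(x)v\|_2\le \sigma_{\max}\|v\|_2$ for all $v\in\mathbb{R}^d$. Once this is established, the boundedness property~\eqref{eq:boundedness_property} and the separation property~\eqref{eq:separation_property} follow by applying it to suitable vectors built from the reference packing $C$. Since the CFG conditions are only required pointwise on $\mathbb{B}_2^d$, no regularity of $A_1,A_2,\sigma$ in $x$ is needed, and I will fix $x$ throughout.

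\emph{Step 1 (orthogonality).} If $A$ is skew-symmetric, then $A^\top=-A$ commutes with $A$. Hence $(e^{A})^\top e^{A}=e^{A^\top}e^{A}=e^{-A+A}=I$, so $U\eqdef e^{A_1(x)}$ and $V\eqdef e^{A_2(x)}$ are orthogonal; the same holds for $V^\top$.

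\emph{Step 2 (two-sided norm bound).} For $v\in\mathbb{R}^d$ put $w=V^\top v$, so that $\|w\|_2=\|v\|_2$. Because $U$ is an isometry,
\[
\|M(x)v\|_2^2=\|\operatorname{diag}(\sigma(x))w\|_2^2=\sum_{i}\sigma_i(x)^2w_i^2 .
\]
Each $\sigma_i(x)\in[\sigma_{\min},\sigma_{\max}]$, so this sum lies between $\sigma_{\min}^2\|v\|_2^2$ and $\sigma_{\max}^2\|v\|_2^2$. Equivalently, $M(x)$ is a singular value decomposition whose singular values lie in $[\sigma_{\min},\sigma_{\max}]$.

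\emph{Step 3 (apply to columns and differences).} The $k$-th column of $r(x)=M(x)C$ is $r_k(x)=M(x)c_k$. Taking $v=c_k$ and using $\lambda_-\le\|c_k\|_2\le\lambda_+$ gives
\[
\sigma_{\min}\lambda_-\le\|r_k(x)\|_2\le\sigma_{\max}\lambda_+ .
\]
Taking $v=c_k-c_{\tilde k}$ with $k\ne\tilde k$, linearity and $\|c_k-c_{\tilde k}\|_2\ge\varepsilon$ give
\[
\|r_k(x)-r_{\tilde k}(x)\|_2\ge\sigma_{\min}\varepsilon .
\]
These are exactly the stated constants $\lambda_{\min}=\sigma_{\min}\lambda_-$, $\lambda_{\max}=\sigma_{\max}\lambda_+$ and $s_{\operatorname{sep}}=\sigma_{\min}\varepsilon$.

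There is no real obstacle; the only point needing care is bookkeeping against the definition of a CFG. That definition additionally asks for $\lambda_{\max}\le 1$, which here means $\sigma_{\max}\lambda_+\le1$. I would state this as the implicit normalization, achievable by rescaling $C$ or $\sigma$. I would also note that $r$ is only specified on $\mathbb{B}_2^d$, and it may be extended arbitrarily to $\mathbb{R}^d$, since no continuity is required.
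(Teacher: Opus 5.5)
Your proof is correct and follows the paper's route: orthogonality of exponentials of skew-symmetric matrices (Lemma~\ref{lem:exp_skew_orthogonal}), the two-sided singular-value bound (Lemma~\ref{lem:sv_distortion_bounds}), and then applying it to the columns $c_k$ and the differences $c_k-c_{\tilde k}$, as in Proposition~\ref{prop:orth_diag_deformation_reference_packing}. Your remark that the CFG definition also requires $\lambda_{\max}=\sigma_{\max}\lambda_+\le 1$ is a fair point that the paper's statement leaves implicit.
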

\begin{proof}
See Appendix~\ref{s:orth_diag_param_sep_annuli}.
\end{proof}
Under more regularity assumptions on the functions inducing~\eqref{eq:r_inducedoski} we obtain regularity and separation properties of $r$ requiring basic regularity of the functions inducing it. More additional results are provided in Appendix~\ref{s:good_properties}.

\subsection{The Graphical Model}
\label{s:Graphical_Model}
For any radius $r\ge 0$ and any centre $x\in \mathbb{R}^d$ we define the \textit{closed} $\ell^p$-ball of dimension $d$ to be 
$
    \mathbb{B}_p^d(x,r) \eqdef \{z\in \mathbb{R}^d:\|z-x\|_p\le r\}
$.
We call a CFG-root-scale triple admissible if all recursively generated centres lie in the domain on which the CFG bounds hold and all recursively defined cells \(C_\alpha\) are non-empty compact sets.

\paragraph{Vertices}
We fix $1\le p\le \infty$, $K\in \mathbb{N}_+$, $0<\lambda_{\min}\le \lambda_{\max}\le 1$. All CFGs will satisfy~\eqref{eq:separation_property} and~\eqref{eq:boundedness_property}.
We also fix a scale $0<s<1$.
To distinguish the combinatorial genealogy from its geometric realization, we index vertices by their address in the tree. For every level $l\in \mathbb{N}$, we write
$
    \mathcal{A}_l \eqdef [K]_+^l
$ and $
    \mathcal{A}_0 \eqdef \{\varnothing\}
$,
where $\varnothing$ denotes the empty word. 
For \(\alpha=(\alpha_1,\ldots,\alpha_m)\in\mathcal A_m\) and \(k\in[K]_+\), we write \(\alpha k\eqdef(\alpha_1,\ldots,\alpha_m,k)\in\mathcal A_{m+1}\).
Given an initial condition $x_0\in \mathbb{R}^d$, we define the root by
\[
    x_{\varnothing}\eqdef x_0,
    \qquad
    C_{\varnothing}\eqdef \mathbb{B}_p^d(x_0,\lambda_{\max}).
\]
For every subsequent level $l\in \mathbb{N}_+$, every address $\alpha\in \mathcal{A}_{l-1}$, and every $k\in [K]_+$, we define
\begin{equation}
\label{eq:hierarchies}
\begin{aligned}
    x_{\alpha k}
    &\eqdef
    x_{\alpha}+s^l\,r(x_{\alpha})_k,
\\
    C_{\alpha k}
    &\eqdef
    \big\{
        z\in C_{\alpha}
        \,:\,
        \|z-x_{\alpha k}\|_p
        \le
        \min_{\tilde{k}\in [K]_+}\|z-x_{\alpha \tilde{k}}\|_p
    \big\}.
\end{aligned}
\end{equation} 

The level-$l$ vertex set of the genealogical tree is then
$
    \mathcal{V}_l^{\operatorname{tree}}
    \eqdef
    \big\{
        (C_{\alpha},x_{\alpha},\alpha)
        \,:\,
        \alpha\in \mathcal{A}_l
    \big\},
$
and the full genealogical vertex set is
$
    \mathcal{V}^{\operatorname{tree}}
\eqdef
    \bigcup_{l=0}^{\infty}\mathcal{V}_l^{\operatorname{tree}}
$.
In other words, the $l^{th}$ level of the hierarchy consists of the Voronoi cells induced by a fixed perturbation rule (encoded in the CFG $r$), whereby $K$ residuals are added to the previous parent node to construct its $K$ children; this is performed once for each parent at the $(l-1)^{st}$ level to obtain the hierarchy.
If one wishes to identify geometrically coincident nodes, one may pass to the quotient defined by
\[
    (C_{\alpha},x_{\alpha},\alpha)\sim (C_{\beta},x_{\beta},\beta)
    \mbox{ if and only if }
    C_{\alpha}=C_{\beta}
    \ \mbox{ and }\
    x_{\alpha}=x_{\beta}.
\]
The corresponding geometric vertex set is then
$
    \mathcal{V}^{\operatorname{geo}}
    \eqdef
    \mathcal{V}^{\operatorname{tree}}/{\sim}
$. Figure~\ref{fig:lecluster} in the appendix provides an intuitive understanding.
\paragraph{Edges}
The directed edges encode the parent-to-child relation between consecutive levels.
For the genealogical tree, we define
\[
    E^{\operatorname{tree}}
    \eqdef
    \Big\{
        \big(
            (C_{\alpha},x_{\alpha},\alpha),
            (C_{\alpha k},x_{\alpha k},\alpha k)
        \big)
        \in
        \mathcal{V}^{\operatorname{tree}}\times \mathcal{V}^{\operatorname{tree}}
        \,:\,
        \alpha\in \mathcal{A}_{l-1},\ 
        l\in \mathbb{N}_+,\ 
        k\in [K]_+
    \Big\}
    .
\]
Equivalently, there is an edge from every parent vertex to its $K$ children, and only to its $K$ children.

If one identifies geometrically coincident vertices, then the quotient edge set is
\[
    E^{\operatorname{geo}}
    \eqdef
    \Big\{
        ([u],[v])
        \in
        \mathcal{V}^{\operatorname{geo}}\times \mathcal{V}^{\operatorname{geo}}
        \,:\,
        \exists\,u'\in [u],\ \exists\,v'\in [v]
        \mbox{ such that }
        (u',v')\in E^{\operatorname{tree}}
    \Big\}
    .
\]

By construction, every directed edge joins a level-$(l-1)$ vertex to a level-$l$ vertex for some $l\in \mathbb{N}_+$. In particular, levels strictly increase along directed edges. Hence, the genealogical graph is a rooted directed tree. After quotienting geometrically coincident vertices, we work with the induced weighted graph metric. When the quotient does not identify vertices across levels, the quotient remains a DAG.

\paragraph{Edge-Weights and Shortest-Path Metrics}
We consider natural edge-weighting schemes on the genealogical directed graph
$
    G^{\operatorname{tree}}
    \eqdef
    (\mathcal{V}^{\operatorname{tree}},E^{\operatorname{tree}})
$.
In each case, we pass to the underlying undirected graph
$
    \overline{G}^{\operatorname{tree}}
    \eqdef
    (\mathcal{V}^{\operatorname{tree}},\overline{E}^{\operatorname{tree}})
$
obtained by forgetting edge orientations, and endow it with the corresponding shortest-path metric.

For notational convenience, if
$
u=(C_{\alpha},x_{\alpha},\alpha),\,
v=(C_{\beta},x_{\beta},\beta)
\in \mathcal{V}^{\operatorname{tree}},
$
then we write
$
\{u,v\}\in \overline{E}^{\operatorname{tree}}
$
whenever either $(u,v)\in E^{\operatorname{tree}}$ or $(v,u)\in E^{\operatorname{tree}}$.

\paragraph{Scale-Sensitive Cell-based edge weights.}
This weighting scheme measures distances between the Voronoi cells themselves, rather than between their distinguished points. To this end, for non-empty sets $A,B\subseteq \mathbb{R}^d$, we write
\[
    d_{H,p}(A,B)
    \eqdef
    \max\Big\{
        \sup_{a\in A}\inf_{b\in B}\|a-b\|_p,
        \sup_{b\in B}\inf_{a\in A}\|a-b\|_p
    \Big\}
\]
for the Hausdorff distance induced by $\|\cdot\|_p$. We then define the \textit{scale-sensitive} edge-weights by
\begin{equation}
\label{eq:scaled__cluster_weights}
    W_{\operatorname{cell}}
    \big(
        (C_{\alpha},x_{\alpha},\alpha),
        (C_{\alpha k},x_{\alpha k},\alpha k)
    \big)
    \eqdef
    \underbrace{s^{|\alpha|}\lambda_{\max}}_{\text{scale sensitivity}}
    \,
    \underbrace{
        d_{H,p}\big(C_{\alpha},C_{\alpha k}\big)
    }_{\text{cluster comparison}}
.
\end{equation}
We write $G_{\operatorname{cell}}\eqdef (\mathcal{V}^{\operatorname{geo}},d_{\operatorname{cell}})$ for the shortest-path metric on the weighted graph $(\mathcal{V}^{\operatorname{geo}},E^{\operatorname{geo}},W_{\operatorname{cell}})$ with ``scale-sensitive'' edge-weights in~\eqref{eq:scaled__cluster_weights}. 
Two additional weighting schemes and their corresponding properties are provided in Appendix~\ref{s:add_back} and Appendix~\ref{s:add_results}.

\section{Theoretical Guarantees}
\label{s:Main__ss:Theory}

The construction above defines an infinite object, whereas both observation and computation are finite. The following result connects these two regimes. It shows that a sufficiently deep finite rollout approximates the completed classification field, and that the finite refinement window needed for this rollout can be realized by a ReLU classification field predictor. 

\begin{theorem}[Exponentially-Fast Convergence to the Infinite-Precision Classification Field]
\label{thrm:main_result}
Fix \(1\le p<\infty\), \(d\in\mathbb N_+\), \(K>1\), \(0<s<1\), and
\(0<\lambda_{\min}\le \lambda_{\max}\le 1\). 
Let \(x_0\in\mathbb R^d\) and let
\(r:\mathbb R^d\to\mathbb R^{d\times K}\) be a CFG such that the induced
CFG-root-scale triple \((r,x_0,s)\) is admissible. Denote by
\(X_{\operatorname{cell}}\) the completed cell-metric classification field
generated by this triple, and write
\(Y_L\eqdef \mathcal V^{\operatorname{geo}}_{\le L}\subset X_{\operatorname{cell}}\)
for its true depth-\(L\) geometric truncation.

For any \(0<\varepsilon<1\), set 
$L_\varepsilon
\eqdef
\max\left\{
2,
\left\lceil
\frac{
\log\!\left(2\lambda_{\max}^2/((1-s)\varepsilon)\right)
}{
-\log s
}
\right\rceil
\right\}.$

Then there exists a \(\operatorname{ReLU}\)-MLP, called a classification field
predictor (CFP), \(\hat r:\mathbb R^d\to\mathbb R^{d\times K}\), whose
depth-\(L_\varepsilon\) rollout coincides with the true truncation
\(Y_{L_\varepsilon}\). Consequently,
\[
d_H^{\overline d_{\operatorname{cell}}}
\bigl(
Y_{L_\varepsilon},X_{\operatorname{cell}}
\bigr)
\le
\varepsilon .
\]
Moreover, writing
\(\gamma\eqdef \log K/(-\log s)>0\) and
\(B_\varepsilon\eqdef B_{0,L_\varepsilon}\) as in
Proposition~\ref{prop:finite_depth_memorization_rhg}, the CFP can be chosen
with width \(\mathcal O(\varepsilon^{-\gamma})\) and depth $\widetilde{\mathcal O}\!\left(
\varepsilon^{-3\gamma/2}
\left[
1+
\frac{\log_+(B_\varepsilon)}{\log(1/\varepsilon)}
\right]
\right).$ 
In particular, if the finite-window aspect ratios are polynomially bounded,
i.e. \(B_\varepsilon\le \varepsilon^{-\mathcal O(1)}\), then
\(\operatorname{depth}(\hat r)
=\widetilde{\mathcal O}(\varepsilon^{-3\gamma/2})\).
\end{theorem}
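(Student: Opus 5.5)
The proof splits into two independent parts: a truncation bound, which is purely metric, and a realizability statement, which is finite memorization.

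\textbf{Truncation bound.} The plan is to show that every point of the completed field $X_{\operatorname{cell}}$ lies within $\varepsilon$ of $Y_{L}$ in $\overline d_{\operatorname{cell}}$. Since $Y_L\subset X_{\operatorname{cell}}$, only one direction of the Hausdorff distance needs to be controlled.

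First I bound the single-edge weights. For an edge from $\alpha$ at level $m=|\alpha|$ to a child $\alpha k$:
\begin{itemize}
\item $C_{\alpha k}\subseteq C_\alpha$, so $d_{H,p}(C_\alpha,C_{\alpha k})\le \operatorname{diam}_p(C_\alpha)$.
\item Inductively $C_\alpha\subseteq C_\varnothing=\mathbb B_p^d(x_0,\lambda_{\max})$, so $\operatorname{diam}_p(C_\alpha)\le 2\lambda_{\max}$.
\item Hence the edge weight is at most $2\lambda_{\max}^2 s^{m}$.
\end{itemize}
A sharper cell-diameter bound using $\|r_k\|\le\lambda_{\max}$ and the Voronoi structure is unnecessary here, because the scale factor $s^{|\alpha|}$ already provides geometric decay.

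Next, for any vertex $\beta$ at depth $n>L$, follow the ancestor path $\beta|_L,\beta|_{L+1},\dots,\beta$. Its length is at most
\[
\sum_{m\ge L}2\lambda_{\max}^2s^m=\frac{2\lambda_{\max}^2s^L}{1-s}.
\]
The same bound passes to completion points by density: each completion point is a limit of a Cauchy sequence of vertices, and the bound is uniform. The choice of $L_\varepsilon$ gives $s^{L_\varepsilon}\le (1-s)\varepsilon/(2\lambda_{\max}^2)$, which yields the bound $\le\varepsilon$.

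One subtlety needs care: the geometric quotient could identify vertices across levels and create shortcuts. Shortcuts only decrease path lengths, so the upper bound survives. I take the metric $\overline d_{\operatorname{cell}}$ on the quotient as already defined.

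\textbf{Realizability.} For this part I invoke Proposition~\ref{prop:finite_depth_memorization_rhg}. Admissibility ensures that the finitely many parent centres $\{x_\alpha:|\alpha|<L_\varepsilon\}$ are well defined. Since the rollout is deterministic, it suffices for $\hat r$ to interpolate $r$ exactly on these points:
\begin{itemize}
\item Suppose $\hat r(x_\alpha)=r(x_\alpha)$ for all $|\alpha|<L_\varepsilon$.
\item By induction on level, the rolled-out centres coincide: $\hat x_{\alpha k}=\hat x_\alpha+s^{l}\hat r(\hat x_\alpha)_k=x_{\alpha k}$.
\item The cells are built from the centres, so the cells coincide too, and so do the edge weights.
\item Therefore the depth-$L_\varepsilon$ rollout equals $Y_{L_\varepsilon}$.
\end{itemize}
If two parent centres coincide geometrically but have different $r$-values, the interpolation problem becomes ill-posed. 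However, $r$ is a function of $x$, so equal inputs have equal outputs and there is no conflict.

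\textbf{Complexity.} The number of data points is $N=\sum_{l<L_\varepsilon}K^l\asymp K^{L_\varepsilon}$. Since $L_\varepsilon=\log(1/\varepsilon)/(-\log s)+O(1)$, this gives $N=O(\varepsilon^{-\gamma})$ with $\gamma=\log K/(-\log s)$, the constants depending on $\lambda_{\max}$ and $s$.

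Proposition~\ref{prop:finite_depth_memorization_rhg} is a quantitative ReLU memorization result in the style of Kim et al. and Yu et al. It supplies width $O(N)$ (or $O(dK)$-dependent constants times $N$) and depth roughly $\widetilde O(N^{3/2}[1+\log_+ B/\log N])$, where $B$ is the aspect ratio of the data window (maximum over minimum separation). Substituting $N\asymp\varepsilon^{-\gamma}$ and $\log N\asymp\log(1/\varepsilon)$ gives the stated width and depth. When $B_\varepsilon\le\varepsilon^{-O(1)}$, the bracket is $O(1)$.

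\textbf{Main obstacle.} I expect the hardest step to be the bookkeeping of how exactly Proposition~\ref{prop:finite_depth_memorization_rhg} is phrased. It must deliver exact interpolation of vector-valued outputs in $\mathbb R^{dK}$ with the $N^{3/2}$-type depth. The logs must be tracked so that $\log N$ and $\log(1/\varepsilon)$ are interchangeable up to the constant $\gamma$, and the $O(1)$ slack in $L_\varepsilon$ must be absorbed into constants.
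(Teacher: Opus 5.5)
Your proposal is correct and follows the paper's proof essentially step for step: you bound each cell edge weight by $2\lambda_{\max}^2 s^{|\alpha|}$, sum along the ancestral chain and pass to the completion (this is Proposition~\ref{prop:GH_tree_vs_cell}); you interpolate $r$ exactly on the distinct parent centres of depth $<L_\varepsilon$ via Proposition~\ref{prop:finite_depth_memorization_rhg} with $(L_-,L_+)=(0,L_\varepsilon)$; you show by induction that the rollout reproduces centres and cells; and you substitute $K^{L_\varepsilon}=\mathcal O(\varepsilon^{-\gamma})$ into the width and depth bounds. One cosmetic point: the paper's memorization proposition rests on \citep[Lemma 20]{kratsios2023small} rather than on Kim et al.\ or Yu et al., and it already delivers exactly the vector-valued interpolation and bounds you need, so the bookkeeping you flag as the main obstacle is handled by that result.
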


The proof is given in Appendix~\ref{s:Guarantees__ss:Gromov_Hausdorff__Tree}.
The theorem is a truncation and neural-realizability guarantee: a sufficiently
deep finite rollout can be identified with an \(\varepsilon\)-accurate truncation of the completed classification field.

\section{Experimental Validation}
\label{s:Experiments}
The experiments test the operational meaning of classification fields: whether a finite observed hierarchy contains enough information to learn a local refinement rule whose recursive rollout preserves deeper geometric and hierarchical structure. 
They are not intended as a benchmark competition for finite hierarchical clustering. Instead, each experiment explores whether a learned CFP can act as a finite computational realization of an underlying, or approximate, classification field.

We consider three settings of increasing mismatch from the idealized theory. 
The first is a matched CFG setting, where the data are generated by a known class of recursive refinement rules and the goal is to test stable rollout beyond the observed hierarchy. 
The second uses IFS-induced fractal hierarchies, where the hierarchy has exact recursive structure but is not drawn from the same parametrized CFG family. 
The third uses image-induced recursive clustering hierarchies built from CLIP embeddings of CIFAR images, where the refinement rule is only approximate and child-slot correspondence must be canonicalized. 
Together, these settings test whether the classification field viewpoint remains useful as we move from exact model generated fields to approximate data-induced fields.
Additional diagnostics on observed depth, scale factor, and child-slot consistency are also reported in Appendix~\ref{A:ablation}.

\paragraph{Experimental setup}
Let
$
\mathcal{H}_{\mathrm{train}}
=
\cup_{l=0}^{L_{\mathrm{train}}}\mathcal{H}_l
$
be the observed prefix of a hierarchy with branching factor $K$. For each
parent node $x\in\mathcal H_l$, we denote its children by
$
\mathrm{chld}(x)_1,\dots,\mathrm{chld}(x)_K \in \mathcal H_{l+1}.
$
We model refinement by a residual rule
$
\mathrm{chld}(x)_k = x + s^{\,l+1} r(x)_k
$ for $k=1,\dots,K,
$
and train a CFP $\hat r$ by minimizing
\begin{equation}
\label{eq:loss}
L\!\left(\hat r \,\middle|\, \mathcal H_{\mathrm{train}}\right)
=
\sum_{l=0}^{L_{\mathrm{train}}-1}
\sum_{x\in \mathcal H_l}
\sum_{k=1}^K
\left\|
x+s^{\,l+1}\hat r(x)_k-\mathrm{chld}(x)_k
\right\|_2^2.
\end{equation}
At test time, $\hat r$ is applied recursively from the root. Every generated child becomes a parent for the next level, and no ground-truth nodes from unseen levels are used during roll-out. Thus the evaluation measures recursive field completion rather than one-step prediction under teacher forcing. All related experiment details are given in Appendix~\ref{s:Experiments_Details}.

\paragraph{Metrics}
We use three metrics corresponding to three aspects of a classification field.
MSE is order-sensitive and tests whether the CFP recovers the canonical child slots.
PI-CD is permutation-invariant and tests whether the predicted children match the target child set geometrically even if slot labels are ignored. 
dpt-distortion compares the shortest-path metrics induced by Euclidean parent-child edge lengths on the predicted and target hierarchy truncations. 
Together, these metrics distinguish slotwise refinement accuracy, unordered geometric fidelity, and preservation of hierarchy-level path structure.
Full definitions are given in Appendix~\ref{A:Metric}.

\paragraph{Extrapolation on CFG-generated hierarchies}
We first evaluate the matched setting in which the target hierarchy is generated by a CFG from the same recursive family used in the theory. This experiment asks whether finite observations of a classification field are sufficient to recover a reusable local refinement rule. Figure~\ref{fig:MSE} shows that the CFP maintains the lowest level-wise error across the unseen rollout horizon. 
The average and learnable-constant baselines degrade quickly, indicating that a single global refinement template cannot represent an input-dependent field. 
The affine predictor is stronger, but still higher above the CFP. 
This gap is important because rollout compounds errors recursively. A predictor that is only locally accurate but does not capture the field structure would drift as generated nodes become future parents. 
The stable CFP curves therefore support the classification field claim that the learned local rule remains coherent under repeated self-composition. 
Appendix~\ref{A:ablation} further studies how this behavior depends on observed depth, scale factor, and child-slot consistency.

\begin{figure}[htbp]
    \centering
    \includegraphics[width=0.5\linewidth]{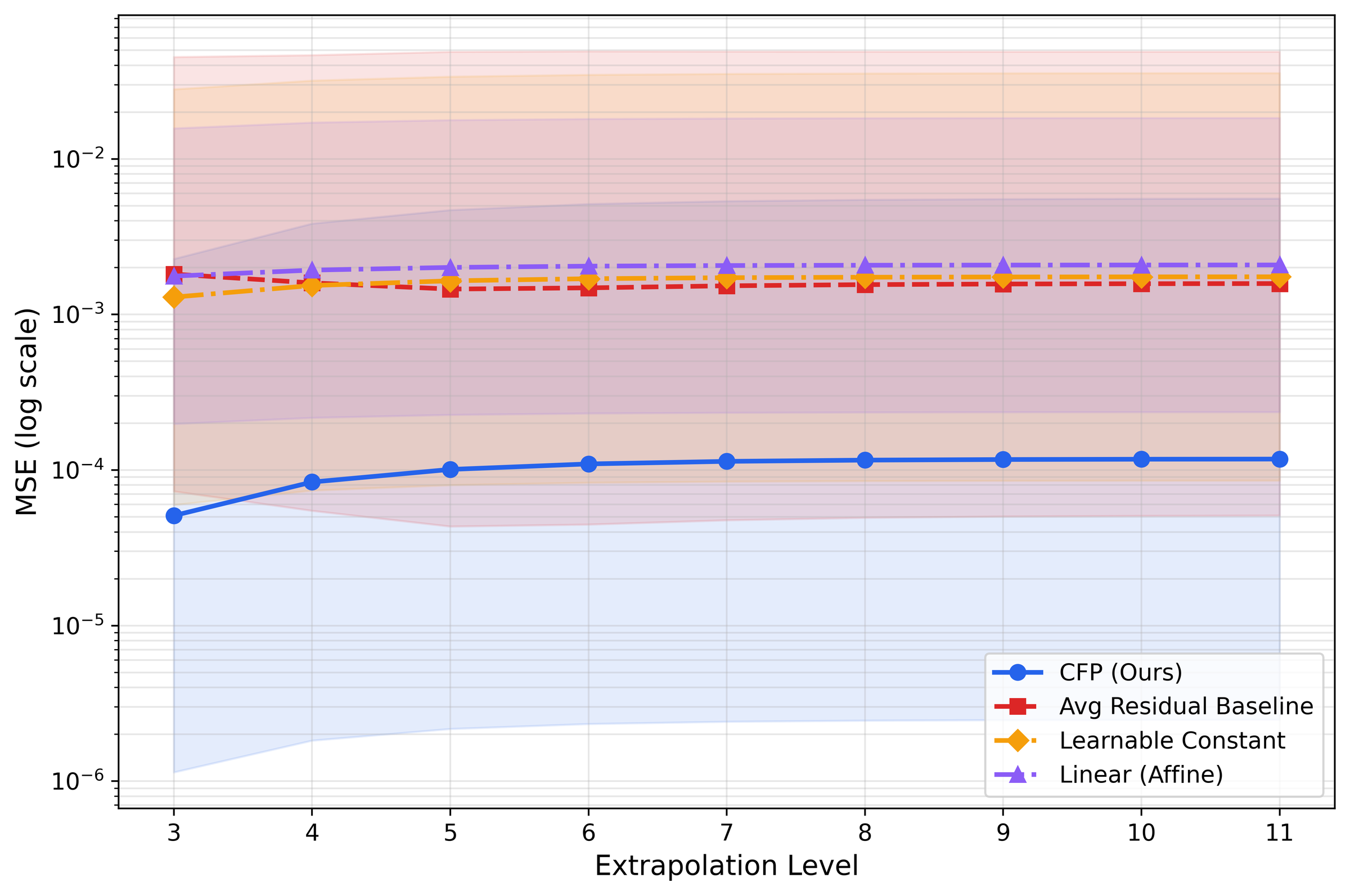}
    \caption{Matched CFG rollout. All methods are trained on levels 0–2 and recursively rolled out for nine unseen levels without teacher forcing. Curves report the geometric mean of level-wise MSE over 9 random CFG trials, with shaded bands corresponding to \(\pm 1\) standard deviation in \(\log_{10}\) MSE. CFP remains stable under repeated self-composition, whereas location-independent templates degrade and the affine rule plateaus at higher error.}
    \label{fig:MSE}
\end{figure}

\paragraph{Generalization to synthetic nonlinear IFS hierarchies}
We next test exact recursive hierarchies outside the matched CFG family.
Iterated function systems define a parent-to-child map \(f_k(x)\), so the update can be written as $x+s^{\,l+1}r(x)_k=f_k(x).$ 
For these benchmarks we set the external scale to \(s=1\) and absorb all contractions into the maps \(f_k\), so that \(r(x)_k=f_k(x)-x\).
Starting from a root \(x_0\in\mathbb R^d\), the hierarchy is generated recursively by applying all maps to every node at the current level.

\begin{figure}[htbp!]
    \centering
    \includegraphics[width=\linewidth]{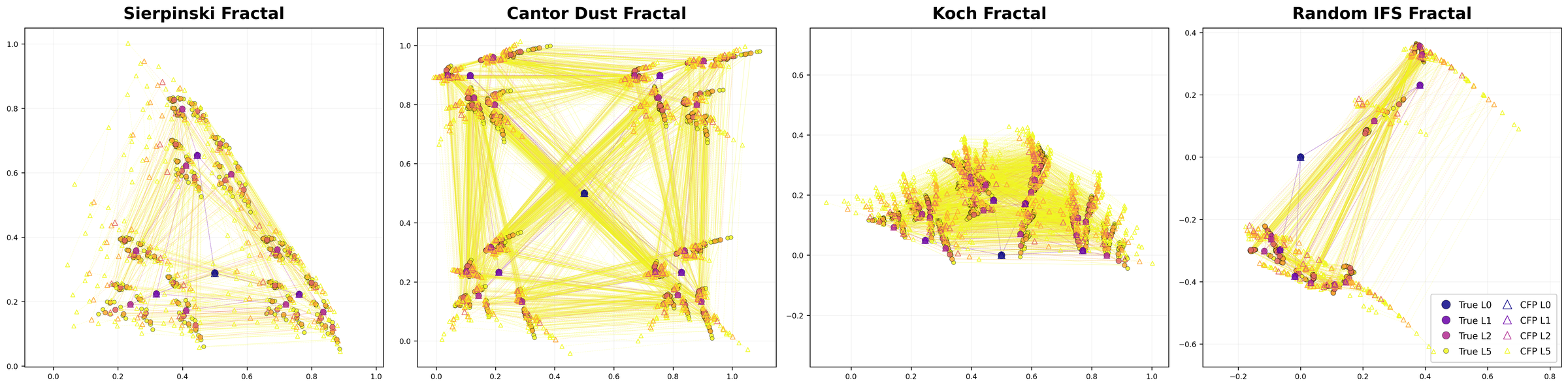}
    \caption{Recursive rollout on nonlinear IFS-induced classification fields. From left to right: Sierpiński triangle, 2D Cantor dust, Koch curve, and random nonlinear IFS. Models are trained only on levels 0–2 and rolled out to level 5. Filled circles and solid edges denote ground truth; hollow triangles and dashed edges denote CFP predictions. These examples test whether learned local refinement rules preserve recursive geometry outside the matched CFG family.}
    \label{fig:nonlinear_benchmark}
\end{figure}

We evaluate four smooth nonlinear IFS benchmarks obtained by perturbing classical fractal IFS families with child-specific sinusoidal warps: nonlinear Sierpi\'nski triangle (\(K=3\)), nonlinear 2D Cantor dust (\(K=4\)), nonlinear Koch curve (\(K=4\)), and a random nonlinear IFS family (\(K=3\)).
The first three use fixed nonlinear maps across trials, while the random nonlinear maps are resampled across trials.
All models are trained on levels \(0\)--\(2\) and evaluated on held-out extrapolation levels \(3\) to \(5\); full map definitions, nonlinear perturbation parameters, linear counterpart experiments, and root choices are given in Appendix~\ref{A:ifs_details}.

These benchmarks should be viewed as out-of-family classification fields: they have a true recursive refinement law, but that law is not necessarily sampled from the parametrized CFG class used in the theory.
Figure~\ref{fig:nonlinear_benchmark} and Table~\ref{tab:nonlinear_fractal_benchmarkwise} show that the CFP achieves the lowest MSE, PI-CD, and dpt-distortion across all hierarchies.
The result is not a claim of optimal IFS identification. Rather, it shows that the classification field formulation is not restricted to the matched generator. 
Learning an input-dependent residual rule can preserve recursive geometry and hierarchy-level path structure even when the underlying field comes from a different recursive mechanism.

\begin{table*}[hbtp!]
\centering
\scriptsize
\setlength{\tabcolsep}{4pt}
\renewcommand{\arraystretch}{0.95}
\caption{Out-of-family nonlinear IFS hierarchies. Metrics are averaged over held-out levels 3 to 5 and five random seeds. Values are reported in units of $10^{-3}$, with standard deviations reported where shown, and lower is better. The macro average is computed uniformly across benchmarks.}
\label{tab:nonlinear_fractal_benchmarkwise}
\scalebox{0.95}{
\begin{tabular}{lccccc}
\toprule
Method & Nonlinear Sierpi\'nski & Nonlinear Cantor Dust & Nonlinear Koch Curve & Rand. Nonlinear IFS & Macro Avg. \\
\midrule
\multicolumn{6}{l}{\textit{MSE} $\downarrow$} \\
CFP (Ours) & \textbf{0.90 $\pm$ 0.19} & \textbf{0.47 $\pm$ 0.11} & \textbf{0.70 $\pm$ 0.21} & \textbf{4.16 $\pm$ 6.95} & \textbf{1.56 $\pm$ 3.55} \\
Affine Residual Predictor & 16.46 $\pm$ 5.46 & 16.15 $\pm$ 9.28 & 9.13 $\pm$ 2.02 & 5.59 $\pm$ 6.81 & 11.84 $\pm$ 7.60 \\
Learnable Const & 75.97 $\pm$ 0.00 & 270.62 $\pm$ 0.00 & 62.93 $\pm$ 0.00 & 169.82 $\pm$ 59.54 & 144.84 $\pm$ 89.93 \\
Avg Residual & 75.98 $\pm$ 0.00 & 270.62 $\pm$ 0.00 & 62.93 $\pm$ 0.00 & 169.82 $\pm$ 59.54 & 144.84 $\pm$ 89.93 \\
\midrule
\multicolumn{6}{l}{\textit{PI-CD} $\downarrow$} \\
CFP (Ours) & \textbf{1.82 $\pm$ 0.32} & \textbf{1.00 $\pm$ 0.18} & \textbf{0.81 $\pm$ 0.16} & \textbf{4.30 $\pm$ 7.01} & \textbf{1.98 $\pm$ 3.53} \\
Affine Residual Predictor & 22.63 $\pm$ 7.65 & 19.40 $\pm$ 8.44 & 5.05 $\pm$ 1.21 & 5.84 $\pm$ 5.41 & 13.23 $\pm$ 9.95 \\
Learnable Const & 106.42 $\pm$ 0.00 & 300.61 $\pm$ 0.00 & 39.14 $\pm$ 0.00 & 162.39 $\pm$ 72.22 & 152.14 $\pm$ 104.10 \\
Avg Residual & 106.42 $\pm$ 0.00 & 300.61 $\pm$ 0.00 & 39.14 $\pm$ 0.00 & 162.39 $\pm$ 72.22 & 152.14 $\pm$ 104.10 \\
\midrule
\multicolumn{6}{l}{\textit{$d_{\mathrm{pt}}$-Distortion} $\downarrow$} \\
CFP (Ours) & \textbf{9.86 $\pm$ 1.48} & \textbf{4.70 $\pm$ 0.52} & \textbf{9.01 $\pm$ 0.53} & \textbf{11.07 $\pm$ 6.31} & \textbf{8.66 $\pm$ 3.88} \\
Affine Residual Predictor & 67.57 $\pm$ 16.28 & 47.34 $\pm$ 15.91 & 57.54 $\pm$ 15.84 & 31.59 $\pm$ 10.29 & 51.01 $\pm$ 19.25 \\
Learnable Const & 84.27 $\pm$ 0.00 & 95.92 $\pm$ 0.00 & 99.86 $\pm$ 0.00 & 125.82 $\pm$ 11.06 & 101.47 $\pm$ 16.38 \\
Avg Residual & 84.27 $\pm$ 0.00 & 95.92 $\pm$ 0.00 & 99.86 $\pm$ 0.00 & 125.82 $\pm$ 11.06 & 101.47 $\pm$ 16.38 \\
\bottomrule
\end{tabular}}
\end{table*}

\paragraph{Image-induced recursive clustering hierarchies}

Finally, we test an approximate-field setting where the reference hierarchy is
induced from real image representations rather than generated by an exact
recursive law. We extract frozen CLIP~\cite{radford2021learningtransferablevisualmodels}
embeddings from CIFAR-10 and CIFAR-100, recursively apply a deep clustering
method to obtain finite ternary centroid hierarchies, and treat these induced
hierarchies as the evaluation reference. The CFP is trained on the observed
upper levels and evaluated against the held-out level-6 centroids of the same
recursively constructed hierarchy. Thus, the ground truth in this experiment is
the deeper level produced by the recursive clustering pipeline, not the CIFAR
class labels. Implementation details are given in Appendix~\ref{A:cifar_details}. 

Note that we do not assume that CIFAR representations contain an exact or indefinitely extendable recursive hierarchy. Rather, this experiment is intended as a non-ideal stress test: the induced hierarchy may be noisy, finite-depth, and only locally self-consistent, and we ask whether useful refinement structure is still learnable in this real-data setting. Also, this setting should be interpreted differently from the CFG and IFS experiments. The target hierarchy is obtained from a finite recursive clustering procedure in representation space, rather than from an exact classification field generated by a known CFG. Consequently, the underlying refinement rule may be noisy, consistent only locally, and only approximately reusable across branches. In addition, child slots do not have a natural alignment and must be canonicalized before applying an order-sensitive loss. Our goal is not to improve image clustering performance directly, but to examine whether the induced centroid hierarchy admits a useful local approximation of recursive refinement.

\begin{table}[hbtp!]
\centering
\scriptsize
\setlength{\tabcolsep}{2.2pt}
\renewcommand{\arraystretch}{0.9}
\caption{Image-induced recursive clustering hierarchies at held-out level 6 over five random seeds. Hierarchies are built by recursively clustering frozen CLIP embeddings, and metrics are computed in the original embedding space after child-slot canonicalization. Values are reported in units of $10^{-3}$ as mean $\pm$ standard deviation, and lower is better.}
\label{tab:cifar_benchmarkwise}
\begin{tabular}{lccc}
\toprule
Method & CIFAR-10 & CIFAR-100 & Macro Avg. \\
\midrule
\multicolumn{4}{l}{\textit{MSE} $\downarrow$} \\
CFP (Ours) & \textbf{2.55 $\pm$ 0.16} & \textbf{3.83 $\pm$ 0.08} & \textbf{3.19 $\pm$ 0.09} \\
Affine Residual Predictor & 6.10 $\pm$ 0.13 & 4.99 $\pm$ 0.10 & 5.55 $\pm$ 0.09 \\
Learnable Const & 6.13 $\pm$ 0.25 & 4.98 $\pm$ 0.23 & 5.56 $\pm$ 0.17 \\
Avg Residual & 6.14 $\pm$ 0.23 & 4.99 $\pm$ 0.22 & 5.56 $\pm$ 0.16 \\
\midrule
\multicolumn{4}{l}{\textit{PI-CD} $\downarrow$} \\
CFP (Ours) & \textbf{822.47 $\pm$ 26.94} & \textbf{1104.90 $\pm$ 24.83} & \textbf{963.69 $\pm$ 13.45} \\
Affine Residual Predictor & 3729.85 $\pm$ 101.96 & 2920.46 $\pm$ 65.18 & 3325.16 $\pm$ 73.01 \\
Learnable Const & 3740.01 $\pm$ 70.42 & 2956.31 $\pm$ 80.87 & 3348.16 $\pm$ 67.59 \\
Avg Residual & 3770.91 $\pm$ 64.72 & 2972.16 $\pm$ 80.16 & 3371.54 $\pm$ 63.61 \\
\midrule
\multicolumn{4}{l}{\textit{$d_{\mathrm{pt}}$-Distortion} $\downarrow$} \\
CFP (Ours) & \textbf{22.68 $\pm$ 0.97} & \textbf{30.15 $\pm$ 1.98} & \textbf{26.41 $\pm$ 0.70} \\
Affine Residual Predictor & 478.95 $\pm$ 32.53 & 410.07 $\pm$ 35.60 & 444.51 $\pm$ 20.79 \\
Learnable Const & 522.23 $\pm$ 29.63 & 500.72 $\pm$ 33.19 & 511.47 $\pm$ 21.11 \\
Avg Residual & 525.55 $\pm$ 28.69 & 501.54 $\pm$ 32.42 & 513.54 $\pm$ 20.60 \\
\bottomrule
\end{tabular}
\end{table}

Table~\ref{tab:cifar_benchmarkwise} shows that the CFP obtains lower errors than the diagnostic baselines on both CIFAR-10 and CIFAR-100 under all three metrics.
The gains are especially large for PI-CD and $d_{\mathrm{pt}}$-distortion, suggesting that the advantage is not merely slotwise coordinate prediction but preservation of the large-scale branching geometry induced by recursive clustering.

Figure~\ref{fig:cifar_hierarchy} provides qualitative examples: CIFAR-10 exhibits a relatively stable branching pattern, while CIFAR-100 is more diffuse and less uniform across branches. Even in this less regular setting, the predicted rollout preserves the dominant hierarchical geometry.

\begin{figure}[htbp]
    \centering
    \begin{subfigure}[t]{0.45\linewidth}
        \centering
        \includegraphics[width=\linewidth]{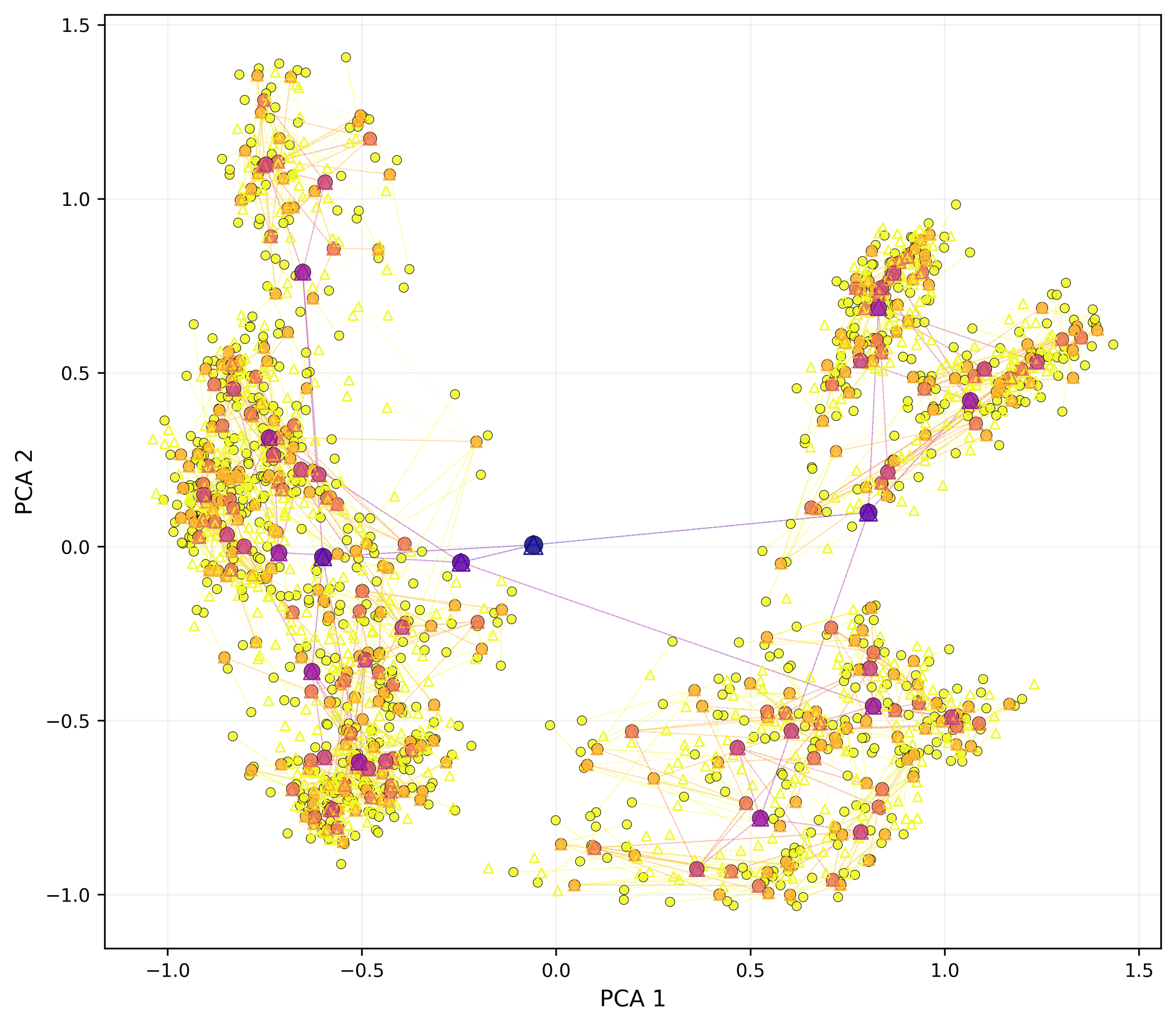}
        \caption{CIFAR-10 hierarchy}
        \label{fig:cifar10_scp_hierarchy}
    \end{subfigure}
    \hfill
    \begin{subfigure}[t]{0.45\linewidth}
        \centering
        \includegraphics[width=\linewidth]{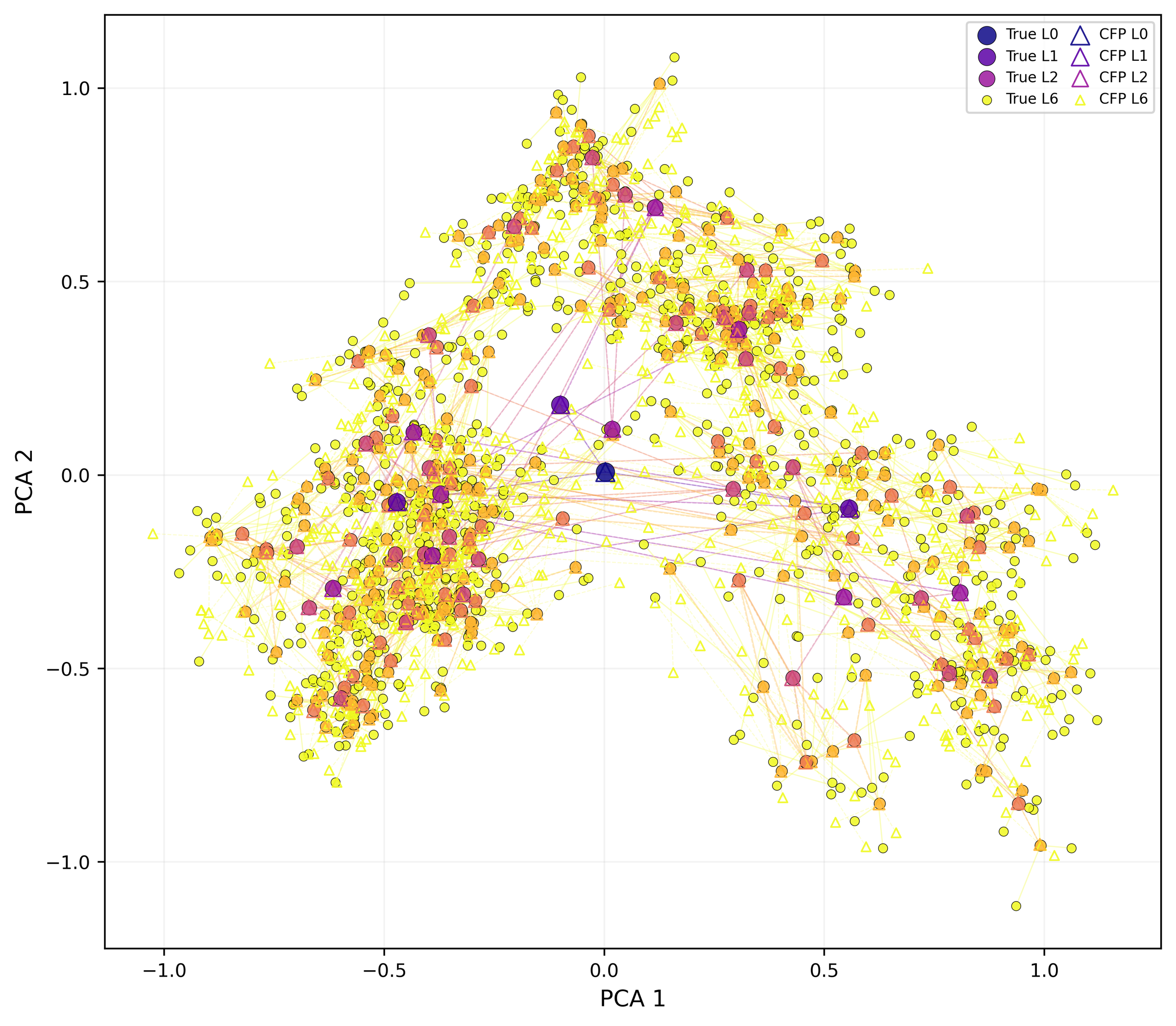}
        \caption{CIFAR-100 hierarchy}
        \label{fig:cifar100_scp_hierarchy}
    \end{subfigure}
    \caption{CIFAR-induced approximate classification fields. CLIP-space centroids are visualized after PCA projection, while all metrics are computed in the original embedding space. CIFAR-10 exhibits a more stable recursive branching pattern, whereas CIFAR-100 is more diffuse across branches. The plots qualitatively illustrate the predicted rollout geometry at the held-out level.}
    \label{fig:cifar_hierarchy}
\end{figure}

Overall, the three experimental settings support the classification field
viewpoint. The matched CFG experiments confirm the mechanism in the setting
closest to the theory, where the target hierarchy is generated by a reusable
local refinement rule. The nonlinear IFS experiments test a stricter form of
model mismatch. The image-induced experiments further stress the framework in an approximate
real-data setting. 
Across these regimes, the CFP improves not only in slotwise prediction error, but also in unordered
geometric matching and hierarchy-level path distortion. These results suggest
that finite hierarchy prefixes can contain learnable refinement mechanisms that
extend to deeper levels, provided the underlying hierarchy has sufficient local
regularity. For limitations see Appendix~\ref{A:limit}.

\section{Conclusion}
We introduced classification fields as a metric-geometric model for recursive
hierarchical clustering beyond finite dendrograms.
The key shift is from learning a hierarchy over finite observed samples to learning a local
parent-to-child refinement rule whose rollout defines progressively finer
cluster structure. 
Our theory formalizes the resulting infinite object as a
recursively generated metric DAG and shows that finite truncations converge to
the completed cell-metric field, while the finite refinement windows needed for
such truncations can be realized by ReLU CFPs. 
Across matched CFGs, nonlinear IFS hierarchies, and image-induced recursive clustering, CFP rollouts preserve slot correspondence, unordered geometry, and hierarchy-level path metrics.
These results support classification fields as a complementary view of hierarchical clustering, while leaving noisy, variable-branching, and weakly aligned hierarchies as important directions for future work.

\section{Acknowledgements and Funding}
\label{s:AckFund}
A.\ Kratsios, Y.\ Li, and R.\ Hong acknowledges financial support from an NSERC Discovery Grant No.\ RGPIN-2023-04482 and No.\ DGECR-2023-00230.  
P.\ McNicholas and Y.\ Li have been funded by the NSERC Discovery Grant No.\ RGPIN-2023-06030, the Canada Research Chairs program No.\ num-
ber CRC-2022-00494, and a Dorothy Killam Fellowship. 
The authors acknowledge that resources used in preparing this research were provided, in part, by the Province of Ontario, the Government of Canada through CIFAR, and companies sponsoring the Vector Institute\footnote{\href{https://vectorinstitute.ai/partnerships/current-partners/}{https://vectorinstitute.ai/partnerships/current-partners/}}.

\bibliographystyle{plainnat}
\bibliography{references}

\appendix
\clearpage
\section{Preliminaries}
\label{s:Prelim}

In this appendix, we review the mathematical background used in the paper.

\subsection{Notation}
\label{s:Prelim__ss:Notation}
Let $d,D\in \mathbb{N}_+$, for any $d\times D$ matrix $A$ and any $1\le p,q\le \infty$ we write $\|A\|_{p:q}\eqdef \|(\|A_{i:}\|_p)_{i=1}^d\|_q$ where $A_{i:}$ denotes the $i^{th}$ row of $A$.  
For any radius $r\ge 0$ any centre $x\in \mathbb{R}^d$ we define the \textit{closed} $\ell^p$-ball of dimension $d$ to be 
$
    \mathbb{B}_p^d(x,r) \eqdef \{z\in \mathbb{R}^d:\|z-x\|_p\le r\}
$.
For any vector $x\in \mathbb{R}^d$ we write 
$\operatorname{diag}(x)\eqdef (x_i I_{i=j})_{i,j=1}^d$.

\subsection{Background}
\label{s:Prelim__ss:Background}

Next, we discuss the background, with a particular focus on graphs and metric geometry.

\subsubsection{Weighted Graphs}
\label{s:Prelim__ss:Background___sss:WeightedGraphs}

In this paper, we consider connected \textit{weighted directed graphs}.  A weighted directed graph is a triple $D=(E,V,W)$ consisting of a (possibly infinite) non-empty set $V\subseteq \mathbb{N}$ of vertices, a set of directed edges $E\subseteq \{(v_1,v_2)\in V^2:\, v_1\neq v_2\}$, and a weight map $W:E\to (0,\infty)$.  
A \textit{directed path} is a finite sequence $(v_n)_{n=1}^N\subseteq V$ satisfying $(v_n,v_{n+1})\in E$ for each $n=1,\dots,N-1$.  
The graph $D=(E,V,W)$ is called a \textit{directed acyclic graph} (DAG) if $E$ has no directed cycles; i.e.\ there is no directed path $(v_n)_{n=1}^N$ with $N\geq 2$ such that $v_1=v_N$.  
A parent of a vertex $v\in V$ is a vertex $w\in V$ for which $(w,v)\in E$, in which case we say that $v$ is a \textit{child} of $w$.  
The set of parents of a vertex $v$ is denoted by $\pa{v}\eqdef \{w\in V:\, (w,v)\in E\}$ and the set of children of $v$ is denoted by $\ch{v}\eqdef \{w\in V:\, (v,w)\in E\}$.  
We say that $D=(E,V,W)$ is \textit{connected} if its underlying undirected graph is connected; i.e.\ for every $u,v\in V$ there exists a finite sequence $(v_n)_{n=1}^N\subseteq V$ with $v_1=u$ and $v_N=v$ such that, for every $n=1,\dots,N-1$, either $(v_n,v_{n+1})\in E$ or $(v_{n+1},v_n)\in E$.

A connected weighted DAG $D=(E,V,W)$ is called a \textit{rooted $K$-ary tree} if there exists a distinguished vertex $v_{\star}\in V$, called the \textit{root}, such that $\pa{v_{\star}}=\emptyset$, every vertex $v\in V\setminus\{v_{\star}\}$ has exactly one parent,and every non-leaf vertex has exactly \(K\) children; i.e.\ \(|\pa{v}|=1\) for all \(v\neq v_{\star}\), and \(|\ch{v}|\in\{0,K\}\) for all \(v\in V\).

\subsubsection{Metric Graphs}
\label{s:Prelim__ss:Background___sss:MetGraphs}

Let
$
    G=(E,V,W)
$
be a connected weighted graph with positive edge weights, that is,
$
    W:E\to (0,\infty)
$
and whose underlying graph $(V,E)$ is connected. Then the associated shortest-path metric
$
    d_G:V\times V\to [0,\infty)
$
is defined by
\[
    d_G(u,v)
    \eqdef
    \inf
    \Bigg\{
        \sum_{i=1}^n
        W\big(\{u_{i-1},u_i\}\big)
        \,:\,
        n\in\mathbb{N},
        \ u_0=u,\ u_n=v,\ 
        \{u_{i-1},u_i\}\in E\ \forall i\in [n]_+
    \Bigg\}
.
\]
That is, $d_G(u,v)$ is the minimum total edge-weight among all finite paths in $G$ joining $u$ and $v$.

\subsubsection{Hausdorff Distance and Voronoi Cells}
\label{s:Prelim__ss:Background___sss:VoronoiCells}

When comparing (non-empty compact) clusters, we will always make reference to the Hausdorff distance on ``hyperspace'' of closed-subsets of a metric space.  See~\cite{alma992483914405151} for details.

\begin{definition}[Hausdorff Metric]
\label{defn:hausdorff_metric}
Let $(K,\rho)$ be a compact metric space.  The Hausdorff \textit{hyperspace} thereon has pointset
\[
\mathbb{H}(K,\rho)
\eqdef
\left\{
C\subseteq K:\ C\neq \emptyset,\ C\ \text{is compact}
\right\}
\]
and is metrized by the \textit{Hausdorff metric} defined for any $A,B\in \mathbb{H}(K,\rho)$ by
\[
d_{\mathbb{H}}(A,B)
\eqdef
\max\left\{
\sup_{a\in A}\inf_{b\in B}\rho(a,b),
\,
\sup_{b\in B}\inf_{a\in A}\rho(a,b)
\right\}.
\]
\end{definition}
The Hausdorff metric will allow us to compare clusters, quantitatively.  
The following result will help us compare \textit{Voronoi} cells, which will form clusters at a given level in our Hierarchical clustering pipeline, and their centres; as well as with the space which they are partitioning.
\begin{proposition}[Comparison: Voronoi Cells and Their Centres]
\label{prop:treebuilder_bounds}
Let $(K,\rho)$ be a compact metric space, and let $(\mathbb{H}(K,\rho),d_{\mathbb{H}})$ denote its Hausdorff hyperspace.  Fix $N\in \mathbb{N}_+$ and let $x_1,\dots,x_N\in K$ be pairwise distinct.  For each $n\in [N]_+$, define the associated Voronoi cell by
\[
C_n \eqdef \left\{x\in K:\ \rho(x,x_n)=\min_{i\in [N]_+}\rho(x,x_i)\right\}.
\]
Then, for every $n,m\in [N]_+$,
\begin{equation}
\label{eq:comparison_inequality}
    \frac{1}{2}\,\rho(x_n,x_m)
\le 
    d_{\mathbb{H}}(C_n,C_m)
\le 
    \operatorname{diam}(K,\rho)
.
\end{equation}
\end{proposition}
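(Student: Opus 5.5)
Note first that the pairwise-distinctness hypothesis is irrelevant to the argument, except that it makes the cells well defined. Every $C_n$ is non-empty because $x_n\in C_n$. Every $C_n$ is closed, hence compact, because $x\mapsto \rho(x,x_n)-\min_i\rho(x,x_i)$ is continuous and $C_n$ is the set where this continuous function is $\le 0$. So $C_n,C_m\in\mathbb{H}(K,\rho)$, and $d_{\mathbb{H}}(C_n,C_m)$ is defined and finite. If $n=m$, both sides of the lower bound are $0$, so we may assume $n\neq m$.

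The upper bound is immediate. Each term $\rho(a,b)$ with $a,b\in K$ is at most $\operatorname{diam}(K,\rho)$. Taking $\inf$ and then $\sup$ preserves this, so both one-sided excess terms, and hence their maximum, are bounded by the diameter.

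For the lower bound, we test the one-sided excess at the centre itself. Take $a=x_n\in C_n$ and let $b\in C_m$ be arbitrary. By the definition of $C_m$ we have $\rho(b,x_m)\le \rho(b,x_n)$. The triangle inequality then gives
\[
\rho(x_n,x_m)\le \rho(x_n,b)+\rho(b,x_m)\le 2\rho(x_n,b).
\]
Hence $\inf_{b\in C_m}\rho(x_n,b)\ge \tfrac12\rho(x_n,x_m)$. This quantity is bounded above by $\sup_{a\in C_n}\inf_{b\in C_m}\rho(a,b)\le d_{\mathbb{H}}(C_n,C_m)$, which gives the claim.

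There is no real obstacle here. The only point requiring care is to use the Voronoi inequality on the correct side: for $b\in C_m$, $b$ is at least as close to $x_m$ as to $x_n$. It is this that turns the triangle inequality into the factor $\tfrac12$.
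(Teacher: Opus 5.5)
Your proof is correct and follows essentially the same route as the paper's: the upper bound because both cells lie in $K$, and the lower bound by testing the one-sided excess at $x_n\in C_n$ and using $\rho(b,x_m)\le\rho(b,x_n)$ for $b\in C_m$ together with the triangle inequality. Your added checks that each $C_n$ is non-empty and compact, so that $d_{\mathbb{H}}(C_n,C_m)$ is defined, are a small but welcome detail that the paper leaves implicit.
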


\begin{proof}
Since $x_n\in C_n$ for every $n\in [N]_+$, it suffices to prove the lower bound.  Fix $n,m\in [N]_+$ and let $z\in C_m$.  Then
$
\rho(z,x_m)\le \rho(z,x_n)
$. 
By the triangle inequality,
$
\rho(x_n,x_m)\le \rho(x_n,z)+\rho(z,x_m)\le 2\,\rho(x_n,z)
$.  
Therefore,
$
\rho(x_n,z)\ge \frac{1}{2}\,\rho(x_n,x_m)
$, for all $z\in C_m$.
Taking the infimum over $z\in C_m$ yields
$
\operatorname{dist}(x_n,C_m)\ge \frac{1}{2}\,\rho(x_n,x_m)
$.
Since $x_n\in C_n$, we obtain
$
d_{\mathbb{H}}(C_n,C_m)\ge \operatorname{dist}(x_n,C_m)\ge \frac{1}{2}\,\rho(x_n,x_m)
$.
The upper bound is immediate since $C_n,C_m\subseteq K$.
\end{proof}
Although the upper-bound in~\eqref{eq:comparison_inequality} seems trivial, it will be instrumental in allowing us to unambiguously define a distance between infinite-depth hierarchies.

\subsubsection{Gromov-Hausdorff Distance}
\label{s:Prelim__ss:Background___sss:GromovHausdorff}

It will often be convenient for us to compare metric graphs through their
associated compact metric spaces~\citep{borde2023neural}. To this end, we recall the usual
Gromov--Hausdorff space of compact metric spaces. Namely, let
\[
\mathbb{GH}
\eqdef
\left\{
(K,\rho):\ (K,\rho)\ \text{is a compact metric space}
\right\}\big/\cong
\]
denote the collection of compact metric spaces modulo isometry.
For any two compact metric spaces \((K,\rho_K)\) and \((L,\rho_L)\), their
\textit{Gromov--Hausdorff distance} is defined by
\[
d_{\operatorname{GH}}\big((K,\rho_K),(L,\rho_L)\big)
\eqdef
\inf_{(Z,\rho_Z),\,\varphi,\,\psi}
d_{\mathbb{H}}\big(\varphi(K),\psi(L)\big),
\]
where the infimum is taken over all metric spaces \((Z,\rho_Z)\) and all
isometric embeddings
\[
\varphi:(K,\rho_K)\hookrightarrow (Z,\rho_Z),
\qquad
\psi:(L,\rho_L)\hookrightarrow (Z,\rho_Z),
\]
and where \(d_{\mathbb{H}}\) denotes the Hausdorff metric on
\(\mathbb{H}(Z,\rho_Z)\) as in Definition~\ref{defn:hausdorff_metric}.
The metric space \((\mathbb{GH},d_{\operatorname{GH}})\) is called the
\textit{Gromov--Hausdorff metric space}.

\subsection{ReLU Multilayer Perceptrons}
\label{s:Prelim__ss:Background___sss:MLPs}

We now define multilayer perceptrons.   
\begin{definition}[Multilayer Perceptrons with ReLU Activation Function (ReLU MLPs)]
\label{defn:ReLUMLP}
Let $\Delta \in \mathbb{N}_+$ and consider a multi-index $\mathbf{d}\eqdef [d_1,\dots,d_{\Delta+1}]\in \mathbb{N}_+^d$.  
A ReLU MLP is a function $\Phi$ which admits the following iterative representation
\begin{equation}
\label{eq:MLPRepresentation}
    \begin{aligned}
        \Phi(\mathbf{x}) & = \mathbf{W}^{(\Delta)} \mathbf{x}^{(\Delta)}+\mathbf{b}^{(\Delta)}\\
     \mathbf{x}^{(l+1)}& \eqdef \operatorname{ReLU} \bullet\big(\mathbf{W}^{(l)}\,\mathbf{x}^{(l)} + \mathbf{b}^{(l)}\big) 
        \qquad
        \mbox{ for } l=1,\dots,\Delta-1
    \\
    \mathbf{x}^{(1)} & \eqdef  \mathbf{x}.
    \end{aligned}
\end{equation}
where for $l=1,\dots,\Delta$, $\mathbf{W}^{(l)}$ is a $d_{l+1}\times d_l$-matrix and $\mathbf{b}^{(l)}\in \mathbb{R}^{d_{l+1}}$, and $\operatorname{ReLU}\bullet$ denotes componentwise application of the $\operatorname{ReLU}$ function, where $\operatorname{ReLU}\eqdef\max\{0,\cdot\}$.
\end{definition}

\subsection{Additional Background}
\label{s:add_back}
\paragraph{(1) Combinatorial: Tree edge weights.}
The first weighting scheme is purely combinatorial, and does not use the ambient geometry of $\mathbb{R}^d$ nor of the hyperspace $\mathbb{H}(\mathbf{B}_p^d(x_0,\lambda),\|\cdot\|_p)$.
For every edge joining level $l-1$ to level $l$, we assign weight
\begin{equation}
\label{eq:combinatorial_weight}
    W_{\operatorname{abs}}
    \big(
        (C_{\alpha},x_{\alpha},\alpha),
        (C_{\alpha k},x_{\alpha k},\alpha k)
    \big)
    \eqdef
    s^l
\end{equation}
for every $\alpha\in [K]_+^{\,l-1}$ and every $k\in [K]_+$.
That is, the weight of an edge depends only on the level at which the child is created.
We write $(\mathcal{V}^{\operatorname{tree}},d_{\operatorname{tree}})$ for the shortest-path metric on the undirected tree $\overline{G}^{\operatorname{tree}}$ with ``abstract combinatorial'' edge weights in~\eqref{eq:combinatorial_weight}.

\paragraph{(2) Cluster-Centre Based Edge Weights.}
The third weighting scheme measures the ambient geometric distance between the distinguished points of the parent and child cells. Thus, we define
\[
    W_{\operatorname{pt}}
    \big(
        (C_{\alpha},x_{\alpha},\alpha),
        (C_{\alpha k},x_{\alpha k},\alpha k)
    \big)
    \eqdef
    \|x_{\alpha k}-x_{\alpha}\|_p
.
\]
Equivalently, since
$
x_{\alpha k}=x_{\alpha}+s^l r(x_{\alpha})_k
$,
we consider the ``base-point'' edge-weights
\begin{equation}
\label{eq:center_point_weights}
    W_{\operatorname{pt}}
    \big(
        (C_{\alpha},x_{\alpha},\alpha),
        (C_{\alpha k},x_{\alpha k},\alpha k)
    \big)
    =
    s^l\|r(x_{\alpha})_k\|_p
\end{equation}
for every $\alpha\in [K]_+^{\,l-1}$ and every $k\in [K]_+$.

We write $G_{\operatorname{pt}}\eqdef (\mathcal{V}^{\operatorname{geo}},d_{\operatorname{pt}})$ for the shortest-path metric on the weighted graph $(\mathcal{V}^{\operatorname{geo}},E^{\operatorname{geo}},W_{\operatorname{pt}})$ with ``base-point'' edge-weights in~\eqref{eq:center_point_weights}.

\section{Additional Results}
\label{s:add_results}

Here, we provide additional theoretical results.

\subsection{Regularity and separation}
\label{s:good_properties}
\begin{proposition}[Lipschitz regularity of parametrized CFGs]
\label{prop:lipschitz_parametrized_rhg}
Fix $d,K\in\mathbb{N}_+$ and constants
$
0<\sigma_{\min}\le \sigma_{\max}<\infty$, 
$0<\lambda_-\le \lambda_+<\infty$, and $0<\varepsilon$.
Consider a $d\times K$ $(\lambda_-,\lambda_+,\varepsilon)$-reference packing $C=(c_1|\cdots|c_K)$.  
Let
$
A_1,A_2:\mathbb{B}_2^d\to \mathbb{R}^{d\times d}$ and $
\sigma:\mathbb{B}_2^d\to [\sigma_{\min},\sigma_{\max}]^d
$
satisfy the following Lipschitz estimates
\begin{align}
\label{eq:lipschitz_parametrized_rhg_A1}
\|A_1(x)-A_1(\tilde{x})\|_F
&\le
L_{A_1}\|x-\tilde{x}\|_2,
\\
\label{eq:lipschitz_parametrized_rhg_A2}
\|A_2(x)-A_2(\tilde{x})\|_F
&\le
L_{A_2}\|x-\tilde{x}\|_2,
\\
\label{eq:lipschitz_parametrized_rhg_sigma}
\|\sigma(x)-\sigma(\tilde{x})\|_2
&\le
L_\sigma\|x-\tilde{x}\|_2
\end{align}
for some constants $L_{A_1},L_{A_2},L_\sigma\ge 0$ and 
for every $x,\tilde{x}\in \mathbb{B}_2^d$. Assume also that $A_1(x)$ and $A_2(x)$ are skew-symmetric for every $x\in\mathbb{B}_2^d$.
\hfill\\
\noindent
For every $x\in \mathbb{B}_2^d$ define the $d\times K$ matrix
\[
r(x)
\eqdef
e^{A_1(x)}\operatorname{diag}(\sigma(x))e^{A_2(x)}{}^\top C
.
\]
and write $r(x)=(r_1(x)|\cdots|r_K(x))$.
Then, for every $x,\tilde{x}\in \mathbb{B}_2^d$,
\begin{equation}
\label{eq:lipschitz_parametrized_rhg_conclusion}
\max_{k\in[K]_+}
\|r_k(x)-r_k(\tilde{x})\|_2
\le
L_r\,\|x-\tilde{x}\|_2,
\end{equation}
where $
    L_r
\eqdef
    \lambda_+
    \Big(
    \sigma_{\max}L_{A_1}
    +
    L_\sigma
    +
    \sigma_{\max}L_{A_2}
    \Big)
$.
\end{proposition}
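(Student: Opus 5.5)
The plan is to write $r(x)-r(\tilde x)$ as a telescoping sum of three differences, one per factor of the product
\[
r(x)=U(x)\,\Sigma(x)\,V(x)^\top C,
\qquad
U(x)\eqdef e^{A_1(x)},\quad \Sigma(x)\eqdef\operatorname{diag}(\sigma(x)),\quad V(x)\eqdef e^{A_2(x)},
\]
and to bound each difference by peeling off the factors that have operator norm at most one. We would then apply the result column by column, since $r_k(x)=U(x)\Sigma(x)V(x)^\top c_k$ and $\|c_k\|_2\le\lambda_+$.

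Concretely, we write
\[
r_k(x)-r_k(\tilde x)=\bigl(U(x)-U(\tilde x)\bigr)\Sigma(x)V(x)^\top c_k+U(\tilde x)\bigl(\Sigma(x)-\Sigma(\tilde x)\bigr)V(x)^\top c_k+U(\tilde x)\Sigma(\tilde x)\bigl(V(x)-V(\tilde x)\bigr)^\top c_k .
\]
Since $A_i$ is skew-symmetric, $e^{A_i}$ is orthogonal, so its operator norm equals $1$. We also have $\|\Sigma(\cdot)\|_{op}\le\sigma_{\max}$. Moreover,
\[
\|\Sigma(x)-\Sigma(\tilde x)\|_{op}=\|\sigma(x)-\sigma(\tilde x)\|_\infty\le\|\sigma(x)-\sigma(\tilde x)\|_2\le L_\sigma\|x-\tilde x\|_2 .
\]
With these facts, the three terms are bounded by
\[
\lambda_+\sigma_{\max}\|U(x)-U(\tilde x)\|_{op},\qquad
\lambda_+L_\sigma\|x-\tilde x\|_2,\qquad
\lambda_+\sigma_{\max}\|V(x)-V(\tilde x)\|_{op},
\]
respectively.

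The one nontrivial ingredient, and the step we expect to be the main obstacle, is the Lipschitz estimate for the matrix exponential restricted to skew-symmetric matrices:
\[
\|e^{A}-e^{B}\|_{op}\le\|A-B\|_{op}\le\|A-B\|_F
\qquad\text{for skew-symmetric }A,B.
\]
We would prove it with the Duhamel/variation-of-constants formula
\[
e^{A}-e^{B}=\int_0^1 e^{tA}(A-B)e^{(1-t)B}\,dt .
\]
One checks this by differentiating $t\mapsto e^{tA}e^{(1-t)B}$. Because $e^{tA}$ and $e^{(1-t)B}$ are orthogonal for every $t$, the integrand has operator norm at most $\|A-B\|_{op}$, and integrating over $[0,1]$ gives the claim. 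Only the skew-symmetry, and hence the unit norm of all the exponential factors, is used here; without it one would instead pick up a factor like $e^{\max(\|A\|,\|B\|)}$.

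Combining the pieces and using the hypotheses~\eqref{eq:lipschitz_parametrized_rhg_A1}--\eqref{eq:lipschitz_parametrized_rhg_sigma} gives
\[
\|r_k(x)-r_k(\tilde x)\|_2\le\lambda_+\bigl(\sigma_{\max}L_{A_1}+L_\sigma+\sigma_{\max}L_{A_2}\bigr)\|x-\tilde x\|_2
\]
uniformly in $k$. Taking the maximum over $k\in[K]_+$ yields~\eqref{eq:lipschitz_parametrized_rhg_conclusion}. The remaining steps are just the elementary inequalities $\|M\|_{op}\le\|M\|_F$ and $\|M^\top\|_{op}=\|M\|_{op}$, applied to the transposed factor $V^\top$.
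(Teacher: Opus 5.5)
Your proposal is correct and follows the paper's proof essentially step for step. It uses the same three-term telescoping of $e^{A_1}\operatorname{diag}(\sigma)e^{A_2}{}^\top c_k$, the same Duhamel identity combined with orthogonality to get $\|e^{A}-e^{B}\|_{\operatorname{op}}\le\|A-B\|_{\operatorname{op}}\le\|A-B\|_F$ (the paper's $\int_0^1 e^{(1-u)S}(S-T)e^{uT}\,du$ becomes yours under $t=1-u$), and the same bound $\|\Sigma(x)-\Sigma(\tilde x)\|_{\operatorname{op}}=\|\sigma(x)-\sigma(\tilde x)\|_\infty\le L_\sigma\|x-\tilde x\|_2$ on the middle factor.
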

\begin{proof}
See Appendix~\ref{s:orth_diag_param_sep_annuli}.
\end{proof}

The key point of the structure of $r$ is that, under the above regularity we may guarantee the separation between cluster centres in the same level.  Let us defined this separation as follows: fix some $x_0\in \mathbb{B}_2^d$
and define recursively
$
x_{\varnothing}\eqdef x_0,
$ and $
x_{\alpha k}
\eqdef
x_\alpha+s^{|\alpha|+1}r_k(x_\alpha)
$
for every address $\alpha\in [K]_+^{\,|\alpha|}$ and every $k\in[K]_+$.
For each $l\in\mathbb{N}$, let
$
\mathcal{X}_l
\eqdef
\{x_\alpha:\ \alpha\in [K]_+^l\},
$
and define the minimum \textbf{intra-level-$l$ separation}, for every $l\in \mathbb{N}_+$, by
\begin{equation}
\label{eq:delta_l__separation}
    \delta_l
\eqdef
    \min_{\substack{x,\tilde{x}\in \mathcal{X}_l\\ x\neq \tilde{x}}}
    \|x-\tilde{x}\|_2
.
\end{equation}


The mild regularity conditions on the CFG in Proposition~\ref{prop:param_uniformly_separated_annular_configs} are enough to guarantee intra-level separation.
\begin{proposition}[{$\Theta(s^l)$-Lower-Bound on Intra-level $l$ Separation}]
\label{prop:key_separation}
If $r$ is a CFG as constructed in Proposition~\ref{prop:param_uniformly_separated_annular_configs} and $A_1$, $A_2$, $\sigma$ satisfy the hypotheses of Proposition~\ref{prop:lipschitz_parametrized_rhg} then for any $l\in \mathbb{N}_+$
\begin{equation}
\label{eq:recursive_delta_lower_bound_parametrized}
\delta_{l+1}
\ge
\max\left\{
0,\,
\min\left\{
s^{l+1}\sigma_{\min}\varepsilon,
\,
\left(
1-s^{l+1}\lambda_+
\big(
\sigma_{\max}L_{A_1}+L_\sigma+\sigma_{\max}L_{A_2}
\big)
\right)\delta_l
-
2s^{l+1}\sigma_{\max}\lambda_+
\right\}
\right\}
\end{equation}
where $
\lambda_{\max}^{\operatorname{(lemma)}}=\sigma_{\max}\lambda_+
$, 
$s_{\operatorname{sep}}^{\operatorname{(lemma)}}=\sigma_{\min}\varepsilon$, and
$
L_r=\lambda_+
\Big(
\sigma_{\max}L_{A_1}
+
L_\sigma
+
\sigma_{\max}L_{A_2}
\Big)$.
\end{proposition}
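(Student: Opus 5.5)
Fix two distinct points of $\mathcal X_{l+1}$, say $x_{\alpha k}$ and $x_{\beta \tilde k}$ with $\alpha,\beta\in[K]_+^l$, and split into two cases: same parent ($x_\alpha=x_\beta$) and different parents ($x_\alpha\neq x_\beta$). The target bound is the minimum of the bounds from the two cases. The outer $\max\{0,\cdot\}$ is automatic, since $\delta_{l+1}\ge 0$. We also assume, as the recursion implicitly requires, that all centres stay in $\mathbb B_2^d$, so that Propositions~\ref{prop:param_uniformly_separated_annular_configs} and~\ref{prop:lipschitz_parametrized_rhg} apply at every centre.

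\emph{Same parent.} Here $x_{\alpha k}-x_{\beta\tilde k}=s^{l+1}\bigl(r_k(x_\alpha)-r_{\tilde k}(x_\alpha)\bigr)$. If $k\neq\tilde k$, the separation property from Proposition~\ref{prop:param_uniformly_separated_annular_configs} gives
\[
\|x_{\alpha k}-x_{\beta\tilde k}\|_2\ge s^{l+1}\sigma_{\min}\varepsilon .
\]
If $k=\tilde k$, the two points coincide and are excluded from the minimum in~\eqref{eq:delta_l__separation}. This yields the first entry of the $\min$.

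\emph{Different parents.} Write
\[
x_{\alpha k}-x_{\beta\tilde k}=(x_\alpha-x_\beta)+s^{l+1}\bigl(r_k(x_\alpha)-r_k(x_\beta)\bigr)+s^{l+1}\bigl(r_k(x_\beta)-r_{\tilde k}(x_\beta)\bigr).
\]
Apply the reverse triangle inequality, then bound the middle term by Proposition~\ref{prop:lipschitz_parametrized_rhg}: it is at most $s^{l+1}L_r\|x_\alpha-x_\beta\|_2$. The last term is at most $s^{l+1}\cdot 2\sigma_{\max}\lambda_+$, because each residual has norm at most $\lambda_{\max}=\sigma_{\max}\lambda_+$. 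This gives
\[
\|x_{\alpha k}-x_{\beta\tilde k}\|_2\ge (1-s^{l+1}L_r)\|x_\alpha-x_\beta\|_2-2s^{l+1}\sigma_{\max}\lambda_+ .
\]

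The main subtlety is then replacing $\|x_\alpha-x_\beta\|_2$ by $\delta_l$, which needs $1-s^{l+1}L_r\ge 0$. If instead $1-s^{l+1}L_r<0$, the whole second expression is negative, and the bound $\delta_{l+1}\ge 0$ from the outer $\max$ already covers it. When $1-s^{l+1}L_r\ge0$, we use $\|x_\alpha-x_\beta\|_2\ge\delta_l$ for distinct level-$l$ centres. Taking the minimum over both cases gives~\eqref{eq:recursive_delta_lower_bound_parametrized}.
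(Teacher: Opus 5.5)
Your proposal is correct and is essentially the paper's argument. The paper proves the statement by substituting $\lambda_{\max}=\sigma_{\max}\lambda_+$, $s_{\operatorname{sep}}=\sigma_{\min}\varepsilon$, and $L_r$ from Propositions~\ref{prop:param_uniformly_separated_annular_configs} and~\ref{prop:lipschitz_parametrized_rhg} into Lemma~\ref{lem:recursive_lower_bound_deltaL}, whose proof is exactly your same-parent/different-parent split with the reverse triangle inequality; your explicit handling of the case $1-s^{l+1}L_r<0$ (covered by the outer $\max\{0,\cdot\}$) and of the assumption that all centres stay in $\mathbb{B}_2^d$ are points of care the paper's lemma passes over silently.
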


\paragraph{Benefits of The Tree Metric}
The first main motivation of the tree metric is that it yields a computationally simple distance function, completely available in closed-form.

\begin{proposition}[Closed-Form Expression for the Tree Metric]
\label{prop:closed_form_tree_metric}
Let
$
u=(C_{\alpha},x_{\alpha},\alpha)
$
and
$
v=(C_{\beta},x_{\beta},\beta)
$
be vertices in $\mathcal{V}^{\operatorname{tree}}$, with
$
|\alpha|=m
$
and
$
|\beta|=n
$.
Write $\alpha\wedge\beta$ for the longest common prefix%
\footnote{A prefix of a word $\alpha=(\alpha_1,\dots,\alpha_m)\in [K]_+^m$ is any initial segment of the form $(\alpha_1,\dots,\alpha_j)$ for some $j\in\{0,\dots,m\}$, where $j=0$ corresponds to the empty word $\varnothing$.}~%
of $\alpha$ and $\beta$, and let
$
c\eqdef |\alpha\wedge\beta|
$.
Then
\begin{equation}
\label{eq:closed_form_tree_metric}
        d_{\operatorname{tree}}(u,v)
    =
        \frac{2s^{c+1}-s^{m+1}-s^{n+1}}{1-s}
.
\end{equation}
\end{proposition}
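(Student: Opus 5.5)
We prove the closed-form expression \eqref{eq:closed_form_tree_metric} by reducing the shortest-path distance in the undirected tree $\overline{G}^{\operatorname{tree}}$ to a sum of edge weights along the unique geodesic. Since $G^{\operatorname{tree}}$ is a rooted tree (each non-root vertex has exactly one parent, namely the vertex indexed by its address with the last letter deleted), any two vertices are joined by a unique simple path. Every other finite path between them traverses each edge of this simple path at least once, and all weights are positive. Hence $d_{\operatorname{tree}}(u,v)$ equals the total weight of the unique simple path.

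We then identify that path explicitly. Let $\gamma\eqdef\alpha\wedge\beta$, with $|\gamma|=c$. Note that $\gamma$ is the lowest common ancestor of $\alpha$ and $\beta$: the ancestors of a word are exactly its prefixes, and $\gamma$ is the longest common one. The simple path therefore runs upward from $\alpha$ through its prefixes of lengths $m,m-1,\dots,c$, and then downward to $\beta$ through the prefixes of $\beta$ of lengths $c,c+1,\dots,n$. It is simple because the two branches share only $\gamma$: the letters $\alpha_{c+1}$ and $\beta_{c+1}$ differ whenever both exist, by maximality of $c$. The degenerate cases where $\alpha$ is a prefix of $\beta$ (or vice versa) are included, with one branch empty.

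Next we sum the weights. By \eqref{eq:combinatorial_weight}, the edge between a prefix of length $l-1$ and one of length $l$ has weight $s^l$. The upward branch therefore contributes $\sum_{l=c+1}^{m}s^l=\frac{s^{c+1}-s^{m+1}}{1-s}$ by the finite geometric sum, and the downward branch contributes $\frac{s^{c+1}-s^{n+1}}{1-s}$. Adding the two gives \eqref{eq:closed_form_tree_metric}. An empty sum correctly gives $0$ when $m=c$ or $n=c$.

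There is no real obstacle here. The only point needing care is the justification that the shortest path is the tree geodesic, that is, that the infimum in the definition of $d_G$ is attained by the simple path. This follows from uniqueness of simple paths in trees together with the observation that any walk from $u$ to $v$ must cross every edge separating them. We also note that the metric is taken on $\mathcal{V}^{\operatorname{tree}}$, not on the quotient, so no geometric identifications interfere.
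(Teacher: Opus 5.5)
Your proposal is correct and follows the same route as the paper. Both arguments take the unique simple path in $\overline{G}^{\operatorname{tree}}$ through the longest common prefix $\alpha\wedge\beta$ and evaluate the two geometric sums $\sum_{j=c+1}^{m}s^j+\sum_{j=c+1}^{n}s^j$; you also justify that the shortest-path infimum is attained by this path (every separating edge must be crossed), a point the paper leaves implicit.
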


\begin{proof}[{Proof of Proposition~\ref{prop:closed_form_tree_metric}}]
Since $\overline{G}^{\operatorname{tree}}$ is a tree, there is a unique simple path joining $u$ and $v$. This path runs from $u$ up to the vertex indexed by the longest common prefix $\alpha\wedge\beta$, and then from $\alpha\wedge\beta$ down to $v$. Hence its total weight is
\[
    d_{\operatorname{tree}}(u,v)
    =
    \sum_{j=c+1}^{m}s^j
    +
    \sum_{j=c+1}^{n}s^j.
\]
Evaluating these geometric sums yields~\eqref{eq:closed_form_tree_metric}.
\end{proof}
However, the tree metric can be much larger than the other two metrics; cf.\ Proposition~\ref{prop:tree_vs_geo_metrics} for details.

Nevertheless, its size is not exorbitantly large in comparison and its approximate error is largely controlled by the scale parameter.
Since the tree metric can, and will often, have a larger vertex set, we will often have to pass to its quotient; which we formalize via the quotient map
$
    \pi:\mathcal{V}^{\operatorname{tree}}\to \mathcal{V}^{\operatorname{geo}}
$
denote the canonical quotient map. 
Having defined $\pi$, we now make comparisons between these three metrics; we use the following ratio
\begin{equation}
\label{eq:funny_constant}
    \kappa_{p,2}(d)
\eqdef
    \sup_{z\neq 0}\frac{\|z\|_p}{\|z\|_2}
    =
    d^{\max\{\frac1p-\frac12,0\}}
.
\end{equation}
\begin{proposition}[Domination by Tree Metric]
\label{prop:tree_vs_geo_metrics}
For every $u,v\in \mathcal{V}^{\operatorname{tree}}$, we have
\begin{equation}
\label{eq:cell_vs_tree_metric}
    d_{\operatorname{cell}}\big(\pi(u),\pi(v)\big)
    \le
    \frac{2\lambda_{\max}^2}{s}\,
    d_{\operatorname{tree}}(u,v)
    \mbox{ and }
    d_{\operatorname{pt}}\big(\pi(u),\pi(v)\big)
    \le
    \kappa_{p,2}(d)\,\lambda_{\max}\,
    d_{\operatorname{tree}}(u,v).
\end{equation}
If $p\ge 2$ and $
    \kappa_{p,2}(d)\,\lambda_{\max}\le 1
$
then
$
    d_{\operatorname{pt}}\big(\pi(u),\pi(v)\big)
    \le
    d_{\operatorname{tree}}(u,v)
$ for each $ u,v\in \mathcal{V}^{\operatorname{tree}}$.
\end{proposition}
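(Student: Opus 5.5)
The plan is an edge-by-edge comparison followed by summation along the unique tree geodesic. Fix $u,v\in\mathcal V^{\operatorname{tree}}$. As in the proof of Proposition~\ref{prop:closed_form_tree_metric}, $d_{\operatorname{tree}}(u,v)$ is realized by the unique simple path $u=u_0,u_1,\dots,u_n=v$ in $\overline G^{\operatorname{tree}}$. Each consecutive pair is a parent--child edge $\big((C_\alpha,x_\alpha,\alpha),(C_{\alpha k},x_{\alpha k},\alpha k)\big)$ with $|\alpha|=l-1$, and its tree weight is $s^l$. By definition of $E^{\operatorname{geo}}$, the images $\pi(u_0),\dots,\pi(u_n)$ form a path in $(\mathcal V^{\operatorname{geo}},E^{\operatorname{geo}})$, possibly with repeated vertices, which only shortens it. The shortest-path metrics $d_{\operatorname{cell}}$ and $d_{\operatorname{pt}}$ are infima over paths, so it suffices to bound the weight of each image edge by the stated constant times $s^l$ and then sum.

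\emph{Cell weights.} For an edge created at level $l$, the cells are nested: by \eqref{eq:hierarchies}, $C_{\alpha k}\subseteq C_\alpha\subseteq C_\varnothing=\mathbb B_p^d(x_0,\lambda_{\max})$. Applying the upper bound in Proposition~\ref{prop:treebuilder_bounds} with ambient compact space $C_\varnothing$ (or arguing directly from $\ell^p$-diameter) gives $d_{H,p}(C_\alpha,C_{\alpha k})\le \operatorname{diam}_p(C_\varnothing)\le 2\lambda_{\max}$. Hence
\[
W_{\operatorname{cell}}\le s^{l-1}\lambda_{\max}\cdot 2\lambda_{\max}=\frac{2\lambda_{\max}^2}{s}\,s^l .
\]
Summing over the path yields $d_{\operatorname{cell}}(\pi(u),\pi(v))\le \frac{2\lambda_{\max}^2}{s}d_{\operatorname{tree}}(u,v)$.

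\emph{Point weights.} By \eqref{eq:center_point_weights}, \eqref{eq:funny_constant} and \eqref{eq:boundedness_property}, $W_{\operatorname{pt}}=s^l\|r(x_\alpha)_k\|_p\le s^l\kappa_{p,2}(d)\|r(x_\alpha)_k\|_2\le \kappa_{p,2}(d)\lambda_{\max}s^l$. Summing along the path gives the second inequality. For the final claim, the hypothesis $\kappa_{p,2}(d)\lambda_{\max}\le1$ plugs directly into this bound. When $p\ge2$ we have $\kappa_{p,2}(d)=1$, so the hypothesis reduces to $\lambda_{\max}\le 1$, which is already assumed.

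The main obstacle is bookkeeping on the quotient. A geometric edge $([u],[v])$ may have several representative tree edges, possibly at different levels, and $W_{\operatorname{cell}}$ depends on $|\alpha|$. I would handle this by taking the weight of a quotient edge to be the infimum over its representatives, or by noting that any convention assigning the weight of some representative still makes the particular representative used on our path an admissible upper bound. Either way the edge-wise inequality survives, and the shortest-path infimum absorbs the rest.
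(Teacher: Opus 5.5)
Your proposal is correct and follows essentially the same route as the paper: edgewise bounds $W_{\operatorname{pt}}\le \kappa_{p,2}(d)\lambda_{\max}s^l$ and $W_{\operatorname{cell}}\le 2\lambda_{\max}^2 s^{l-1}$ (the latter from $C_{\alpha k}\subseteq C_\alpha\subseteq C_\varnothing$), summed along the $\pi$-image of the unique tree path. On the quotient bookkeeping, only your first fix, taking the infimum over representatives (which the paper assumes tacitly), is sound: $W_{\operatorname{pt}}$ depends only on the centres and so is well defined, but $W_{\operatorname{cell}}$ carries the level factor $s^{|\alpha|}$, so your second claim fails, since a convention assigning the weight of a shallower representative would break the edgewise bound.
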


\begin{proof}[{Proof of Proposition~\ref{prop:tree_vs_geo_metrics}}]
Fix an edge
$
e_{\alpha,k}
\eqdef
\Big(
(C_{\alpha},x_{\alpha},\alpha),
(C_{\alpha k},x_{\alpha k},\alpha k)
\Big)
\in E^{\operatorname{tree}},
$
with $|\alpha|=l-1$. By definition,
$
    W_{\operatorname{abs}}(e_{\alpha,k})=s^l
$.
For the point-based weights,
$
    W_{\operatorname{pt}}(e_{\alpha,k})
    =
    s^l\|r(x_{\alpha})_k\|_p
$.
Since
$
    \|r(x_{\alpha})_k\|_2
    \le
    \lambda_{\max}
$,
we obtain by norm comparison that
$
    \|r(x_{\alpha})_k\|_p
    \le
    \kappa_{p,2}(d)\,\|r(x_{\alpha})_k\|_2
    \le
    \kappa_{p,2}(d)\,\lambda_{\max}
$.
Hence
\begin{equation}
\label{eq:pt_vs_tree_metric}
        W_{\operatorname{pt}}(e_{\alpha,k})
        \le
        \kappa_{p,2}(d)\,\lambda_{\max}\,s^l
        =
        \kappa_{p,2}(d)\,\lambda_{\max}\,
        W_{\operatorname{abs}}(e_{\alpha,k}).
\end{equation}
For the cell-based weights,
$
    W_{\operatorname{cell}}(e_{\alpha,k})
    =
    s^{|\alpha|}\lambda_{\max}\,d_{H,p}(C_{\alpha},C_{\alpha k}).
$
Since $C_{\alpha k}\subseteq C_{\alpha}$, one has
$
    d_{H,p}(C_{\alpha},C_{\alpha k})
    \le
    \operatorname{diam}(C_{\alpha},\|\cdot\|_p)
$.
Moreover,
$
    C_{\alpha}\subseteq C_{\varnothing}
    =
    \mathbb{B}_p^d(x_0,\lambda_{\max}),
$
so
\[
    \operatorname{diam}(C_{\alpha},\|\cdot\|_p)
    \le
    \operatorname{diam}(\mathbb{B}_p^d(x_0,\lambda_{\max}),\|\cdot\|_p)
    =
    2\lambda_{\max}.
\]
Therefore, $
    W_{\operatorname{cell}}(e_{\alpha,k})
    \le
    2s^{|\alpha|}\lambda_{\max}^2
    =
    \frac{2\lambda_{\max}^2}{s}\,s^l
    =
    \frac{2\lambda_{\max}^2}{s}\,
    W_{\operatorname{abs}}(e_{\alpha,k})
$.
Now fix $u,v\in \mathcal{V}^{\operatorname{tree}}$, and let
$
    u=u_0,u_1,\dots,u_n=v
$
be the unique simple path in the undirected tree $\overline{G}^{\operatorname{tree}}$. Projecting this path under $\pi$ yields a walk in $\mathcal{V}^{\operatorname{geo}}$ from $\pi(u)$ to $\pi(v)$. Removing repetitions if necessary, we obtain a path in $\mathcal{V}^{\operatorname{geo}}$ whose total weight is no larger than the total weight of the projected walk. Hence,
\[
    d_{\operatorname{pt}}\big(\pi(u),\pi(v)\big)
    \le
    \sum_{i=1}^n W_{\operatorname{pt}}(\{u_{i-1},u_i\}),
\]
and similarly for $d_{\operatorname{cell}}$.
Summing the edgewise bounds above along the unique tree path yields
\eqref{eq:cell_vs_tree_metric} and \eqref{eq:pt_vs_tree_metric}.
\end{proof}

\subsection{Variants}
\label{s:Graphical_Model__ss:Variants}
We consider two variants of the graphs defined above: finite-depth versions, where the clustering process stops at a prescribed finite level, and completed versions, which are possibly uncountable metric completions of the infinite graphs and have better metric-space properties.

\paragraph{Finite Depth Hierarchies}
We consider the above constructions when the recursive generation of levels is stopped at some depth $L\in \mathbb{N}_+$. This is because no infinite recursion can be completed in finite time on a computer; thus, these represent what our deep learning model can at best hope to produce.

\noindent
For every $L\in \mathbb{N}$, define the depth-$L$ genealogical vertex set by
$
    \mathcal{V}^{\operatorname{tree}}_{\leq L}
    \eqdef
    \bigcup_{l=0}^L \mathcal{V}^{\operatorname{tree}}_l
$,
and its quotient by
$
    \mathcal{V}^{\operatorname{geo}}_{\leq L}
    \eqdef
    \pi\big(\mathcal{V}^{\operatorname{tree}}_{\leq L}\big)
    \subseteq
    \mathcal{V}^{\operatorname{geo}}
.
$
We then write
$
    T_L
    \eqdef
    \Big(
        \mathcal{V}^{\operatorname{tree}}_{\leq L},
        d_{\operatorname{tree}}
    \Big)
$
for the corresponding truncated metric tree,
$
    P_L
    \eqdef
    \Big(
        \mathcal{V}^{\operatorname{geo}}_{\leq L},
        d_{\operatorname{pt}}
    \Big)
$
for the corresponding truncated base-point metric graph, and
$
    C_L
    \eqdef
    \Big(
        \mathcal{V}^{\operatorname{geo}}_{\leq L},
        d_{\operatorname{cell}}
    \Big)
$
for the corresponding truncated cell-based metric graph, where $d_{\operatorname{tree}}$ is the restriction of the tree metric to $\mathcal{V}^{\operatorname{tree}}_{\leq L}$, and $d_{\operatorname{pt}}$ and $d_{\operatorname{cell}}$ are the restrictions of the corresponding quotient metrics to $\mathcal{V}^{\operatorname{geo}}_{\leq L}$.

\paragraph{Metric Completions}
\label{s:Graphical_Model__ss:Variants___sss:Complettions}
In what follows, we write
$
    T_{\infty}
    \eqdef
    \Big(
        \overline{\mathcal{V}^{\operatorname{tree}}},
        \overline{d}_{\operatorname{tree}}
    \Big)
$
for the metric completion of the full genealogical tree, and
$
    X_{\operatorname{cell}}
    \eqdef
    \Big(
        \overline{\mathcal{V}^{\operatorname{geo}}},
        \overline{d}_{\operatorname{cell}}
    \Big)
$
for the metric completion of $(\mathcal{V}^{\operatorname{geo}},d_{\operatorname{cell}})$.
\section{{Proof of Theorem~\ref{thrm:main_result}}}
\label{s:Guarantees__ss:Gromov_Hausdorff__Tree}

We now prove our main result.  Our approach is to first derive a series of comparison inequalities between our truncated tree metric geometries and that of the limiting graphs; and their completions.  Next, we quantify the size of a memorizing $\operatorname{ReLU}$-MLP capable of encoding all the parent-children relationships up to level $L$ of the tree.  Finally, putting both these steps together yields our main guarantee.

\subsection{Comparison Inequalities: \hfill\\
Finite-Depth Trees and Infinite DAGs}
\label{s:Guarantees__ss:Gromov_Hausdorff__Tree___ss:Comparisons}

Throughout our analysis it will be convenient to situate the finite trees $T_L$ against the other metric graphs capturing the geometry of infinite clusters.  This section is devoted to upper and lower bounds explaining those relationships precisely.

\subsubsection{Upper-Bounds}
\label{s:Guarantees__ss:Gromov_Hausdorff__Tree___ss:Comparisons__sss:UpperBounds}

Our first upper bound, shows that, amongst other things, if $\lambda_{\max}\ll s$ then the Gromov-Hausdorff distance from $T_L$ to $X_{\operatorname{cell}}$ is of the order of $\mathcal{O}\big(
    \frac{s+\lambda_{\max}^2}{1-s}
\big)$.
For every $L\in\mathbb{N}$, we write
\[
    Y_L
    \eqdef
    \mathcal{V}^{\operatorname{geo}}_{\le L}
    \subseteq
    X_{\operatorname{cell}}
\]
for the depth-$L$ geometric vertex set viewed as a subset of the completion $X_{\operatorname{cell}}$.
\begin{proposition}[Comparison: Finite-Depth Tree and Infinite-Depth Scale-Sensitive Cluster Metrics]
\label{prop:GH_tree_vs_cell}
Assume that $0<s<1$. Then $X_{\operatorname{cell}}$ is compact and, for every $L\in \mathbb{N}$,
\begin{align}
\label{eq:YL_to_Xcell}
    d_H^{\overline{d}_{\operatorname{cell}}}
    (Y_L,X_{\operatorname{cell}})
   &  \le
    \sum_{j=L+1}^{\infty}2\lambda_{\max}^2 s^{j-1}
    =
    \frac{2\lambda_{\max}^2 s^L}{1-s},
\\
\label{eq:tree_vs_Xcell}
    d_{GH}(T_L,X_{\operatorname{cell}})
    & \le
    \max\Big\{1
        ,
        \frac{2\lambda_{\max}^2}{s}
    -1\Big\}
    \frac{s(1-s^L)}{1-s}
    +
    \frac{2\lambda_{\max}^2 s^L}{1-s}
.
\end{align}
\end{proposition}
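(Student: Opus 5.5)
The plan is to derive both estimates from a single tail estimate on edge weights. The starting point is the bound already extracted in the proof of Proposition~\ref{prop:tree_vs_geo_metrics}: every edge from level $j-1$ to level $j$ satisfies
\[
W_{\operatorname{cell}}\le 2\lambda_{\max}^2 s^{j-1}.
\]
This holds because $C_{\alpha k}\subseteq C_\alpha\subseteq \mathbb{B}_p^d(x_0,\lambda_{\max})$ and the Hausdorff distance between nested sets is at most the diameter of the larger one.

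\textbf{Compactness of $X_{\operatorname{cell}}$.} Since $X_{\operatorname{cell}}$ is complete by construction, it suffices to show total boundedness. Fix $u\in\mathcal V^{\operatorname{geo}}$ at level $m>L$. Following the parent chain of one tree representative gives an ancestor $a(u)\in Y_L$ at level $L$, joined to $u$ by a path whose total weight is bounded by the tail sum, so
\[
d_{\operatorname{cell}}(u,a(u))\le \sum_{j=L+1}^{m}2\lambda_{\max}^2 s^{j-1}\le \frac{2\lambda_{\max}^2 s^L}{1-s}.
\]
Every point of the completion is a limit of such vertices, so by continuity it lies within the same distance of the finite set $Y_L$; it has at most $\sum_{l\le L}K^l$ elements. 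Because the radius tends to $0$ as $L\to\infty$, $X_{\operatorname{cell}}$ is totally bounded, hence compact. The same argument gives \eqref{eq:YL_to_Xcell}, since $Y_L\subseteq X_{\operatorname{cell}}$ makes one side of the Hausdorff distance vanish.

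\textbf{The Gromov--Hausdorff bound \eqref{eq:tree_vs_Xcell}.} I would use the triangle inequality
\[
d_{GH}(T_L,X_{\operatorname{cell}})\le d_{GH}(T_L,(Y_L,d_{\operatorname{cell}}))+d_H(Y_L,X_{\operatorname{cell}}),
\]
where the second term is bounded by \eqref{eq:YL_to_Xcell}. For the first term I would use the correspondence $\{(v,\pi(v))\}$ induced by the quotient map, which is surjective onto $Y_L$. Its distortion is
\[
\sup_{u,v}\,|d_{\operatorname{tree}}(u,v)-d_{\operatorname{cell}}(\pi u,\pi v)|.
\]
Proposition~\ref{prop:tree_vs_geo_metrics} gives $d_{\operatorname{cell}}\le (2\lambda_{\max}^2/s)\,d_{\operatorname{tree}}$, so
\[
d_{\operatorname{cell}}-d_{\operatorname{tree}}\le \left(\tfrac{2\lambda_{\max}^2}{s}-1\right)d_{\operatorname{tree}},
\]
and, using $d_{\operatorname{cell}}\ge 0$, also $d_{\operatorname{tree}}-d_{\operatorname{cell}}\le d_{\operatorname{tree}}$. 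Hence the distortion is at most
\[
\max\Bigl\{1,\tfrac{2\lambda_{\max}^2}{s}-1\Bigr\}\,\sup d_{\operatorname{tree}}.
\]
On $T_L$, Proposition~\ref{prop:closed_form_tree_metric} gives $\sup d_{\operatorname{tree}}\le 2\bigl(s-s^{L+1}\bigr)/(1-s)$. Using the standard fact $d_{GH}\le\tfrac12\operatorname{dis}$, the factor $2$ cancels and yields exactly
\[
\max\Bigl\{1,\tfrac{2\lambda_{\max}^2}{s}-1\Bigr\}\frac{s(1-s^L)}{1-s}.
\]

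\textbf{Main obstacle.} The delicate step is the compactness and Hausdorff argument, specifically the claim that every completion point is within the tail bound of $Y_L$. This requires that each geometric vertex at depth greater than $L$ has an ancestor in $Y_L$ along a projected path. If the quotient identifies vertices across levels, that path may shortcut but it can never lengthen, since $d_{\operatorname{cell}}$ is an infimum over paths. Once this is settled, the remaining steps are bookkeeping with geometric sums.
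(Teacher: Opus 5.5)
Your proposal is correct and follows the paper's proof essentially step for step. The tail bound $W_{\operatorname{cell}}\le 2\lambda_{\max}^2 s^{j-1}$, summed along a projected ancestral chain, gives \eqref{eq:YL_to_Xcell} and the total boundedness, hence compactness, of $X_{\operatorname{cell}}$. The quotient correspondence $\{(u,\pi(u))\}$, whose distortion is at most $\max\{1,2\lambda_{\max}^2/s-1\}\operatorname{diam}(T_L)$, combined with $d_{GH}\le\tfrac12\operatorname{dis}$ and the triangle inequality, gives \eqref{eq:tree_vs_Xcell}; you merely spell out the total-boundedness step and read $\operatorname{diam}(T_L)$ off the closed-form tree metric, where the paper cites Proposition~\ref{prop:GH_tree_to_full_tree} and a direct path count.
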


Before proving our upper bound, we report another useful comparison; namely the gap between $T_L$ and $T_{\infty}$ in Gromov-Hausdorff distance.
\begin{proposition}[Comparison: Finite-Depth Tree and Infinite-Depth Tree Metrics]
\label{prop:GH_tree_to_full_tree}
Assume that $0<s<1$. Then $T_{\infty}$ is compact and, for every $L\in \mathbb{N}$,
\begin{equation}
\label{eq:GH_tree_to_full_tree_bound}
    d_{GH}(T_L,T_{\infty})
    \le
    \sum_{j=L+1}^{\infty}s^j
    =
    \frac{s^{L+1}}{1-s}.
\end{equation}
In particular,
$
    \lim_{L\uparrow\infty}
    d_{GH}(T_L,T_{\infty})
    =
    0
$.
\end{proposition}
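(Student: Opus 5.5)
The proof of Proposition~\ref{prop:GH_tree_to_full_tree} relies on the closed form of Proposition~\ref{prop:closed_form_tree_metric}. The plan is to show directly that $\mathcal V^{\operatorname{tree}}_{\le L}$ is an $\frac{s^{L+1}}{1-s}$-net in $T_\infty$, and then to use compactness of $T_\infty$ together with the isometric inclusion $T_L\hookrightarrow T_\infty$. Throughout, $T_L$ is viewed as the subset $\mathcal V^{\operatorname{tree}}_{\le L}$ of the completion. Because $d_{\operatorname{tree}}$ restricted to this set coincides with the metric of $T_L$, the definition of $d_{GH}$ (take $Z=T_\infty$, $\varphi$ the inclusion, $\psi=\mathrm{id}$) gives $d_{GH}(T_L,T_\infty)\le d_{\mathbb H}(\mathcal V^{\operatorname{tree}}_{\le L},T_\infty)$. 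Since $T_L\subseteq T_\infty$, it suffices to bound $\sup_{z\in T_\infty}\operatorname{dist}(z,\mathcal V^{\operatorname{tree}}_{\le L})$.

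First I handle points of the uncompleted tree. For a vertex $v=(C_\beta,x_\beta,\beta)$ with $|\beta|=n>L$, let $u$ be its ancestor indexed by the length-$L$ prefix of $\beta$. Then $c=L$, $m=L$, and \eqref{eq:closed_form_tree_metric} gives
\[
d_{\operatorname{tree}}(u,v)=\frac{s^{L+1}-s^{n+1}}{1-s}<\frac{s^{L+1}}{1-s}.
\]
Vertices with $n\le L$ are at distance $0$. Next I pass to the completion. Any $z\in T_\infty$ is the limit of a sequence $v_j\in\mathcal V^{\operatorname{tree}}$, and for each $v_j$ we have $\operatorname{dist}(v_j,\mathcal V^{\operatorname{tree}}_{\le L})\le \frac{s^{L+1}}{1-s}$. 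Since the distance-to-a-set function is $1$-Lipschitz, the same bound holds for $z$. This proves \eqref{eq:GH_tree_to_full_tree_bound}, and the limit statement follows because $s^{L+1}\to0$.

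The step needing the most care is compactness of $T_\infty$, which is required for $d_{GH}$ to be defined in the sense of $\mathbb{GH}$. I will argue total boundedness. For each $\epsilon>0$, pick $L$ with $\frac{s^{L+1}}{1-s}<\epsilon$. The finite set $\mathcal V^{\operatorname{tree}}_{\le L}$, which has at most $\sum_{l\le L}K^l$ points, is then an $\epsilon$-net of $T_\infty$ by the estimate above. A complete, totally bounded metric space is compact, and $T_\infty$ is complete by construction. One subtlety is that different addresses give distinct tree vertices, so no quotient issues arise. Note also that $T_L$ is finite, hence compact, so both arguments of $d_{GH}$ are legitimate.
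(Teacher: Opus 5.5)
Your proposal is correct and follows essentially the same route as the paper. You show that $\mathcal V^{\operatorname{tree}}_{\le L}$ is an $\frac{s^{L+1}}{1-s}$-net via the depth-$L$ ancestor, deduce total boundedness and hence compactness of the completion, and bound $d_{GH}$ by the Hausdorff distance inside $T_\infty$. The differences are cosmetic: you read the ancestor distance off the closed form of Proposition~\ref{prop:closed_form_tree_metric} rather than summing edge weights along the ancestral path, and you spell out the $1$-Lipschitz passage to limit points of the completion, which the paper leaves implicit.
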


\begin{proof}
Fix $L\in \mathbb{N}$ and denote the tail geometric sum 
$
    \tau_L
    \eqdef
    \sum_{j=L+1}^{\infty}s^j
    =
    \frac{s^{L+1}}{1-s}
$.  
We first show that every vertex of the full genealogical tree lies within $d_{\operatorname{tree}}$-distance at most $\tau_L$ from $\mathcal{V}^{\operatorname{tree}}_{\leq L}$.
Indeed, let
$
u=(C_{\alpha},x_{\alpha},\alpha)\in \mathcal{V}^{\operatorname{tree}}
$
with $|\alpha|=m$.
If $m\leq L$, then $u\in \mathcal{V}^{\operatorname{tree}}_{\leq L}$.
If $m>L$, let $\alpha|_L$ denote the length-$L$ prefix of $\alpha$.
Then
$
(C_{\alpha|_L},x_{\alpha|_L},\alpha|_L)\in \mathcal{V}^{\operatorname{tree}}_{\leq L}
$,
and the unique ancestral path from $\alpha|_L$ to $\alpha$ has total weight
\[
    \sum_{j=L+1}^{m}s^j
    \le
    \tau_L.
\]
Hence $
        \sup_{u\in \mathcal{V}^{\operatorname{tree}}}
    \,
        d_{\operatorname{tree}}
        \big(
            u,\mathcal{V}^{\operatorname{tree}}_{\leq L}
        \big)
    \le
        \tau_L
$.
Now, since $\mathcal{V}^{\operatorname{tree}}_{\leq L}\subseteq \overline{\mathcal{V}^{\operatorname{tree}}}$, the reverse directed Hausdorff distance is zero. Therefore
\[
    d_H^{\overline{d}_{\operatorname{tree}}}
    \big(
        \mathcal{V}^{\operatorname{tree}}_{\leq L},
        \overline{\mathcal{V}^{\operatorname{tree}}}
    \big)
    \le
    \tau_L.
\]
As $\mathcal{V}^{\operatorname{tree}}_{\leq L}$ is finite for every $L$, it follows that $\mathcal{V}^{\operatorname{tree}}$ is totally bounded under $d_{\operatorname{tree}}$; hence its completion $T_{\infty}$ is compact.
Finally, since $\mathcal{V}^{\operatorname{tree}}_{\leq L}$ isometrically embeds into $T_{\infty}$, the standard Hausdorff bound inside a common ambient metric space yields
$
        d_{GH}(T_L,T_{\infty})
    \le
        d_H^{\overline{d}_{\operatorname{tree}}}
        \big(
            \mathcal{V}^{\operatorname{tree}}_{\leq L},
            \overline{\mathcal{V}^{\operatorname{tree}}}
        \big)
    \le
        \tau_L
$.
\end{proof}
We now show Proposition~\ref{prop:GH_tree_vs_cell}.
\begin{proof}[{Proof of Proposition~\ref{prop:GH_tree_vs_cell}}]
Abbreviate $ c_{\operatorname{cell}}
    \eqdef
    \frac{2\lambda_{\max}^2}{s}$.
We first prove \eqref{eq:YL_to_Xcell}. Fix a vertex
$
v\in \mathcal{V}^{\operatorname{geo}}
$,
and choose a representative
$
u=(C_{\alpha},x_{\alpha},\alpha)\in \mathcal{V}^{\operatorname{tree}}
$
with $\pi(u)=v$ and $|\alpha|=m$.
If $m\le L$, then $v\in Y_L$.
If $m>L$, let $\alpha|_L$ be the length-$L$ prefix of $\alpha$.
Projecting the ancestral chain from $\alpha|_L$ to $\alpha$ under $\pi$ gives a walk in $\mathcal{V}^{\operatorname{geo}}$ from $\pi(\alpha|_L)\in Y_L$ to $v$.
Hence
$
    d_{\operatorname{cell}}(v,Y_L)
\le
    \sum_{j=L+1}^{m}
    W_{\operatorname{cell}}(e_j)
$,
where $e_j$ denotes the edge joining level $j-1$ to level $j$ along that ancestral chain.
By the bound
$
    W_{\operatorname{cell}}(e_j)
\le
    2\lambda_{\max}^2 s^{j-1}
$,
we obtain
$   
        d_{\operatorname{cell}}(v,Y_L)
    \le
        \sum_{j=L+1}^{m}2\lambda_{\max}^2 s^{j-1}
    \le
        \frac{2\lambda_{\max}^2 s^L}{1-s}
$.
Taking the supremum over $v\in \mathcal{V}^{\operatorname{geo}}$ and then passing to the completion proves \eqref{eq:YL_to_Xcell}. Compactness of $X_{\operatorname{cell}}$ follows as in Proposition~\ref{prop:GH_tree_to_full_tree}.
We show that
\begin{equation}
\label{eq:tree_vs_YL}
    d_{GH}(T_L,Y_L)
    \le
    \max\{1,c_{\operatorname{cell}}-1\}
    \frac{s(1-s^L)}{1-s}.
\end{equation}
Consider the correspondence
\[
    R_L
    \eqdef
    \big\{
        (u,\pi(u))
        :\,
        u\in \mathcal{V}^{\operatorname{tree}}_{\le L}
    \big\}
    \subseteq
    \mathcal{V}^{\operatorname{tree}}_{\le L}\times Y_L.
\]
This is surjective onto both factors.
For any
$
(u,\pi(u)),(v,\pi(v))\in R_L
$,
the comparison estimate established earlier yields
\[
    d_{\operatorname{cell}}(\pi(u),\pi(v))
    \le
    c_{\operatorname{cell}}\,
    d_{\operatorname{tree}}(u,v).
\]
Therefore,
$
    \big|
        d_{\operatorname{tree}}(u,v)
        -
        d_{\operatorname{cell}}(\pi(u),\pi(v))
    \big|
\le
    \max\{1,c_{\operatorname{cell}}-1\}\,
    d_{\operatorname{tree}}(u,v)
$; and upon taking the supremum over $u,v\in \mathcal{V}^{\operatorname{tree}}_{\le L}$ we obtain
\[
    \operatorname{dis}(R_L)
\le
    \max\{1,c_{\operatorname{cell}}-1\}\,
    \operatorname{diam}(T_L).
\]
Now
$
    \operatorname{diam}(T_L)
    \le
    2\sum_{j=1}^{L}s^j
    =
    \frac{2s(1-s^L)}{1-s}
$, 
since the longest simple path joins two depth-$L$ leaves through the root.
Hence
$
    d_{GH}(T_L,Y_L)
\le
    \frac{1}{2}\operatorname{dis}(R_L)
\le
    \max\{1,c_{\operatorname{cell}}-1\}
    \frac{s(1-s^L)}{1-s}
$,
which proves \eqref{eq:tree_vs_YL}.
Finally, combining \eqref{eq:YL_to_Xcell} and \eqref{eq:tree_vs_YL} with the triangle inequality for $d_{GH}$ yields \eqref{eq:tree_vs_Xcell}. 
\end{proof}

\subsubsection{Lower-Bounds}
\label{s:Guarantees__ss:Gromov_Hausdorff__Tree___ss:Comparisons__sss:lowerBounds}
We now illustrate the near tightness of our comparison bounds. 
\begin{proposition}[Gromov--Hausdorff lower bounds against the full geometric models]
\label{prop:GH_lower_bounds_full_models}
Let $\kappa_{p,2}(d)$ be as in \eqref{eq:funny_constant}, and write
$
    G_{\operatorname{pt}}
    \eqdef
    \big(
        \overline{\mathcal{V}^{\operatorname{geo}}},
        \overline{d}_{\operatorname{pt}}
    \big)
$
and
$
    G_{\operatorname{cell}}
    \eqdef
    \big(
        \overline{\mathcal{V}^{\operatorname{geo}}},
        \overline{d}_{\operatorname{cell}}
    \big)
$.
Then, for every $L\in \mathbb{N}_+$,
\begin{align}
\label{eq:GH_lower_point_full}
    d_{GH}(T_L,G_{\operatorname{pt}})
&\ge
    \max\Big\{
        0,
        \frac{s\bigl(1-\kappa_{p,2}(d)\lambda_{\max}\bigr)-s^{L+1}}{1-s}
    \Big\},
\\
\label{eq:GH_lower_cell_full}
    d_{GH}(T_L,G_{\operatorname{cell}})
&\ge
    \max\Big\{
        0,
        \frac{s-2\lambda_{\max}^2-s^{L+1}}{1-s}
    \Big\}.
\end{align}
In particular, if $\kappa_{p,2}(d)\lambda_{\max}<1$, then
$
\liminf_{L\uparrow\infty} d_{GH}(T_L,G_{\operatorname{pt}})
\ge
\frac{s\bigl(1-\kappa_{p,2}(d)\lambda_{\max}\bigr)}{1-s}
$,
and if $2\lambda_{\max}^2<s$, then
$
\liminf_{L\uparrow\infty} d_{GH}(T_L,G_{\operatorname{cell}})
\ge
\frac{s-2\lambda_{\max}^2}{1-s}
$.
\end{proposition}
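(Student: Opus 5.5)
The plan is to compare $T_L$ and the completed geometric models through an explicit quantity that any correspondence must distort, namely the diameters. I will use the standard estimate $d_{GH}(X,Y)\ge \tfrac12|\operatorname{diam}X-\operatorname{diam}Y|$, which is immediate from the correspondence characterisation $d_{GH}=\tfrac12\inf_R\operatorname{dis}(R)$: take a pair realising (or nearly realising) the larger diameter and pair it with anything corresponding to it. This reduces both \eqref{eq:GH_lower_point_full} and \eqref{eq:GH_lower_cell_full} to a lower bound on $\operatorname{diam}(T_L)$ together with an upper bound on $\operatorname{diam}(G_{\operatorname{pt}})$, respectively $\operatorname{diam}(G_{\operatorname{cell}})$.

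First, the diameter of $T_L$. By Proposition~\ref{prop:closed_form_tree_metric}, two depth-$L$ leaves with $c=0$ (distinct first letters, available since $K>1$) satisfy
\[
d_{\operatorname{tree}}=\frac{2(s-s^{L+1})}{1-s}.
\]
Combined with the upper bound from the proof of Proposition~\ref{prop:GH_tree_vs_cell}, this gives $\operatorname{diam}(T_L)=2s(1-s^L)/(1-s)$ exactly.

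Second, the diameters of the geometric models. I will bound these by a radius argument from the root, so that $\operatorname{diam}\le 2\sup_v d(\pi(\varnothing),v)$. For any vertex at depth $m$, follow the ancestral chain from the root. Each point edge at level $j$ has weight at most $\kappa_{p,2}(d)\lambda_{\max}s^j$, by the edgewise bound \eqref{eq:pt_vs_tree_metric}. Each cell edge at level $j$ has weight at most $2\lambda_{\max}^2 s^{j-1}$, as in the proof of Proposition~\ref{prop:tree_vs_geo_metrics}. Summing over $j\ge 1$ gives radii of at most $\kappa_{p,2}(d)\lambda_{\max}s/(1-s)$ and $2\lambda_{\max}^2/(1-s)$. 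These bounds are uniform in $m$, so they pass to the completions by density and continuity of the metric. Hence $\operatorname{diam}(G_{\operatorname{pt}})\le 2\kappa_{p,2}(d)\lambda_{\max}s/(1-s)$ and $\operatorname{diam}(G_{\operatorname{cell}})\le 4\lambda_{\max}^2/(1-s)$.

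Combining the two steps gives
\[
d_{GH}(T_L,G_{\operatorname{pt}})\ge \tfrac12\Bigl(\tfrac{2s(1-s^L)}{1-s}-\tfrac{2\kappa s\lambda_{\max}}{1-s}\Bigr)=\frac{s(1-\kappa\lambda_{\max})-s^{L+1}}{1-s},
\]
writing $\kappa=\kappa_{p,2}(d)$. Likewise,
\[
d_{GH}(T_L,G_{\operatorname{cell}})\ge\frac{s-s^{L+1}-2\lambda_{\max}^2}{1-s},
\]
which matches the claims exactly. Taking the maximum with $0$ is trivial, and the $\liminf$ statements follow by letting $s^{L+1}\to0$.

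The main point of care is the cell diameter. The factor $s^{j-1}$ in the cell weights, with $W_{\operatorname{cell}}=s^{|\alpha|}\lambda_{\max}d_{H,p}$, must be tracked correctly so that the radius is $2\lambda_{\max}^2/(1-s)$ rather than a smaller quantity. If a sharper constant were needed, I would instead bound the distance between two arbitrary vertices through their common ancestor rather than through the root. For the stated bound, however, the root-radius argument suffices, so the remaining work is checking the edge-weight indices.
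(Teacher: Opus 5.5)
Your argument is correct and gives exactly the stated constants, but it is organized differently from the paper's proof.

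\textbf{The paper's route.} It applies the diameter inequality $d_{GH}(X,Y)\ge\frac12|\operatorname{diam}X-\operatorname{diam}Y|$ only to the finite truncations, in Lemma~\ref{lem:GH_lower_bounds}. There it uses $\operatorname{diam}(P_L)\le\kappa_{p,2}(d)\lambda_{\max}\operatorname{diam}(T_L)$ and $\operatorname{diam}(C_L)\le\frac{2\lambda_{\max}^2}{s}\operatorname{diam}(T_L)$. It then passes to the completed models through the triangle inequality $d_{GH}(T_L,G_{\operatorname{pt}})\ge d_{GH}(T_L,P_L)-d_{GH}(P_L,G_{\operatorname{pt}})$ and its cell analogue, combined with the tail estimates \eqref{eq:GH_PL_Gpt} and \eqref{eq:GH_CL_Gcell}.

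\textbf{Your route.} You bypass the intermediate spaces $P_L$ and $C_L$. You apply the diameter inequality directly to $T_L$ and the completions, and bound $\operatorname{diam}(G_{\operatorname{pt}})$ and $\operatorname{diam}(G_{\operatorname{cell}})$ by twice the root radius.

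\textbf{Comparison.} The two routes are quantitatively identical. Indeed,
$\kappa_{p,2}(d)\lambda_{\max}\operatorname{diam}(T_L)+2\kappa_{p,2}(d)\lambda_{\max}\frac{s^{L+1}}{1-s}=\frac{2\kappa_{p,2}(d)\lambda_{\max}s}{1-s}$
and
$\frac{2\lambda_{\max}^2}{s}\operatorname{diam}(T_L)+\frac{4\lambda_{\max}^2s^{L}}{1-s}=\frac{4\lambda_{\max}^2}{1-s}$.
So the paper's two-step estimate is implicitly your diameter bound on the completed spaces.
\begin{itemize}
\item Your version is shorter and self-contained. It also avoids subtracting a tail term from a bound that has already been truncated at $0$.
\item The paper's version reuses estimates proved elsewhere. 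As a by-product it yields the finite-depth lower bounds on $d_{GH}(T_L,P_L)$ and $d_{GH}(T_L,C_L)$.
\end{itemize}

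You are also right to flag $K>1$. Both arguments need it for $\operatorname{diam}(T_L)=\frac{2s(1-s^L)}{1-s}$. For $K=1$ the stated bounds can actually fail: $T_L$ is then a path, and $d_{GH}(T_L,G_{\operatorname{pt}})\le\frac12\max\{\operatorname{diam}T_L,\operatorname{diam}G_{\operatorname{pt}}\}$ is smaller than the claimed bound when $\kappa_{p,2}(d)\lambda_{\max}$ is small.
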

Our proof of Proposition~\ref{prop:GH_lower_bounds_full_models} builds atop the following lemma.

\begin{lemma}[Gromov--Hausdorff lower bounds]
\label{lem:GH_lower_bounds}
Let $\kappa_{p,2}(d)$ be as in \eqref{eq:funny_constant}. Then, for every $L\in \mathbb{N}_+$,
we have
\begin{align}
\label{eq:GH_lower_point}
    d_{GH}(T_L,P_L)
&\ge
    \max\Big\{
        0,
        \bigl(1-\kappa_{p,2}(d)\lambda_{\max}\bigr)
        \frac{s(1-s^L)}{1-s}
    \Big\},
\\
\label{eq:GH_lower_cell}
    d_{GH}(T_L,C_L)
&\ge
    \max\Big\{
        0,
        \Bigl(1-\frac{2\lambda_{\max}^2}{s}\Bigr)
        \frac{s(1-s^L)}{1-s}
    \Big\}.
\end{align}
In particular, if $\kappa_{p,2}(d)\lambda_{\max}<1$, then $d_{GH}(T_L,P_L)$ is bounded below by a positive constant uniformly in $L$; and if $2\lambda_{\max}^2<s$, then the same holds for $d_{GH}(T_L,C_L)$.
\end{lemma}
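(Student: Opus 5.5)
The plan is to use the elementary diameter bound for the Gromov--Hausdorff distance: for compact metric spaces $X$ and $Y$,
\[
d_{GH}(X,Y)\ \ge\ \tfrac12\bigl|\operatorname{diam}(X)-\operatorname{diam}(Y)\bigr|.
\]
This follows from the correspondence characterisation $d_{GH}=\tfrac12\inf_R\operatorname{dis}(R)$, which the proof of Proposition~\ref{prop:GH_tree_vs_cell} already uses. Given any correspondence $R$, pick $x,x'\in X$ realising $\operatorname{diam}(X)$ and partners $y,y'$ with $(x,y),(x',y')\in R$. Then $\operatorname{dis}(R)\ge \operatorname{diam}(X)-\rho_Y(y,y')\ge \operatorname{diam}(X)-\operatorname{diam}(Y)$. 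Since $T_L$, $P_L$ and $C_L$ are finite, all diameters are attained. It therefore suffices to compute $\operatorname{diam}(T_L)$ exactly and to bound $\operatorname{diam}(P_L)$ and $\operatorname{diam}(C_L)$ from above by a constant multiple of it.

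\textbf{Step 1: compute $\operatorname{diam}(T_L)$.} By Proposition~\ref{prop:closed_form_tree_metric}, $d_{\operatorname{tree}}(u,v)=\bigl(2s^{c+1}-s^{m+1}-s^{n+1}\bigr)/(1-s)$. On $\mathcal V^{\operatorname{tree}}_{\le L}$ this is maximised by taking $m=n=L$ and $c=0$. Such a pair exists as soon as $K\ge 2$: take $\alpha=(1,\dots,1)$ and $\beta=(2,\dots,2)$. This gives
\[
\operatorname{diam}(T_L)=\frac{2s(1-s^L)}{1-s}.
\]
The maximisation is the one step needing care. The expression is decreasing in $c$ and increasing in $m$ and $n$, and $c=0$ is compatible with $m=n=L$. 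The standing assumption $K>1$ must be made explicit here, since for $K=1$ the tree is a path and the bound is false.

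\textbf{Step 2: upper-bound the other diameters.} Any two points of $P_L$ or $C_L$ have the form $\pi(u),\pi(v)$ with $u,v\in\mathcal V^{\operatorname{tree}}_{\le L}$, because $\pi$ maps $\mathcal V^{\operatorname{tree}}_{\le L}$ onto $\mathcal V^{\operatorname{geo}}_{\le L}$ by definition. Proposition~\ref{prop:tree_vs_geo_metrics} then gives
\[
d_{\operatorname{pt}}(\pi(u),\pi(v))\le \kappa_{p,2}(d)\lambda_{\max}\,d_{\operatorname{tree}}(u,v),
\qquad
d_{\operatorname{cell}}(\pi(u),\pi(v))\le \tfrac{2\lambda_{\max}^2}{s}\,d_{\operatorname{tree}}(u,v).
\]
This uses the metrics on $P_L$ and $C_L$ as restrictions of the full quotient metrics; the tree path between two depth-$\le L$ vertices stays at depth $\le L$ anyway. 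Consequently $\operatorname{diam}(P_L)\le \kappa_{p,2}(d)\lambda_{\max}\operatorname{diam}(T_L)$ and $\operatorname{diam}(C_L)\le \frac{2\lambda_{\max}^2}{s}\operatorname{diam}(T_L)$.

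\textbf{Step 3: combine.} Plugging Steps 1 and 2 into the diameter bound yields
\[
d_{GH}(T_L,P_L)\ \ge\ \tfrac12\bigl(1-\kappa_{p,2}(d)\lambda_{\max}\bigr)\frac{2s(1-s^L)}{1-s},
\]
which is \eqref{eq:GH_lower_point}, and similarly \eqref{eq:GH_lower_cell}. The $\max\{0,\cdot\}$ merely records that $d_{GH}\ge 0$ when the constant is negative. For the ``in particular'' clause, observe that $1-s^L\ge 1-s$ for $L\ge1$, so the lower bound is at least $s\bigl(1-\kappa_{p,2}(d)\lambda_{\max}\bigr)$ (respectively $s-2\lambda_{\max}^2$). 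This is positive and independent of $L$ under the stated hypotheses.

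I expect no step to be a real obstacle. The only points requiring attention are confirming that the diameter of $T_L$ is attained by two depth-$L$ leaves with disjoint first letters (hence the use of $K>1$), and checking that the comparison inequalities of Proposition~\ref{prop:tree_vs_geo_metrics} transfer to the truncated spaces.
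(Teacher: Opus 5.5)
Your proposal is correct and follows the paper's proof. Like the paper, you compute $\operatorname{diam}(T_L)=\frac{2s(1-s^L)}{1-s}$ from two depth-$L$ leaves whose common ancestor is the root, bound $\operatorname{diam}(P_L)$ and $\operatorname{diam}(C_L)$ by the comparisons of Proposition~\ref{prop:tree_vs_geo_metrics}, and conclude with $d_{GH}(X,Y)\ge\frac12|\operatorname{diam}(X)-\operatorname{diam}(Y)|$. You also fill in details the paper leaves implicit: the correspondence proof of that diameter inequality, the dependence on $K>1$, and the uniform-in-$L$ bound via $1-s^L\ge 1-s$.
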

\begin{proof}
We first establish the following diameter inequalities
\begin{align}
\label{eq:diam_tree_formula}
    \operatorname{diam}(T_L)
& =
    2\sum_{j=1}^L s^j
=
    \frac{2s(1-s^L)}{1-s}.
\\
\label{eq:diam_point_upper}
    \operatorname{diam}(P_L)
& \le
    \kappa_{p,2}(d)\,\lambda_{\max}\,
    \operatorname{diam}(T_L),
\\
\label{eq:diam_cell_upper}
    \operatorname{diam}(C_L)
& \le
    \frac{2\lambda_{\max}^2}{s}\,
    \operatorname{diam}(T_L)
\end{align}
which will play a helpful role.
Now, the formula \eqref{eq:diam_tree_formula} follows since the largest distance in the depth-$L$ tree is realized by two level-$L$ leaves whose least common ancestor is the root.
By Proposition~\ref{prop:tree_vs_geo_metrics}, cf.~\eqref{eq:pt_vs_tree_metric} and \eqref{eq:cell_vs_tree_metric}, for every $u,v\in \mathcal{V}^{\operatorname{tree}}_{\leq L}$ one has
$
    d_{\operatorname{pt}}(\pi(u),\pi(v))
\le
    \kappa_{p,2}(d)\lambda_{\max}\,d_{\operatorname{tree}}(u,v)
$
and
$
    d_{\operatorname{cell}}(\pi(u),\pi(v))
\le
    \frac{2\lambda_{\max}^2}{s}\,d_{\operatorname{tree}}(u,v)
$.
Taking suprema over all $u,v\in \mathcal{V}^{\operatorname{tree}}_{\leq L}$ yields \eqref{eq:diam_point_upper} and \eqref{eq:diam_cell_upper}.
Finally, for any compact metric spaces $X,Y$ one has
$
d_{GH}(X,Y)\ge \frac12|\operatorname{diam}(X)-\operatorname{diam}(Y)|
$.
Applying this with $X=T_L$ and $Y=P_L$, respectively $Y=C_L$, and then using \eqref{eq:diam_tree_formula}, \eqref{eq:diam_point_upper}, and \eqref{eq:diam_cell_upper}, yields \eqref{eq:GH_lower_point} and \eqref{eq:GH_lower_cell}.
\end{proof}
\begin{proof}[{Proof of Proposition~\ref{prop:GH_lower_bounds_full_models}}]
We first estimate the distance from the depth-$L$ truncations to the full geometric models.
For the base-point metric, let $v\in \mathcal{V}^{\operatorname{geo}}$ and choose a representative
$
u=(C_\alpha,x_\alpha,\alpha)\in \mathcal{V}^{\operatorname{tree}}
$
with $\pi(u)=v$ and $|\alpha|=m$.
If $m\le L$, then $v\in \mathcal{V}^{\operatorname{geo}}_{\le L}$.
If $m>L$, let $\alpha|_L$ denote the length-$L$ prefix of $\alpha$.
Projecting the ancestral chain from $\alpha|_L$ to $\alpha$ under $\pi$ yields a walk from $\pi(\alpha|_L)\in \mathcal{V}^{\operatorname{geo}}_{\le L}$ to $v$.
Hence
\[
    d_{\operatorname{pt}}\bigl(v,\mathcal{V}^{\operatorname{geo}}_{\le L}\bigr)
    \le
    \sum_{j=L+1}^{m} W_{\operatorname{pt}}(e_j),
\]
where $e_j$ denotes the edge joining level $j-1$ to level $j$ along the ancestral chain.
By \eqref{eq:center_point_weights} and \eqref{eq:funny_constant},
\[
    W_{\operatorname{pt}}(e_j)
    =
    s^j\|r(x_{\alpha|_{j-1}})_k\|_p
    \le
    s^j \kappa_{p,2}(d)\|r(x_{\alpha|_{j-1}})_k\|_2
    \le
    s^j \kappa_{p,2}(d)\lambda_{\max}.
\]
Therefore
$
    d_{\operatorname{pt}}\bigl(v,\mathcal{V}^{\operatorname{geo}}_{\le L}\bigr)
\le
    \kappa_{p,2}(d)\lambda_{\max}
    \sum_{j=L+1}^{\infty}s^j
=
    \kappa_{p,2}(d)\lambda_{\max}\,
    \frac{s^{L+1}}{1-s}
$.
Passing to the completion gives
$
    d_H^{\overline{d}_{\operatorname{pt}}}
    \bigl(
        \mathcal{V}^{\operatorname{geo}}_{\le L},
        \overline{\mathcal{V}^{\operatorname{geo}}}
    \bigr)
    \le
    \kappa_{p,2}(d)\lambda_{\max}\,
    \frac{s^{L+1}}{1-s}
$.  
Since $\mathcal{V}^{\operatorname{geo}}_{\le L}\subseteq \overline{\mathcal{V}^{\operatorname{geo}}}$ we have
\begin{equation}
\label{eq:GH_PL_Gpt}
    d_{GH}(P_L,G_{\operatorname{pt}})
\le
    \kappa_{p,2}(d)\lambda_{\max}\,
    \frac{s^{L+1}}{1-s}
.
\end{equation}
For the cell metric, the same argument as in the proof of \eqref{eq:YL_to_Xcell} yields
\begin{equation}
\label{eq:GH_CL_Gcell}
        d_{GH}(C_L,G_{\operatorname{cell}})
    \le
        \frac{2\lambda_{\max}^2 s^L}{1-s}.
\end{equation}
Lemma~\ref{lem:GH_lower_bounds}; together with \eqref{eq:GH_lower_point} and \eqref{eq:GH_PL_Gpt} imply that
\begin{align*}
    d_{GH}(T_L,G_{\operatorname{pt}})
    & \ge
    d_{GH}(T_L,P_L)-d_{GH}(P_L,G_{\operatorname{pt}})
\\
    &
    \ge
    \max\Big\{
        0,
        \bigl(1-\kappa_{p,2}(d)\lambda_{\max}\bigr)
        \frac{s(1-s^L)}{1-s}
    \Big\}
    -
    \kappa_{p,2}(d)\lambda_{\max}\,
    \frac{s^{L+1}}{1-s}.
\end{align*}
Since
$
    \bigl(1-\kappa_{p,2}(d)\lambda_{\max}\bigr)
    s(1-s^L)
    -
    \kappa_{p,2}(d)\lambda_{\max}\,s^{L+1}
    =
    s\bigl(1-\kappa_{p,2}(d)\lambda_{\max}\bigr)-s^{L+1},
$
and since $d_{GH}(T_L,G_{\operatorname{pt}})\ge 0$, this proves \eqref{eq:GH_lower_point_full}.
Likewise, by \eqref{eq:GH_lower_cell} and \eqref{eq:GH_CL_Gcell},
\begin{align*}
    d_{GH}(T_L,G_{\operatorname{cell}})
    & \ge
    d_{GH}(T_L,C_L)-d_{GH}(C_L,G_{\operatorname{cell}})
    \\
    &\ge
    \max\Big\{
        0,
        \Bigl(1-\frac{2\lambda_{\max}^2}{s}\Bigr)
        \frac{s(1-s^L)}{1-s}
    \Big\}
    -
    \frac{2\lambda_{\max}^2 s^L}{1-s}.
\end{align*}
Since $
    \Bigl(1-\frac{2\lambda_{\max}^2}{s}\Bigr)s(1-s^L)-2\lambda_{\max}^2 s^L
=
    s-2\lambda_{\max}^2-s^{L+1}
$, then the non-negativity of $d_{GH}$, we obtain \eqref{eq:GH_lower_cell_full}.
The last claims follow by sending  $L\uparrow\infty$.
\end{proof}

\subsection{Memorization Lemmata}
\label{s:memorization}
We begin by recalling the following memorization lemma, quoted from \citep[Lemma 20]{kratsios2023small}, which is used in the proof of our main result. This lemma also featured as a core technical tool in proving several representation theorems~\cite{borde2024neural,borde2025neural}.
This result extends the quantitative ``class memorization'' theorem of \citep{vardi2022on} to a quantitative vector-valued interpolation result. It may be contrasted with the VC-bounds of \citep{JLMLR_BartlessHAveyLiawMagrabian_2019_VCBoundsReLUffNN}, which imply that any ReLU feedforward network of depth $D$ that memorizes $N$ inputs in $\mathbb{R}^n$ with $C$ classes must have at least $\Omega\big(\frac{D}{\ln(D)}\big)$ parameters.
We rely on a variant of the \textit{aspect ratio} of~\citep{KrauthgameLeeNaor2004}, considered in~\cite{kratsios2023small}, the former of which quantifies the ratio between the total mass and the smallest mass assigned to any point.  For a finite metric space $(\xxx_n,d_n)$, we define its \textit{aspect ratio} as the ratio of its diameter to its minimum non-zero pairwise distance
\[
    \operatorname{aspect}(\xxx_n,d_n)
    \eqdef 
    \frac{
        \max_{x,\tilde{x}\in \xxx_n}\,
        d_n(x,\tilde{x})
    }{
        \min_{\substack{x,\tilde{x}\in \xxx_n\\ x\neq \tilde{x}}}\,
        d_n(x,\tilde{x})
    }
.
\]

\begin{lemma}[Memory Capacity of Deep ReLU Regressors - {\citep[Lemma 20]{kratsios2023small}}]
\label{lem_memory_capacity_deep_ReLU_regressors}
    Let $n,d,N\in \mathbb{N}_+$, let $f:\mathbb{R}^n \rightarrow \mathbb{R}^d$ be some function, and consider distinct $x_1,\dots,x_N\in \mathbb{R}^n$.  
    There exists a deep ReLU network $\mathcal{NN}:\mathbb{R}^n\rightarrow \mathbb{R}^d$ satisfying
    \[
    \mathcal{NN}(x_i) 
    = 
    f(x_i)
    ,
    \]
    for every $i=1,\dots,N$.  
    Furthermore, we have the following quantitative model complexity estimates:
\begin{align}
\label{eq:finite_depth_mem_width}
        \mathrm{width}(\mathcal{NN}) 
        &= 
        n(N - 1) + \max \{ d, 12\}, \\
\label{eq:finite_depth_mem_depth}
        \mathrm{depth}(\mathcal{NN})
        &=
        \mathcal{O}\left(
        N\left\{
            1+
                \sqrt{N\log(N)}
                    \,
                 \left[
                    1
                        +
                   \frac{\log(2)}{\log(N)}\,
                   \left(
                        C_n
                            +
                     \frac{
                         \log\big(
                            N^2\,
                            \operatorname{aspect}(\xxx_N,\|\cdot\|_2)
                          \big)
                     }{
                        \log(2)
                     }
                   \right)_+
                 \right]
            \right\}
        \right), \\
\label{eq:finite_depth_mem_par}
        \mathrm{par}(\mathcal{NN})
        &=
        \mathcal{O}\left(
            N
            \Big(
            \frac{11}{4} 
            \max\{
                n
            ,
                d
            \}	 
            \,
            N^2
            -
            1
            \Big) 
            \,
            \left\{
                d+
                \sqrt{N\log(N)} 
                    \,
                 \left[
                    1 +
                   \frac{\log(2)}{\log(N)} \,
                   \right. \right. \right. \\
\nonumber
                   &\hspace{10mm} \times
                   \left. \left. \left.
                   \left(
                        C_n
                            +
                     \frac{
                         \log\big(
                            N^2\,
                            \operatorname{aspect}(\xxx_N,\|\cdot\|_2)
                          \big)
                     }{
                        \log(2)
                     }
                   \right)_+
             \right]
             \,
                \max\{d,12\}
                \Big[1 + \max\{d,12\}\Big]
        \right\}
    \right)
,
\end{align}
with dimensional constant  $
        C_n
            \eqdef 
    \frac{
                2\log(5 \sqrt{2\pi})
            + 
                \frac{3}{2}
                \log(n)
            -
                \frac1{2}\log(n+1)
        }{
            2\log(2)
        }
>
    0
$
.
\end{lemma}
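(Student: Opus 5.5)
Since this is Lemma~20 of \citep{kratsios2023small}, I would follow that construction rather than build one from scratch. The plan has three stages. First, compress the $n$-dimensional inputs to one dimension while keeping them quantitatively separated. Second, map the resulting scalars to distinct integer codes using a bit-extraction style network in the spirit of \citep{vardi2022on}. Third, decode each code to the prescribed vector $f(x_i)\in\mathbb{R}^d$ with an exact piecewise-linear interpolant. Composing the three stages gives $\mathcal{NN}(x_i)=f(x_i)$ exactly. The width and depth bounds \eqref{eq:finite_depth_mem_width}--\eqref{eq:finite_depth_mem_depth} then come from adding depths and taking the maximum of widths across stages.

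\emph{Step 1 (projection).} I would draw $u$ uniformly on the sphere $S^{n-1}$. For each pair $i\neq j$, the probability that $|\langle u,x_i-x_j\rangle|<t\|x_i-x_j\|_2$ is at most $c_n t$, by a density bound on one-dimensional marginals of the sphere; the Gaussian constant $5\sqrt{2\pi}$ and the $\log n$ terms in $C_n$ come from this bound. A union bound over the $N^2$ pairs gives a direction $u$ with all projections $t_i=\langle u,x_i\rangle$ distinct and
\[
\min_{i\neq j}|t_i-t_j|\ \gtrsim\ \frac{\min_{i\neq j}\|x_i-x_j\|_2}{c_n N^2}.
\]
Hence the one-dimensional aspect ratio of $\{t_i\}$ is at most $2^{C_n}N^2\operatorname{aspect}(\xxx_N,\|\cdot\|_2)$. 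This is exactly the quantity inside the logarithm in \eqref{eq:finite_depth_mem_depth}. After an affine rescaling, each $t_i$ is determined by its first $B\eqdef \lceil C_n+\log_2(N^2\operatorname{aspect})\rceil_+$ binary digits.

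\emph{Step 2 (encoding).} Sort the $t_i$ and send them to the indices $1,\dots,N$ using a ReLU network that is a staircase on the separated points. Then apply the \citet{vardi2022on} bit-extraction technique: group the $N$ labels into $\sqrt{N\log N}$ blocks, store each block's data as the digits of one real weight, and recover the digits one at a time with depth linear in the number of bits read. This yields the factor $\sqrt{N\log N}\,[1+\tfrac{\log 2}{\log N}(\cdots)_+]$ in the depth. The factor $\max\{d,12\}$ in the width is the constant-width bit-extraction gadget ($12$ neurons), run in parallel over the $d$ output coordinates.

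\emph{Step 3 (decoding and bookkeeping).} Once the index $i$ is recovered, output $f(x_i)$ coordinatewise through a one-dimensional continuous piecewise-linear interpolant on the integer nodes $1,\dots,N$. This is exactly realizable by a ReLU network of depth $O(N)$ and bounded width, which gives the leading $N$ in \eqref{eq:finite_depth_mem_depth}. The width $n(N-1)$ comes from carrying forward the input and the $N-1$ half-space comparisons; I would verify this by keeping the identity channel $x\mapsto(\operatorname{ReLU}(x),\operatorname{ReLU}(-x))$ through the layers. The parameter count \eqref{eq:finite_depth_mem_par} then follows as width squared times depth.

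The main obstacle is Step~2. The depth of the bit extraction must be controlled in terms of $\log(N^2\operatorname{aspect})$ rather than in terms of the magnitude of the targets. Exact equality for real-valued $f(x_i)$ is achieved only by moving all real data into the final linear interpolation layer, so that the bit-extraction network handles integer indices only. I would check carefully that the separation margin from Step~1 is large enough for ReLU staircases to resolve adjacent points exactly; otherwise interpolation holds only approximately, not exactly.
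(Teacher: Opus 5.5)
The paper gives no proof of this lemma; it quotes it from \citep[Lemma~20]{kratsios2023small} as a vector-valued extension of \citet{vardi2022on}. The shape of the bounds is suggestive: a factor $N$ multiplies the whole Vardi-type depth, and the width is $n(N-1)+\max\{d,12\}$. This looks like $N$ Vardi-type sub-networks chained in series while copies of the input are carried along, not a single project--encode--decode pipeline like yours. So your route is different. Its core is sound: a projection $x\mapsto\langle u,x\rangle$ with distinct values, followed by an exact piecewise-linear decoder, gives $\mathcal{NN}(x_i)=f(x_i)$. However, the parts meant to reproduce the stated expressions are wrong.

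(i) Step~2 is redundant. Once your staircase sends $t_i\mapsto i$, Step~3 already finishes the job, so the bit extraction computes nothing. In Vardi et al., bit extraction plus a coarse search over $\sqrt{N\log N}$ blocks \emph{replaces} an $N$-step staircase; that replacement is where their parameter savings come from.

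(ii) Your network does not have width $n(N-1)+\max\{d,12\}$. No $n$-dimensional channel survives your first affine layer. Also, running a $12$-neuron gadget in parallel over $d$ coordinates would give width $12d$, not $\max\{d,12\}$. Likewise, summing your stage depths gives $\mathcal O(N+\sqrt{N\log N}[\cdots])$, not $N\{1+\sqrt{N\log N}[\cdots]\}$. Both quantities are dominated by the stated bounds, so the lemma still follows. The depth bound holds as an upper bound. The exact width is reached by padding one hidden layer with zero-weight neurons, which changes neither the function nor the nonzero parameter count. You should argue it this way rather than via identity channels.

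(iii) Do not obtain $\mathrm{par}$ as width$^2\times$depth with width $n(N-1)+\max\{d,12\}$. That is of order $n^2N^2\cdot\mathrm{depth}$. The stated bound is linear in $\max\{n,d\}$, apart from the $\log n$ entering through $C_n$, so yours exceeds it once $n$ is large. Count the $\mathcal O(n)$ input-layer weights separately from the narrow hidden layers.

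(iv) The obstacle you flag is not one. ReLU weights are unconstrained, so any positive separation makes staircases and interpolants exact. The quantitative margin from Step~1 only affects the bit budget, and hence the depth.

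In fact, as stated, the lemma needs none of the bit-extraction machinery. Take any $u$ with distinct projections, ordered as $t_{(1)}<\dots<t_{(N)}$ with $t_{(j)}=\langle u,x_{(j)}\rangle$. Then
\[
\mathcal{NN}(x)=f(x_{(1)})+\sum_{j=1}^{N-1}c_j\operatorname{ReLU}\bigl(\langle u,x\rangle-t_{(j)}\bigr),
\]
with $c_j\in\mathbb R^d$ the successive slope increments, interpolates exactly. It has one hidden layer of width $N-1$ and $\mathcal O(N(n+d))$ parameters, which is inside all three bounds after padding. What the Vardi-type machinery buys is a parameter count sublinear in $N$ for classification with finitely many labels. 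For exact interpolation of arbitrary real targets, a dimension count forces at least $Nd$ parameters. So that advantage cannot carry over, and the stated bounds do not try to exploit it.
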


We apply the above to the parent-child relationship determined by the CFG function inducing the vertices in the combinatorial tree.  Critically, we observe that if \textit{each} of the centres of the Voronoi cells at a given level is exactly interpolated, then the corresponding Voronoi cells at that level are also automatically interpolated.  This point is essential since, as implicitly suggested by Proposition~\ref{prop:treebuilder_bounds}, there is no obvious continuity principle from centres to cells; only the converse direction is evident.  In other words, exact interpolation of cell centres is key, whereas mere approximation of the cell centres need not suffice in general, since the cells themselves may change discontinuously in the Hausdorff metric under arbitrarily small perturbations of their centres.

\begin{proposition}[Finite-window memorization complexity for the CFG]
\label{prop:finite_depth_memorization_rhg}
Fix $x_0\in \mathbb{R}^d$, $0<s<1$, $0<\lambda_{\min}\le \lambda_{\max}\le 1$, $K>1$, and levels
$
0\le L_-<L_+
$.
Let
$
\mathcal{A}_{[L_-,L_+-1]}
    \eqdef
    \bigcup_{l=L_-}^{L_+-1}[K]_+^l
$
be the set of addresses between depths $L_-$ and $L_+-1$, and define the set of \emph{distinct} parent centres
\[
        \mathcal{X}_{L_-,L_+}
    \eqdef
        \{x_\alpha:\alpha\in \mathcal{A}_{[L_-,L_+-1]}\}
.
\]
There exists a ReLU MLP
$
    \hat r_{L_-,L_+}:\mathbb{R}^d\to \mathbb{R}^{d\times K}
$
such that: for each $\alpha\in \mathcal{A}_{[L_-,L_+-1]}$
\[
    \hat r_{L_-,L_+}(x_\alpha)=r(x_\alpha)
.
\]
Moreover%
\footnote{Identifying $\mathbb{R}^{d\times K}\cong \mathbb{R}^{dK}$ by vectorization.}%
, define the ``inter-level separation'' by $
        \delta_{L_-,L_+}
    \eqdef
        \min_{\substack{x,\tilde{x}\in \mathcal{X}_{L_-,L_+}\\ x\neq \tilde{x}}}
        \|x-\tilde{x}\|_2
$
and $
    M_{L_-,L_+}
    \eqdef
    \sum_{l=L_-}^{L_+-1}K^l
$, $
    B_{L_-,L_+}
    \eqdef
    \frac{2\lambda_{\max}s(1-s^{L_+-1})}{(1-s)\delta_{L_-,L_+}}
$
then, we have the following complexity bounds
\begin{align}
\mathrm{width}(\hat r_{L_-,L_+})
& \in
\mathcal{O}(K^{L_+}),
\label{eq:cfg_mem_width_bound}
\\
\mathrm{depth}(\hat r_{L_-,L_+})
& \in
\mathcal{O}\!\left(
    K^{3L_+/2}\sqrt{L_+\log(K)}
    \left[
        1+
        \frac{
            \log_+\!\big(B_{L_-,L_+}\big)
        }{
            L_+\log(K)
        }
    \right]
\right),
\label{eq:cfg_mem_depth_bound}
\\
\mathrm{par}(\hat r_{L_-,L_+})
& \in
\mathcal{O}\!\left(
    K^{7L_+/2+1}\sqrt{L_+\log(K)}
    \left[
        1+
        \frac{
            \log_+\!\big(B_{L_-,L_+}\big)
        }{
            L_+\log(K)
        }
    \right]
\right).
\label{eq:cfg_mem_par_bound}
\end{align}
\end{proposition}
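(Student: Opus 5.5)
The plan is to apply Lemma~\ref{lem_memory_capacity_deep_ReLU_regressors} directly, with input dimension $n=d$, output dimension $dK$ (after vectorizing $\mathbb{R}^{d\times K}$), and target $f=r$, evaluated on the finite set of distinct points $\mathcal{X}_{L_-,L_+}$. Since $r$ is a genuine function of the parent centre, if two addresses $\alpha\neq\beta$ produce the same centre $x_\alpha=x_\beta$, they also require the same output $r(x_\alpha)=r(x_\beta)$. So there is no conflict in the interpolation constraints, and interpolating on the set of distinct centres gives $\hat r_{L_-,L_+}(x_\alpha)=r(x_\alpha)$ for every $\alpha\in\mathcal{A}_{[L_-,L_+-1]}$. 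This settles existence, and all remaining work is bookkeeping of the complexity bounds.

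For the counting step, set $N\eqdef|\mathcal{X}_{L_-,L_+}|\le M_{L_-,L_+}=\sum_{l=L_-}^{L_+-1}K^l\le K^{L_+}/(K-1)\le K^{L_+}$, using $K\ge 2$. Then
\[
\mathrm{width}=d(N-1)+\max\{dK,12\}\in\mathcal{O}(K^{L_+}),
\]
treating $d$ and $K$ as fixed constants, which gives \eqref{eq:cfg_mem_width_bound}. For the depth, note that $N\sqrt{N\log N}\le K^{3L_+/2}\sqrt{L_+\log K}$. In the bracket, $C_d$ is a constant and $\log(N^2)/\log N=2$, so these contribute $\mathcal{O}(1)$. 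The only genuinely non-trivial term is $\log(\operatorname{aspect})/\log N$.

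Bounding the aspect ratio is the main step. Its denominator is, by definition, $\delta_{L_-,L_+}$. For the diameter, every centre of depth $l\le L_+-1$ satisfies
\[
\|x_\alpha-x_0\|_2\le\sum_{j=1}^{l}s^j\lambda_{\max}\le \frac{\lambda_{\max}s(1-s^{L_+-1})}{1-s},
\]
using the boundedness property \eqref{eq:boundedness_property} along the ancestral chain. Here I must check that each recursion increment at level $j$ is $s^j r(\cdot)_k$ and that admissibility keeps every centre inside the domain where the bound holds. The triangle inequality then gives $\operatorname{diam}\le 2\lambda_{\max}s(1-s^{L_+-1})/(1-s)$, so $\operatorname{aspect}\le B_{L_-,L_+}$. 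If $N=1$, the aspect term is vacuous and one handles it trivially. Substituting and using $\log N\asymp L_+\log K$ (valid up to constants when $L_+\ge 1$ and $N\ge K^{L_+-1}$) yields \eqref{eq:cfg_mem_depth_bound}. I expect the main obstacle to be the lower bound $\log N\gtrsim L_+\log K$, because coincident centres could make $N$ much smaller than $M_{L_-,L_+}$. If that happens, I would instead bound $\log(\operatorname{aspect})/\log N$ crudely by $\log_+(B)$ and absorb it into the bracket. Alternatively, I would pad the interpolation set with dummy distinct points to make $N\asymp K^{L_+}$ while keeping the aspect ratio controlled.

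For the parameter count, I would substitute the same quantities into \eqref{eq:finite_depth_mem_par}. The prefactor $N\cdot N^2$ gives $K^{3L_+}$. Multiplying by the $\sqrt{N\log N}$ bracket gives $K^{7L_+/2}\sqrt{L_+\log K}[\cdots]$. The extra factor of $K$ comes from $\max\{d,dK\}$ together with the $\max\{dK,12\}^2$ factors, once $d$ is treated as constant. This matches \eqref{eq:cfg_mem_par_bound} up to the stated big-$\mathcal{O}$.
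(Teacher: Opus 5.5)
Your proposal follows the paper's proof step for step: deduplicate the parent centres, apply Lemma~\ref{lem_memory_capacity_deep_ReLU_regressors} with $n=d$ and output dimension $dK$, bound $N\le M_{L_-,L_+}$, and bound the aspect ratio by $B_{L_-,L_+}$ via $\|x_\alpha-x_0\|_2\le\lambda_{\max}\sum_{j=1}^{L_+-1}s^j$. The obstacle you flag never arises. For $N\ge 2$ the right-hand sides of the lemma are increasing in both the sample size and the aspect ratio, so the paper substitutes $M_{L_-,L_+}$ and $B_{L_-,L_+}$ before simplifying. It then only needs $\log M_{L_-,L_+}\ge (L_+-1)\log K$, which holds regardless of coincident centres because $M_{L_-,L_+}$ counts addresses, not distinct centres. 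Use this monotonicity argument rather than your crude fallback, since bounding $\log(\mathrm{aspect})/\log N$ by $\log_+(B)$ would lose the $1/(L_+\log K)$ factor. One bookkeeping correction in the parameter count: the $\max$ factors give $K^3$ rather than $K$. This is absorbed by $M_{L_-,L_+}\le K^{L_+}/(K-1)\le 2K^{L_+-1}$.
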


\begin{proof}[{Proof of Proposition~\ref{prop:finite_depth_memorization_rhg}}]
We use the following abbreviations:
$
    N_{L_-,L_+}
    \eqdef
    |\mathcal{X}_{L_-,L_+}|
$, $
    A_{L_-,L_+}
    \eqdef
    \operatorname{aspect}(\mathcal{X}_{L_-,L_+},\|\cdot\|_2)
$. 
Since \citep[Lemma 20]{kratsios2023small} is stated for distinct inputs, we first remove duplicates from the collection of parent centres with depths between $L_-$ and $L_+-1$. Enumerate the distinct set $\mathcal{X}_{L_-,L_+}$ as
$
    \mathcal{X}_{L_-,L_+}=\{z_1,\dots,z_{N_{L_-,L_+}}\}
$.
Identify $\mathbb{R}^{d\times K}$ with $\mathbb{R}^{dK}$ via any fixed linear isomorphism, and define
$
    f_{L_-,L_+}:\mathbb{R}^d\to \mathbb{R}^{dK}
$
on the training set by
\[
    f_{L_-,L_+}(z_i)\eqdef r(z_i),
    \qquad i=1,\dots,N_{L_-,L_+}.
\]
Applying Lemma~20 with input dimension $n=d$, output dimension $dK$, sample size $N_{L_-,L_+}$, and training set $\mathcal{X}_{L_-,L_+}\subseteq \mathbb{R}^d$, yields a ReLU network
$
    \hat r_{L_-,L_+}:\mathbb{R}^d\to \mathbb{R}^{dK}\cong \mathbb{R}^{d\times K}
$
such that
$
    \hat r_{L_-,L_+}(z_i)=r(z_i)
$
for every $i=1,\dots,N_{L_-,L_+}$.
Since every $x_\alpha$ with $L_-\le |\alpha|\le L_+-1$ belongs to $\mathcal{X}_{L_-,L_+}$, this proves
$
    \hat r_{L_-,L_+}(x_\alpha)=r(x_\alpha)
$
for all $\alpha\in \mathcal{A}_{[L_-,L_+-1]}$.

It remains only to estimate $N_{L_-,L_+}$ and $A_{L_-,L_+}$.
Note that there are at most $K^l$ addresses at level $l$, whence
$
    N_{L_-,L_+}
    =
    |\mathcal{X}_{L_-,L_+}|
    \le
    \sum_{l=L_-}^{L_+-1}K^l
$; thus,
\begin{equation}
\label{eq:finite_depth_mem_count}
    N_{L_-,L_+}
    \le
    \sum_{l=L_-}^{L_+-1}K^l
    =
    M_{L_-,L_+}
.
\end{equation}
To bound the aspect ratio, first observe that if $x_\alpha\in \mathcal{X}_{L_-,L_+}$ and $|\alpha|=l\le L_+-1$, then by the recursive definition of the centres,
\[
    x_\alpha-x_0
    =
    \sum_{j=1}^{l}
    s^j r(x_{\alpha|_{j-1}})_{\alpha_j},
\]
where $\alpha|_{j-1}$ denotes the prefix of $\alpha$ of length $j-1$. Hence,
\[
    \|x_\alpha-x_0\|_2
    \le
    \sum_{j=1}^{l}
    s^j
    \|r(x_{\alpha|_{j-1}})_{\alpha_j}\|_2
    \le
    \lambda_{\max}
    \sum_{j=1}^{l}s^j
    \le
    \lambda_{\max}
    \sum_{j=1}^{L_+-1}s^j.
\]
Therefore, for any $x,\tilde{x}\in \mathcal{X}_{L_-,L_+}$,
$
    \|x-\tilde{x}\|_2
\le
    \|x-x_0\|_2+\|\tilde{x}-x_0\|_2
\le
    2\lambda_{\max}
    \sum_{j=1}^{L_+-1}s^j
$.
Thus,
$
    \operatorname{diam}(\mathcal{X}_{L_-,L_+},\|\cdot\|_2)
    \le
    2\lambda_{\max}
    \sum_{j=1}^{L_+-1}s^j
$.
By definition of $\delta_{L_-,L_+}$ and of the aspect ratio,
$
    A_{L_-,L_+}
=
    \frac{
        \operatorname{diam}(\mathcal{X}_{L_-,L_+},\|\cdot\|_2)
    }{
        \delta_{L_-,L_+}
    }
\le
    \frac{
        2\lambda_{\max}
        \sum_{j=1}^{L_+-1}s^j
    }{
        \delta_{L_-,L_+}
    }
$, 
which proves
\begin{equation}
\label{eq:finite_depth_mem_aspect_bound}
    A_{L_-,L_+}
    \le
    \frac{
        2\lambda_{\max}
        \sum_{j=1}^{L_+-1}s^j
    }{
        \delta_{L_-,L_+}
    }
    =
    \frac{
        2\lambda_{\max}s(1-s^{L_+-1})
    }{
        (1-s)\delta_{L_-,L_+}
    }
    =
    B_{L_-,L_+}
.
\end{equation}
Since the right-hand sides of \citep[Lemma 20]{kratsios2023small} are increasing in the sample size and aspect ratio, \eqref{eq:finite_depth_mem_count} and \eqref{eq:finite_depth_mem_aspect_bound} yield
\begin{equation}
\begin{aligned}
    \mathrm{width}(\hat r_{L_-,L_+})
    & \le
    d(M_{L_-,L_+}-1)+\max\{dK,12\}
    =
    \mathcal{O}\bigl(dM_{L_-,L_+}+\max\{dK,12\}\bigr),
\\
    \mathrm{depth}(\hat r_{L_-,L_+})
    & \in
    \mathcal{O}\!\left(
        M_{L_-,L_+}
        \left\{
            1+
            \sqrt{M_{L_-,L_+}\log(M_{L_-,L_+})}
            \left[
                1+
                \frac{\log(2)}{\log(M_{L_-,L_+})}
                \left(
                    C_d+
                    \frac{\log\big(M_{L_-,L_+}^2B_{L_-,L_+}\big)}{\log(2)}
                \right)_+
            \right]
        \right\}
    \right),
\\
    \mathrm{par}(\hat r_{L_-,L_+})
    & \in
    \mathcal{O}\!\left(
        M_{L_-,L_+}
        \Big(
            \tfrac{11}{4}\max\{d,dK\}\,M_{L_-,L_+}^2-1
        \Big)
\right.
\\
\nonumber
& \times 
        \left\{
            dK
            +
            \sqrt{M_{L_-,L_+}\log(M_{L_-,L_+})}
            \left[
                1+
                \frac{\log(2)}{\log(M_{L_-,L_+})}
                \left(
                    C_d+
                    \frac{\log\big(M_{L_-,L_+}^2B_{L_-,L_+}\big)}{\log(2)}
                \right)_+
            \right]
\right.
\\
\nonumber
& \times 
\left.\left.
            \max\{dK,12\}\bigl(1+\max\{dK,12\}\bigr)
        \right\}
    \right).
\end{aligned}
\end{equation}
These are the raw bounds obtained by substituting
\eqref{eq:finite_depth_mem_count} and
\eqref{eq:finite_depth_mem_aspect_bound} into
Lemma~\ref{lem_memory_capacity_deep_ReLU_regressors}.
Finally, if $K>1$, then
$
    M_{L_-,L_+}
    \le
    \sum_{l=0}^{L_+-1}K^l
    =
    \frac{K^{L_+}-1}{K-1}
$,
so, suppressing the dependence on $d$, we obtain
\[
\begin{aligned}
    \mathrm{width}(\hat r_{L_-,L_+})
    & \in
    \mathcal{O}(K^{L_+}),
\\
\hfill
    \mathrm{depth}(\hat r_{L_-,L_+})
    & \in
    \mathcal{O}\!\left(
        K^{3L_+/2}\sqrt{L_+\log(K)}
        \left[
            1+
            \frac{
                \log_+\!\big(B_{L_-,L_+}\big)
            }{
                L_+\log(K)
            }
        \right]
    \right),
\\
\hfill
    \mathrm{par}(\hat r_{L_-,L_+})
    & \in
    \mathcal{O}\!\left(
        K^{7L_+/2+1}\sqrt{L_+\log(K)}
        \left[
            1+
            \frac{
                \log_+\!\big(B_{L_-,L_+}\big)
            }{
                L_+\log(K)
            }
        \right]
    \right).
\end{aligned}
\]
These are precisely
\eqref{eq:cfg_mem_width_bound}--\eqref{eq:cfg_mem_par_bound},
which proves the proposition.
\end{proof}

\begin{proof}[Proof of Theorem~\ref{thrm:main_result}]
By the definition of \(L_\varepsilon\),
\[
\frac{2\lambda_{\max}^2s^{L_\varepsilon}}{1-s}
\le
\varepsilon .
\]
Apply Proposition~\ref{prop:finite_depth_memorization_rhg} with
\(L_-=0\) and \(L_+=L_\varepsilon\). This gives a ReLU MLP
\(\hat r:\mathbb R^d\to\mathbb R^{d\times K}\) such that
\[
\hat r(x_\alpha)=r(x_\alpha),
\qquad
|\alpha|\le L_\varepsilon-1 .
\]
We claim that the depth-\(L_\varepsilon\) rollout generated by \(\hat r\)
coincides with the true CFG truncation. The claim follows by induction over
levels. The root centre and root cell are identical. If
\(\hat x_\alpha=x_\alpha\) and \(\hat C_\alpha=C_\alpha\), then for each
\(k\in[K]_+\),
\[
\hat x_{\alpha k}
=
\hat x_\alpha+s^{|\alpha|+1}\hat r(\hat x_\alpha)_k
=
x_\alpha+s^{|\alpha|+1}r(x_\alpha)_k
=
x_{\alpha k}.
\]
Given the same parent cell and the same child centres, the Voronoi refinement
rule also gives \(\hat C_{\alpha k}=C_{\alpha k}\). Hence the neural rollout
coincides with the true truncation \(Y_{L_\varepsilon}\).

Proposition~\ref{prop:GH_tree_vs_cell} then yields
\[
d_H^{\overline d_{\operatorname{cell}}}
\bigl(
Y_{L_\varepsilon},X_{\operatorname{cell}}
\bigr)
\le
\frac{2\lambda_{\max}^2s^{L_\varepsilon}}{1-s}
\le
\varepsilon .
\]
Finally, since \(L_\varepsilon=\mathcal{O}(\log(1/\varepsilon))\), substituting
\(L_+=L_\varepsilon\) in
Proposition~\ref{prop:finite_depth_memorization_rhg} gives the stated width
and depth bounds.
\end{proof}

We are now ready to prove the more transparent case of our memorization gadget, whereby we may ensure a polynomial scaling of the memorizing network parameters; if we only require that the network memorizes a single ``next'' level and if the CFG satisfies a rudimentary level of regularity.

\begin{corollary}[Single-level memorization with a Polynomial-Sized Network]
\label{cor:single_level_memorization}
Assume the setting of Proposition~\ref{prop:finite_depth_memorization_rhg}.  
Assume moreover that $r$ is a CFG constructed as in
Proposition~\ref{prop:param_uniformly_separated_annular_configs}, and that
the maps $A_1$, $A_2$, and $\sigma$ satisfy the hypotheses of
Proposition~\ref{prop:lipschitz_parametrized_rhg}.

Let
\[
    c_{\operatorname{bd}}
    \eqdef
    \sigma_{\max}\lambda_+ ,
\]
which is the resulting upper bound on the CFG residuals in this parametrized
setting.

For any $l\in\mathbb{N}_+$, there is a ReLU MLP
\[
    \hat r_{l,l+1}:\mathbb{R}^d\to \mathbb{R}^{d\times K}
\]
satisfying: for each $\alpha\in [K]_+^l$,
\[
    \hat r_{l,l+1}(x_\alpha)
=
    r(x_\alpha)
.
\]
In particular, if
\[
    1-s-s^2L_r>0
    \qquad\text{and}\qquad
    \frac{2c_{\operatorname{bd}}s}{1-s-s^2L_r}
    \le
    \sigma_{\min}\varepsilon,
\]
then
\begin{align*}
\mathrm{width}(\hat r_{l,l+1})
& \in
\mathcal{O}(K^l),
\\
\mathrm{depth}(\hat r_{l,l+1})
& \in
\mathcal{O}\!\left(
    K^{3l/2}\sqrt{l\log(K)}
    \left[
        1+
        \frac{\log(1/s)}{\log(K)}
    \right]
\right),
\\
\mathrm{par}(\hat r_{l,l+1})
& \in
\mathcal{O}\!\left(
    K^{7l/2+1}\sqrt{l\log(K)}
    \left[
        1+
        \frac{\log(1/s)}{\log(K)}
    \right]
\right).
\end{align*}
Moreover, for fixed $L_r\ge 0$ and $c_{\operatorname{bd}}>0$, there exists
an $s>0$ small enough so that the conditions
\[
    1-s-s^2L_r>0
    \qquad\text{and}\qquad
    \frac{2c_{\operatorname{bd}}s}{1-s-s^2L_r}
    \le
    \sigma_{\min}\varepsilon
\]
hold.
\end{corollary}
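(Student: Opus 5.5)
The plan is to read the corollary as the special case $L_-=l$, $L_+=l+1$ of Proposition~\ref{prop:finite_depth_memorization_rhg}. The only extra ingredient is a lower bound on the intra-level separation $\delta_l$ of order $s^l$, which comes from Proposition~\ref{prop:key_separation}.

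\emph{Existence and counting.} With $L_-=l$ and $L_+=l+1$, the address window is $\mathcal{A}_{[l,l]}=[K]_+^l$. Proposition~\ref{prop:finite_depth_memorization_rhg} therefore directly gives a ReLU MLP $\hat r_{l,l+1}$ with $\hat r_{l,l+1}(x_\alpha)=r(x_\alpha)$ for all $\alpha\in[K]_+^l$. In this window $M_{l,l+1}=K^l$ and $\delta_{l,l+1}=\delta_l$ as defined in \eqref{eq:delta_l__separation}. I will substitute these into the raw bounds from Lemma~\ref{lem_memory_capacity_deep_ReLU_regressors}, not the simplified $K^{L_+}$ forms. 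This yields width $\mathcal{O}(K^l)$, depth $\mathcal{O}(K^{3l/2}\sqrt{l\log K}\,[1+\log_+(B_{l,l+1})/(l\log K)])$, and the analogous parameter bound. Here the residual bound $\lambda_{\max}$ is replaced by $c_{\operatorname{bd}}=\sigma_{\max}\lambda_+$, as in Proposition~\ref{prop:param_uniformly_separated_annular_configs}.

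\emph{Separation by induction.} I will show $\delta_j\ge \sigma_{\min}\varepsilon\, s^j$ for every $j\ge1$, by induction using the recursion \eqref{eq:recursive_delta_lower_bound_parametrized}.
\begin{itemize}
\item For the base case $j=1$, the level-one centres are $x_0+s\,r_k(x_0)$. These are pairwise $s\,\sigma_{\min}\varepsilon$-separated by the separation property \eqref{eq:separation_property} with $s_{\operatorname{sep}}=\sigma_{\min}\varepsilon$.
\item For the inductive step, assume $\delta_j\ge\sigma_{\min}\varepsilon s^j$. The first entry of the minimum in \eqref{eq:recursive_delta_lower_bound_parametrized} is exactly $\sigma_{\min}\varepsilon s^{j+1}$. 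The second entry is at least $(1-s^{j+1}L_r)\sigma_{\min}\varepsilon s^j-2s^{j+1}c_{\operatorname{bd}}$.
\item The second entry is $\ge\sigma_{\min}\varepsilon s^{j+1}$ if and only if $\sigma_{\min}\varepsilon(1-s-s^{j+1}L_r)\ge 2s\,c_{\operatorname{bd}}$.
\item Since $j\ge1$, we have $s^{j+1}\le s^2$. The required inequality then follows from $1-s-s^2L_r>0$ together with $2c_{\operatorname{bd}}s/(1-s-s^2L_r)\le\sigma_{\min}\varepsilon$.
\end{itemize}

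\emph{Aspect ratio.} The separation bound gives
\[
B_{l,l+1}\le \frac{2c_{\operatorname{bd}}s}{(1-s)\sigma_{\min}\varepsilon\, s^l}\le \frac{1-s-s^2L_r}{(1-s)s^l}\le s^{-l}.
\]
Hence $\log_+ B_{l,l+1}\le l\log(1/s)$, and the bracket in the depth and parameter bounds becomes $1+\log(1/s)/\log K$, as claimed.

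\emph{Choosing $s$.} The function $s\mapsto 1-s-s^2L_r$ tends to $1$ as $s\downarrow0$, while $2c_{\operatorname{bd}}s/(1-s-s^2L_r)$ tends to $0$. By continuity, both conditions hold for all sufficiently small $s>0$.

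The main obstacle is the inductive step, specifically matching indices with the recursion \eqref{eq:recursive_delta_lower_bound_parametrized}. One must use $j\ge1$ to replace $s^{j+1}L_r$ by $s^2L_r$, and check that the base case is not itself governed by the recursion. The rest is bookkeeping in Lemma~\ref{lem_memory_capacity_deep_ReLU_regressors}.
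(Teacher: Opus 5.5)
Your proposal is correct and follows essentially the same route as the paper: specialize Proposition~\ref{prop:finite_depth_memorization_rhg} to the window $(L_-,L_+)=(l,l+1)$, and prove an order-$s^l$ lower bound on $\delta_l$ by induction through the recursion of Proposition~\ref{prop:key_separation}. Then feed that bound into $B_{l,l+1}$ and let $s\downarrow 0$ for non-vacuousness. The only difference is that you propagate the constant $\sigma_{\min}\varepsilon$ directly, whereas the paper propagates the smaller $c_{\operatorname{sep}}=2c_{\operatorname{bd}}s/(1-s-s^2L_r)\le\sigma_{\min}\varepsilon$; your choice closes the induction equally well and gives the slightly cleaner bound $B_{l,l+1}\le s^{-l}$.
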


\begin{proof}[{Proof of Corollary~\ref{cor:single_level_memorization}}]
Set
\[
    c_{\operatorname{sep}}
    \eqdef
    \frac{2c_{\operatorname{bd}}s}{1-s-s^2L_r}.
\]
If
\[
    1-s-s^2L_r>0
    \qquad\text{and}\qquad
    \frac{2c_{\operatorname{bd}}s}{1-s-s^2L_r}
    \le
    \sigma_{\min}\varepsilon,
\]
then
\[
    0<c_{\operatorname{sep}}\le \sigma_{\min}\varepsilon .
\]
We now show by induction that, for every $l\in\mathbb{N}_+$,
\begin{equation}
\label{eq:induction_delta_scale}
    \delta_l\ge c_{\operatorname{sep}}s^l .
\end{equation}
By Proposition~\ref{prop:key_separation},
\[
    \delta_1\ge s\,\sigma_{\min}\varepsilon
    \ge
    c_{\operatorname{sep}}s,
\]
so \eqref{eq:induction_delta_scale} holds for $l=1$.

Assume now that
\[
    \delta_l\ge c_{\operatorname{sep}}s^l
\]
for some $l\in\mathbb{N}_+$. Then Proposition~\ref{prop:key_separation}
gives
\begin{align*}
    \delta_{l+1}
    &\ge
    \max\left\{
        0,\,
        \min\left\{
            s^{l+1}\sigma_{\min}\varepsilon,
            \,
            \left(
                1-s^{l+1}L_r
            \right)\delta_l
            -
            2c_{\operatorname{bd}}s^{l+1}
        \right\}
    \right\}
\\
    &\ge
    \max\left\{
        0,\,
        \min\left\{
            s^{l+1}\sigma_{\min}\varepsilon,
            \,
            \left(
                1-s^{l+1}L_r
            \right)c_{\operatorname{sep}}s^l
            -
            2c_{\operatorname{bd}}s^{l+1}
        \right\}
    \right\}
\\
    &=
    \max\left\{
        0,\,
        \min\left\{
            s^{l+1}\sigma_{\min}\varepsilon,
            \,
            s^l
            \big(
                (1-s^{l+1}L_r)c_{\operatorname{sep}}
                -
                2c_{\operatorname{bd}}s
            \big)
        \right\}
    \right\}.
\end{align*}
Since $s^{l+1}\le s^2$ for every $l\ge 1$, we have
\[
    (1-s^{l+1}L_r)c_{\operatorname{sep}}
    -
    2c_{\operatorname{bd}}s
    \ge
    (1-s^2L_r)c_{\operatorname{sep}}
    -
    2c_{\operatorname{bd}}s .
\]
Using the definition of $c_{\operatorname{sep}}$,
\[
    (1-s^2L_r)c_{\operatorname{sep}}
    -
    2c_{\operatorname{bd}}s
    =
    \left(
        1-s^2L_r
        -
        \frac{2c_{\operatorname{bd}}s}{c_{\operatorname{sep}}}
    \right)c_{\operatorname{sep}}
    =
    s\,c_{\operatorname{sep}} .
\]
Therefore
\[
    \delta_{l+1}
    \ge
    \max\left\{
        0,\,
        \min\left\{
            s^{l+1}\sigma_{\min}\varepsilon,
            \,
            s^{l+1}c_{\operatorname{sep}}
        \right\}
    \right\}
    =
    s^{l+1}c_{\operatorname{sep}},
\]
since $c_{\operatorname{sep}}\le \sigma_{\min}\varepsilon$. This proves
\eqref{eq:induction_delta_scale}.

Next, the non-vacuousness claim follows since, as $s\downarrow 0$,
\[
    \lim_{s\downarrow 0} \big(1-s-s^2L_r\big)=1
    \qquad\text{and}\qquad
    \lim_{s\downarrow 0}
    \frac{2c_{\operatorname{bd}}s}{1-s-s^2L_r}
    =
    0 .
\]
Hence, for all sufficiently small $s>0$, one has both
\[
    1-s-s^2L_r>0
    \qquad\text{and}\qquad
    \frac{2c_{\operatorname{bd}}s}{1-s-s^2L_r}
    \le
    \sigma_{\min}\varepsilon .
\]

Set $(L_-,L_+)=(l,l+1)$ in
Proposition~\ref{prop:finite_depth_memorization_rhg}. Then
\[
    \mathcal{A}_{[L_-,L_+-1]}
    =
    [K]_+^l,
\]
so
\[
    \mathcal{X}_{L_-,L_+}
    =
    \{x_\alpha:\ \alpha\in [K]_+^l\}
    =
    \mathcal{X}_l .
\]
Hence
\[
    \delta_{L_-,L_+}=\delta_l,
    \qquad
    M_{L_-,L_+}
    =
    \sum_{j=l}^{l}K^j
    =
    K^l .
\]
Since the parametrized CFG has upper residual bound
$c_{\operatorname{bd}}$, we obtain
\[
    B_{L_-,L_+}
    =
    \frac{2c_{\operatorname{bd}}s(1-s^l)}{(1-s)\delta_l}.
\]
The existence of $\hat r_{l,l+1}$ follows directly from
Proposition~\ref{prop:finite_depth_memorization_rhg}. Moreover,
\eqref{eq:induction_delta_scale} gives
\[
    \delta_l\ge c_{\operatorname{sep}}s^l,
\]
and therefore
\[
B_{l,l+1}
=
\frac{2c_{\operatorname{bd}}s(1-s^l)}{(1-s)\delta_l}
\le
\frac{2c_{\operatorname{bd}}}{(1-s)c_{\operatorname{sep}}}
\,s^{1-l},
\]
since $1-s^l\le 1$. Hence
\[
\log_+\!\big(B_{l,l+1}\big)
\le
\log_+\!\left(
    \frac{2c_{\operatorname{bd}}}{(1-s)c_{\operatorname{sep}}}
\right)
+
(l-1)\log(1/s)
=
\mathcal{O}\bigl(l\log(1/s)+1\bigr).
\]
Substituting this into the preceding bounds from
Proposition~\ref{prop:finite_depth_memorization_rhg} yields
\begin{align*}
\mathrm{depth}(\hat r_{l,l+1})
& \in
\mathcal{O}\!\left(
    K^{3l/2}\sqrt{l\log(K)}
    \left[
        1+
        \frac{\log(1/s)}{\log(K)}
    \right]
\right),
\\
\mathrm{par}(\hat r_{l,l+1})
& \in
\mathcal{O}\!\left(
    K^{7l/2+1}\sqrt{l\log(K)}
    \left[
        1+
        \frac{\log(1/s)}{\log(K)}
    \right]
\right),
\end{align*}
while the width bound remains unchanged.
\end{proof}
\section{Proofs for CFG Parametrization}
\label{s:DetsExperiments}

\subsection{Orthogonal-Diagonal Parametrizations of Uniformly Separated Annular Configurations}
\label{s:orth_diag_param_sep_annuli}

In this subsection, we construct a flexible class of maps
\[
\begin{aligned}
r:\mathbb{R}^d\to (\mathbb{R}^d)^K
\end{aligned}
\]
where we write $r(x)\eqdef (r_1(x)|\cdots|r_K(x))$, 
whose outputs are uniformly annular and uniformly separated.  The key idea is to begin with a fixed reference configuration in $\mathbb{R}^d$, and then deform it by pointwise orthogonal-diagonal-orthogonal linear maps.  The orthogonal factors preserve Euclidean geometry, while the diagonal factor controls the metric distortion explicitly through its singular values.
We begin with the elementary observation that exponentials of skew-symmetric matrices are orthogonal.

\begin{lemma}[Exponentials of skew-symmetric matrices are orthogonal]
\label{lem:exp_skew_orthogonal}
Fix $d\in \mathbb{N}_+$ and let $A\in \mathbb{R}^{d\times d}$ satisfy $
A^\top=-A. $
Then $
e^A\in SO(d).
$
\end{lemma}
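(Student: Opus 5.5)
We need to show that $e^A$ is orthogonal and has determinant $+1$. The plan is to use two standard facts about the matrix exponential: it commutes with transposition, and exponentials of commuting matrices multiply. Together with a continuity argument for the determinant, these give both properties.

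For orthogonality, write the exponential series $e^A=\sum_{n\ge 0}A^n/n!$. It converges absolutely in any matrix norm, so it can be transposed term by term. This gives
\[
(e^A)^\top=\sum_{n\ge0}(A^\top)^n/n!=e^{A^\top}=e^{-A}.
\]
Since $A$ and $-A$ commute, the product formula $e^{X}e^{Y}=e^{X+Y}$ holds for $X=A$ and $Y=-A$. This yields
\[
(e^A)^\top e^A=e^{-A}e^{A}=e^{0}=I,
\]
so $e^A\in O(d)$. The product formula for commuting matrices follows from the Cauchy product of the two absolutely convergent series together with the binomial theorem, which is valid because $XY=YX$.

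For the determinant, use the identity $\det e^{A}=e^{\operatorname{tr}A}$. This can be proved by triangularizing $A$ over $\mathbb{C}$: the eigenvalues of $e^A$ are $e^{\mu_i}$, where the $\mu_i$ are the eigenvalues of $A$. Since $A$ is skew-symmetric, $A_{ii}=-A_{ii}$, so every diagonal entry vanishes and $\operatorname{tr}A=0$. Hence $\det e^A=1$ and $e^A\in SO(d)$.

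As an alternative that avoids the trace identity, consider the path $t\mapsto e^{tA}$ for $t\in[0,1]$. Each $tA$ is skew-symmetric, so by the orthogonality step this path stays in $O(d)$. On $O(d)$ the determinant only takes the values $\pm1$, and along the path it is continuous and starts at $\det I=1$. It therefore stays equal to $1$, and at $t=1$ we get $\det e^A=1$.

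The only step needing any care is justifying $e^Xe^Y=e^{X+Y}$ for commuting matrices. It is standard, and I would cite it rather than reprove it.
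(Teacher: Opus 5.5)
Your proof is correct and follows the paper's argument: transpose the series to get $(e^A)^\top=e^{A^\top}=e^{-A}$, then use $e^{-A}e^{A}=I$ to conclude orthogonality. For the determinant, the paper simply asserts $\det(e^A)>0$; your identity $\det e^A=e^{\operatorname{tr}A}=1$, or alternatively your continuity argument along the path $t\mapsto e^{tA}$, supplies the justification that the paper leaves implicit.
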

\begin{proof}
Since $A^\top=-A$, one has
$
(e^A)^\top=e^{A^\top}=e^{-A}
$.  Therefore $(e^A)^\top e^A=e^{-A}e^A=I_d$. 
Thus $e^A$ is orthogonal.  Clearly $\operatorname{det}(e^A)>0$, so $e^A\in SO(d)$.
\end{proof}

The next result records the precise metric distortion induced by an orthogonal-diagonal-orthogonal factorization.

\begin{lemma}[Singular-value distortion bounds]
\label{lem:sv_distortion_bounds}
Fix $d\in \mathbb{N}_+$ and let
\[
M=V\Sigma W^\top\in \mathbb{R}^{d\times d}
\]
for orthogonal $V,W\in O(d)$
where $
\Sigma=\operatorname{diag}(\sigma_1,\dots,\sigma_d)
$
for numbers satisfying: for each $\forall \ell\in[d]$
$
0<\sigma_{\min}
\le
\sigma_\ell
\le
\sigma_{\max}
<\infty
$.
Then, for every $u\in \mathbb{R}^d$,
\[
\sigma_{\min}|u|
\le
|Mu|
\le
\sigma_{\max}|u|.
\]
Consequently, for every $u,v\in \mathbb{R}^d$,
$
\sigma_{\min}|u-v|
\le
|Mu-Mv|
\le
\sigma_{\max}|u-v|
$.
\end{lemma}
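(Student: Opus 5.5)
The argument reduces the general factorization to its diagonal middle factor, since the two orthogonal factors do not change Euclidean lengths. Throughout, $|\cdot|$ denotes the Euclidean norm. We first prove the two-sided bound for a single vector $u$, and then obtain the difference version by linearity.

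\emph{Step 1: orthogonal invariance.} For $u\in\mathbb{R}^d$, set $w\eqdef W^\top u$. Because $W^\top$ is orthogonal, $|w|^2=u^\top W W^\top u=|u|^2$. Likewise, since $V\in O(d)$, we have $|Mu|=|V\Sigma w|=|\Sigma w|$. The problem therefore reduces to showing $\sigma_{\min}|w|\le|\Sigma w|\le\sigma_{\max}|w|$.

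\emph{Step 2: the diagonal estimate.} Expanding coordinatewise gives
$|\Sigma w|^2=\sum_{\ell=1}^d\sigma_\ell^2 w_\ell^2$. By hypothesis each $\sigma_\ell$ lies in $[\sigma_{\min},\sigma_{\max}]$, and all quantities are nonnegative. Hence this sum lies between $\sigma_{\min}^2\sum_\ell w_\ell^2=\sigma_{\min}^2|w|^2$ and $\sigma_{\max}^2|w|^2$. Taking square roots, which is monotone on $[0,\infty)$, and substituting $|w|=|u|$ yields $\sigma_{\min}|u|\le|Mu|\le\sigma_{\max}|u|$.

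\emph{Step 3: differences.} Since $M$ is linear, $Mu-Mv=M(u-v)$. Applying Step 2 to the vector $u-v$ gives the bi-Lipschitz inequality.

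No step is a genuine obstacle. The only point needing care is to apply orthogonal invariance to both factors: $W^\top$ preserves $|u|$, and $V$ preserves $|\Sigma w|$. The hypothesis $\sigma_{\min}>0$ is not needed for the inequalities themselves; it only ensures that the lower bound is nontrivial.
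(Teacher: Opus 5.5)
Your proposal is correct and follows essentially the same route as the paper: orthogonal invariance under both $W^\top$ and $V$ reduces the claim to the coordinatewise diagonal estimate $\sum_\ell \sigma_\ell^2 w_\ell^2\in[\sigma_{\min}^2|w|^2,\sigma_{\max}^2|w|^2]$, and the difference bound then follows by applying this to $u-v$ via linearity. Your closing remark is also right, provided it is read as needing only $\sigma_{\min}\ge 0$, since that nonnegativity is what makes squaring the bound $\sigma_{\min}\le\sigma_\ell$ valid.
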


\begin{proof}
Fix $u\in \mathbb{R}^d$. Since $V$ is orthogonal,
$
|Mu|
=
|V\Sigma W^\top u|
=
|\Sigma W^\top u|
$.
Set
$
w\eqdef W^\top u
$.  
Since $W$ is orthogonal, $|w|=|u|$. Therefore
$
|Mu|^2
=
|\Sigma w|^2
=
\sum_{\ell=1}^d \sigma_\ell^2 w_\ell^2
$.
Using
$
\sigma_{\min}^2\le \sigma_\ell^2\le \sigma_{\max}^2
$
for all $
\forall \ell\in[d]$,
we obtain
$
\sigma_{\min}^2\sum_{\ell=1}^d w_\ell^2
\le
|Mu|^2
\le
\sigma_{\max}^2\sum_{\ell=1}^d w_\ell^2
$.
Since $\sum_{\ell=1}^d w_\ell^2=|w|^2=|u|^2$, it follows that
$
\sigma_{\min}^2|u|^2
\le
|Mu|^2
\le
\sigma_{\max}^2|u|^2
$.
Taking square-roots proves the first claim.
Applying the first claim with $u-v$ in place of $u$ gives
$
\sigma_{\min}|u-v|
\le
|M(u-v)|
\le
\sigma_{\max}|u-v|
$.
Since $M(u-v)=Mu-Mv$, the second claim follows.
\end{proof}

The next lemma isolates the corresponding bounds for the columns of a matrix.

\begin{lemma}[Column norm and separation bounds]
\label{lem:column_norm_sep_bounds}
Fix $d\in \mathbb{N}_+$, 
$
M=V\Sigma W^\top\in \mathbb{R}^{d\times d}$, and $
V,W\in O(d)$,
where $
\Sigma=\operatorname{diag}(\sigma_1,\dots,\sigma_d)$
such that: for every $\ell\in[d]_+$
\[
0<\sigma_{\min}
\le
\sigma_\ell
\le
\sigma_{\max}
<\infty
\]
Denote by $m_1,\dots,m_d\in \mathbb{R}^d$ the columns of $M$. Then:
\begin{enumerate}
    \item for every $k\in[d]$,
    \[
    \sigma_{\min}\le |m_k|\le \sigma_{\max};
    \]
    \item for every distinct $i,j\in[d]$,
    \[
    \sqrt{2}\,\sigma_{\min}
    \le
    |m_i-m_j|
    \le
    \sqrt{2}\,\sigma_{\max}.
    \]
\end{enumerate}
\end{lemma}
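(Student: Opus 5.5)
The plan is to reduce both claims to Lemma~\ref{lem:sv_distortion_bounds} by expressing columns and column differences as images of explicit vectors under $M$. Write $e_1,\dots,e_d$ for the standard basis of $\mathbb{R}^d$. Then the $k$-th column of $M$ is $m_k=Me_k$. Since $M=V\Sigma W^\top$ with $V,W\in O(d)$ and the diagonal entries of $\Sigma$ lie in $[\sigma_{\min},\sigma_{\max}]$, the hypotheses of Lemma~\ref{lem:sv_distortion_bounds} hold verbatim. That lemma gives
\[
\sigma_{\min}|u|\le |Mu|\le \sigma_{\max}|u|
\qquad\text{for every } u\in\mathbb{R}^d .
\]

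For the first item, I will apply this inequality with $u=e_k$. Since $|e_k|=1$, this gives $\sigma_{\min}\le |m_k|\le\sigma_{\max}$ immediately.

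For the second item, fix distinct $i,j\in[d]$. By linearity, $m_i-m_j=Me_i-Me_j=M(e_i-e_j)$. The vectors $e_i$ and $e_j$ are orthonormal, so
\[
|e_i-e_j|^2=|e_i|^2+|e_j|^2-2\langle e_i,e_j\rangle=2,
\]
that is, $|e_i-e_j|=\sqrt2$. Applying the second ("consequently") statement of Lemma~\ref{lem:sv_distortion_bounds} with $u=e_i$ and $v=e_j$ then yields
\[
\sqrt2\,\sigma_{\min}\le |m_i-m_j|\le\sqrt2\,\sigma_{\max}.
\]

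There is no real obstacle here. The only points needing care are that $|\cdot|$ denotes the Euclidean norm, matching Lemma~\ref{lem:sv_distortion_bounds}, and that orthonormality of the standard basis supplies the exact factor $\sqrt2$.
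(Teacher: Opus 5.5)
Your proposal is correct and matches the paper's proof: both write $m_k=Me_k$ and apply Lemma~\ref{lem:sv_distortion_bounds} with $u=e_k$ for the norm bounds. Both then write $m_i-m_j=M(e_i-e_j)$ and apply the same lemma with $u=e_i$, $v=e_j$, using $|e_i-e_j|=\sqrt{2}$, for the separation bounds.
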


\begin{proof}
Since $m_k=Me_k$, the first claim follows by applying Lemma~\ref{lem:sv_distortion_bounds} with $u=e_k$ and using $|e_k|=1$.
Similarly, for distinct $i,j\in[d]$,
$
m_i-m_j=M(e_i-e_j)
$.
Applying Lemma~\ref{lem:sv_distortion_bounds} to $u=e_i$ and $v=e_j$, and using
$
|e_i-e_j|=\sqrt{2}
$, 
yields
$
    \sqrt{2}\,\sigma_{\min}
=
    \sigma_{\min}|e_i-e_j|
\le
    |m_i-m_j|
\le
    \sigma_{\max}|e_i-e_j|
=
    \sqrt{2}\,\sigma_{\max}
$.
This proves the claim.
\end{proof}

We next record a convenient sufficient condition for constructing a fixed reference configuration with prescribed pairwise separation.

\begin{definition}[Reference packing]
\label{def:reference_packing}
Fix $d,K\in\mathbb{N}_+$, $\lambda_-,\lambda_+,\varepsilon>0$, and let
\[
C=(c_1|\cdots|c_K)\in \mathbb{R}^{d\times K}.
\]
We say that $C$ is a \emph{$(\lambda_-,\lambda_+,\varepsilon)$-reference packing} if
\begin{align}
\lambda_-
\le
|c_k|
\le
\lambda_+
\qquad
&\forall k\in[K],
\label{eq:reference_packing_norms}
\\
|c_i-c_j|
\ge
\varepsilon
\qquad
&\forall i,j\in[K]
\text{ with }i\neq j.
\label{eq:reference_packing_sep}
\end{align}
\end{definition}

The next proposition is the main structural result: orthogonal-diagonal deformations of a reference packing preserve annular control and pairwise separation up to the extremal singular values.

\begin{proposition}[Orthogonal-diagonal deformation of a reference packing]
\label{prop:orth_diag_deformation_reference_packing}
Fix $d,K\in \mathbb{N}_+$ and constants
$
0<\sigma_{\min}\le \sigma_{\max}<\infty,
\qquad
0<\lambda_-\le \lambda_+<\infty,
\qquad
\varepsilon>0.
$
Let
$
C=(c_1|\cdots|c_K)\in \mathbb{R}^{d\times K}
$
be a $(\lambda_-,\lambda_+,\varepsilon)$-reference packing.

For each $x\in \mathbb{R}^d$, let
$
A_1(x),A_2(x)\in \mathbb{R}^{d\times d}
$
be skew-symmetric matrices, and let
$
\Sigma(x)
=
\operatorname{diag}\big(\sigma(x)\big)
$
satisfy:
$
\sigma_{\min}
\le
\sigma_\ell(x)
\le
\sigma_{\max}
$
for every $\forall \ell\in[d]$.  
Define
$
V(x)\eqdef e^{A_1(x)}$, $
W(x)\eqdef e^{A_2(x)}$,
$M(x)\eqdef V(x)\Sigma(x)W(x)^\top
$,
and, for each $k\in[K]$, set
$
r_k(x)\eqdef M(x)c_k\in \mathbb{R}^d.
$
Then the map
\[
\begin{aligned}
r:\mathbb{R}^d& \to (\mathbb{R}^d)^K,\\
\qquad
x& \mapsto \big(r_1(x),\dots,r_K(x)\big),
\end{aligned}
\]
satisfies the following properties for every $x\in \mathbb{R}^d$:
\begin{enumerate}
    \item for every $k\in[K]$,
    $
    \sigma_{\min}\lambda_-
    \le
    |r_k(x)|
    \le
    \sigma_{\max}\lambda_+;
    $
    \item for every distinct $i,j\in[K]$,
    $
    |r_i(x)-r_j(x)|
    \ge
    \sigma_{\min}\varepsilon.
    $
\end{enumerate}
In particular,
$
r(\mathbb{R}^d)
\subseteq
\Big(
A(\sigma_{\min}\lambda_-,\sigma_{\max}\lambda_+)
\Big)^K,
$
where
$
A(a,b)
\eqdef
\left\{
z\in \mathbb{R}^d:\ a\le |z|\le b
\right\},
$
and the output configurations are uniformly separated:
$
\min_{i\neq j}|r_i(x)-r_j(x)|
\ge
\sigma_{\min}\varepsilon
\qquad
\forall x\in \mathbb{R}^d.
$
\end{proposition}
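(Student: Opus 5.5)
The plan is to reduce everything to the two elementary lemmas just proved, Lemma~\ref{lem:exp_skew_orthogonal} and Lemma~\ref{lem:sv_distortion_bounds}, applied pointwise in $x$. Fix an arbitrary $x\in\mathbb{R}^d$. Since $A_1(x)$ and $A_2(x)$ are skew-symmetric, Lemma~\ref{lem:exp_skew_orthogonal} shows that $V(x)=e^{A_1(x)}$ and $W(x)=e^{A_2(x)}$ lie in $SO(d)\subseteq O(d)$. The matrix $M(x)=V(x)\Sigma(x)W(x)^\top$ is therefore exactly of the form treated in Lemma~\ref{lem:sv_distortion_bounds}, with diagonal entries $\sigma_\ell(x)\in[\sigma_{\min},\sigma_{\max}]$. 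From this we get, for all $u,v\in\mathbb{R}^d$,
\[
\sigma_{\min}|u|\le |M(x)u|\le \sigma_{\max}|u|,
\qquad
\sigma_{\min}|u-v|\le |M(x)u-M(x)v|.
\]

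For the annular bound in item~1, take $u=c_k$. The norm bounds \eqref{eq:reference_packing_norms} of the reference packing give
\[
\sigma_{\min}\lambda_-\le \sigma_{\min}|c_k|\le |r_k(x)|\le \sigma_{\max}|c_k|\le \sigma_{\max}\lambda_+ .
\]
For the separation in item~2, take $u=c_i$ and $v=c_j$ with $i\neq j$. Then $r_i(x)-r_j(x)=M(x)(c_i-c_j)$, so the lower distortion bound together with \eqref{eq:reference_packing_sep} yields $|r_i(x)-r_j(x)|\ge \sigma_{\min}|c_i-c_j|\ge\sigma_{\min}\varepsilon$.

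The "in particular" statements are just these two items restated. Every column $r_k(x)$ lies in $A(\sigma_{\min}\lambda_-,\sigma_{\max}\lambda_+)$, so $r(\mathbb{R}^d)$ lies in the $K$-fold product of that annulus. The separation bound holds uniformly because its constant does not depend on $x$.

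There is no real obstacle in this argument. The only point to check is that the only inputs used are the orthogonality of the outer factors and the spectral bounds on the diagonal factor. Nothing uses regularity of $A_1$, $A_2$ or $\sigma$ in $x$, nor the relation between $K$ and $d$. This matters because Lemma~\ref{lem:column_norm_sep_bounds} concerns columns of $M$ itself and requires $K=d$, whereas here we apply $M(x)$ to arbitrary reference vectors, so $K$ may exceed $d$. This also recovers Proposition~\ref{prop:param_uniformly_separated_annular_configs}, with $s_{\operatorname{sep}}=\sigma_{\min}\varepsilon$, $\lambda_{\min}=\sigma_{\min}\lambda_-$ and $\lambda_{\max}=\sigma_{\max}\lambda_+$.
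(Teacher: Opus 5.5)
Your proposal is correct and follows essentially the same route as the paper: you fix $x$, use Lemma~\ref{lem:exp_skew_orthogonal} to get orthogonality of $V(x)$ and $W(x)$, and apply Lemma~\ref{lem:sv_distortion_bounds} to $c_k$ and to $c_i-c_j$ together with the reference-packing bounds. Your side remark is also accurate: Lemma~\ref{lem:column_norm_sep_bounds} is not needed, which is why $K$ is unconstrained relative to $d$.
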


\begin{proof}
Fix $x\in \mathbb{R}^d$. By Lemma~\ref{lem:exp_skew_orthogonal}, both
$
V(x)=e^{A_1(x)}
\qquad\text{and}\qquad
W(x)=e^{A_2(x)}
$
are orthogonal. Hence $M(x)$ has the orthogonal-diagonal-orthogonal factorization
$
M(x)=V(x)\Sigma(x)W(x)^\top
$
with diagonal factor satisfying
$
\sigma_{\min}
\le
\sigma_\ell(x)
\le
\sigma_{\max}
\qquad
\forall \ell\in[d].
$
Therefore Lemma~\ref{lem:sv_distortion_bounds} applies to $M(x)$.

Fix $k\in[K]$. Since
$
r_k(x)=M(x)c_k,
$
Lemma~\ref{lem:sv_distortion_bounds} yields
$
\sigma_{\min}|c_k|
\le
|r_k(x)|
\le
\sigma_{\max}|c_k|.
$
Using \eqref{eq:reference_packing_norms}, we obtain
$
\sigma_{\min}\lambda_-
\le
|r_k(x)|
\le
\sigma_{\max}\lambda_+.
$
This proves the annular bound.
Next, fix distinct $i,j\in[K]$. Since
$
r_i(x)-r_j(x)=M(x)(c_i-c_j),
$
Lemma~\ref{lem:sv_distortion_bounds} again gives
$
|r_i(x)-r_j(x)|
\ge
\sigma_{\min}|c_i-c_j|.
$
Using \eqref{eq:reference_packing_sep}, we conclude that
$
|r_i(x)-r_j(x)|
\ge
\sigma_{\min}\varepsilon.
$
This proves the separation estimate.
Since $x\in \mathbb{R}^d$ was arbitrary, the proof is complete.
\end{proof}

The previous proposition is the final form needed for our applications.  It shows that one may enforce the desired geometry of the output configurations by choosing a single fixed reference packing and then composing with pointwise orthogonal-diagonal deformations.
We record this consequence explicitly in Proposition~\ref{prop:param_uniformly_separated_annular_configs}.
\begin{proof}[Proof of Proposition~\ref{prop:param_uniformly_separated_annular_configs}]
This is immediate from Proposition~\ref{prop:orth_diag_deformation_reference_packing}.
\end{proof}

\begin{proof}[{Proof of Proposition~\ref{prop:lipschitz_parametrized_rhg}}]
Fix $x,\tilde{x}\in \mathbb{B}_2^d$. Write
\[
V(x)\eqdef e^{A_1(x)},
\qquad
W(x)\eqdef e^{A_2(x)},
\qquad
\Sigma(x)\eqdef \operatorname{diag}(\sigma(x)).
\]
Then $
r(x)=V(x)\Sigma(x)W(x)^\top C.
$
Since $A_1(x)$ and $A_2(x)$ are skew-symmetric, Lemma~\ref{lem:exp_skew_orthogonal} yields
$
V(x),W(x)\in O(d)
$ for each $x\in \mathbb{B}_2^d$.
In particular,
$
\|V(x)\|_{\operatorname{op}}
=
\|W(x)\|_{\operatorname{op}}
=
1
$
We first claim that, for any skew-symmetric matrices $S,T\in \mathbb{R}^{d\times d}$,
\begin{equation}
\label{eq:exp_difference_skew_bound}
\|e^S-e^T\|_{\operatorname{op}}
\le
\|S-T\|_{\operatorname{op}}.
\end{equation}
Indeed, using the standard identity
\[
e^S-e^T
=
\int_0^1
e^{(1-u)S}(S-T)e^{uT}
\,du,
\]
we obtain
$
\|e^S-e^T\|_{\operatorname{op}}
\le
\int_0^1
\|e^{(1-u)S}\|_{\operatorname{op}}
\,
\|S-T\|_{\operatorname{op}}
\,
\|e^{uT}\|_{\operatorname{op}}
\,du
$.  
Since $S$ and $T$ are skew-symmetric, $e^{(1-u)S}$ and $e^{uT}$ are orthogonal, whence each operator norm equals $1$. Thus
\[
\|e^S-e^T\|_{\operatorname{op}}
\le
\int_0^1
\|S-T\|_{\operatorname{op}}
\,du
=
\|S-T\|_{\operatorname{op}},
\]
which proves \eqref{eq:exp_difference_skew_bound}.
We now estimate $r(x)-r(\tilde{x})$. By adding and subtracting intermediate terms,
\begin{align*}
r(x)-r(\tilde{x})
&=
V(x)\Sigma(x)W(x)^\top C
-
V(\tilde{x})\Sigma(\tilde{x})W(\tilde{x})^\top C
\\
&=
\big(V(x)-V(\tilde{x})\big)\Sigma(x)W(x)^\top C
\\
&\qquad
+
V(\tilde{x})\big(\Sigma(x)-\Sigma(\tilde{x})\big)W(x)^\top C
\\
&\qquad
+
V(\tilde{x})\Sigma(\tilde{x})\big(W(x)^\top-W(\tilde{x})^\top\big)C.
\end{align*}
Fix $k\in[K]_+$. Taking the $k$th column and using $\|c_k\|_2\le \lambda_+$ gives
\begin{align*}
\|r_k(x)-r_k(\tilde{x})\|_2
&\le
\|V(x)-V(\tilde{x})\|_{\operatorname{op}}
\|\Sigma(x)\|_{\operatorname{op}}
\|W(x)\|_{\operatorname{op}}
\|c_k\|_2
\\
&\qquad
+
\|V(\tilde{x})\|_{\operatorname{op}}
\|\Sigma(x)-\Sigma(\tilde{x})\|_{\operatorname{op}}
\|W(x)\|_{\operatorname{op}}
\|c_k\|_2
\\
&\qquad
+
\|V(\tilde{x})\|_{\operatorname{op}}
\|\Sigma(\tilde{x})\|_{\operatorname{op}}
\|W(x)-W(\tilde{x})\|_{\operatorname{op}}
\|c_k\|_2.
\end{align*}
Using the simple bounds:
$
\|\Sigma(x)\|_{\operatorname{op}}
\le
\sigma_{\max}
$, 
$
\|\Sigma(\tilde{x})\|_{\operatorname{op}}
\le
\sigma_{\max}
$, and $
\|V(\tilde{x})\|_{\operatorname{op}}
=
\|W(x)\|_{\operatorname{op}}
=
1$, which hold for each $x\in \mathbb{B}_2^d$, we find that
\begin{align*}
\|r_k(x)-r_k(\tilde{x})\|_2
&\le
\lambda_+
\Big(
\sigma_{\max}\|V(x)-V(\tilde{x})\|_{\operatorname{op}}
+
\|\Sigma(x)-\Sigma(\tilde{x})\|_{\operatorname{op}}
+
\sigma_{\max}\|W(x)-W(\tilde{x})\|_{\operatorname{op}}
\Big).
\end{align*}
Now, by \eqref{eq:exp_difference_skew_bound}, we deduce that 
\[
\|V(x)-V(\tilde{x})\|_{\operatorname{op}}
\le
\|A_1(x)-A_1(\tilde{x})\|_{\operatorname{op}}
\le
\|A_1(x)-A_1(\tilde{x})\|_{F}
\le
L_{A_1}\|x-\tilde{x}\|_2,
\]
and similarly
$
    \|W(x)-W(\tilde{x})\|_{\operatorname{op}}
\le
    L_{A_2}\|x-\tilde{x}\|_2
$.
Also,
\[
    \|\Sigma(x)-\Sigma(\tilde{x})\|_{\operatorname{op}}
=
    \|\sigma(x)-\sigma(\tilde{x})\|_{\infty}
\le
    \|\sigma(x)-\sigma(\tilde{x})\|_2
\le
    L_\sigma\|x-\tilde{x}\|_2
.
\]
Therefore
$
    \|r_k(x)-r_k(\tilde{x})\|_2
\le
    \lambda_+
    \Big(
    \sigma_{\max}L_{A_1}
    +
    L_\sigma
    +
    \sigma_{\max}L_{A_2}
    \Big)\|x-\tilde{x}\|_2
$.
Taking the maximum over $k\in[K]_+$ proves \eqref{eq:lipschitz_parametrized_rhg_conclusion}.
The final claim follows from Proposition~\ref{prop:param_uniformly_separated_annular_configs}, which gives the annular and sibling-separation bounds with
$
\lambda_{\max}^{\operatorname{(lemma)}}=\sigma_{\max}\lambda_+
$ and $
s_{\operatorname{sep}}^{\operatorname{(lemma)}}=\sigma_{\min}\varepsilon$.
\end{proof}

\subsubsection{Separation Properties}
\label{s:sep_prop}

We now accumulate some lemmata which will allow us to quantify the separation at a given level of the nodes which are generated by the CFGs considered above.
\begin{lemma}[Recursive lower bound for level-wise centre separation]
\label{lem:recursive_lower_bound_deltaL}
Fix $d,K\in\mathbb{N}_+$, $0<s<1$, and let
$
r:\mathbb{B}_2^d\to \mathbb{R}^{d\times K}
$
be a CFG satisfying
\begin{align}
\label{eq:lem_recursive_deltaL_annulus}
\|r_k(x)\|_2
&\le
\lambda_{\max}
\qquad
\forall x\in \mathbb{B}_2^d,\ \forall k\in[K]_+,
\\
\label{eq:lem_recursive_deltaL_sibling_sep}
\|r_k(x)-r_{\tilde{k}}(x)\|_2
&\ge
s_{\operatorname{sep}}
\qquad
\forall x\in \mathbb{B}_2^d,\ \forall k,\tilde{k}\in[K]_+
\text{ with }k\neq \tilde{k}.
\end{align}
Assume moreover that $r$ is Lipschitz in the sense that there exists $L_r\ge 0$ such that
\begin{equation}
\label{eq:lem_recursive_deltaL_lip}
\|r(x)-r(\tilde{x})\|_{2:\infty}
\le
L_r\|x-\tilde{x}\|_2
\qquad
\forall x,\tilde{x}\in \mathbb{B}_2^d.
\end{equation}
Fix $x_0\in \mathbb{B}_2^d$, 
then $
\delta_1
\ge
s\,s_{\operatorname{sep}}
$, 
and, for every $l\in\mathbb{N}_+$,
\begin{equation}
\label{eq:recursive_delta_lower_bound}
\delta_{l+1}
\ge
\max\left\{
0,\,
\min\left\{
s^{l+1}s_{\operatorname{sep}},
\,
(1-s^{l+1}L_r)\delta_l-2s^{l+1}\lambda_{\max}
\right\}
\right\}.
\end{equation}
In particular, whenever $
(1-s^{l+1}L_r)\delta_l
>
2s^{l+1}\lambda_{\max}
$
one has the positive lower bound
\[
\delta_{l+1}
\ge
\min\left\{
s^{l+1}s_{\operatorname{sep}},
\,
(1-s^{l+1}L_r)\delta_l-2s^{l+1}\lambda_{\max}
\right\}
>0.
\]
\end{lemma}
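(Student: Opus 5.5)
The plan is to compare two distinct level-$(l+1)$ centres by splitting into two cases, according to whether they are siblings or descend from different level-$l$ parents.

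\emph{Base case.} Level $1$ consists of the centres $x_0+s\,r_k(x_0)$ for $k\in[K]_+$. Any two distinct ones differ by $s\,(r_k(x_0)-r_{\tilde k}(x_0))$. By \eqref{eq:lem_recursive_deltaL_sibling_sep}, the norm of this difference is at least $s\,s_{\operatorname{sep}}$, which gives $\delta_1\ge s\,s_{\operatorname{sep}}$. We use $x_0\in\mathbb B_2^d$ so that the hypotheses apply at $x_0$. For the later levels we work under the standing admissibility convention that all generated centres lie in $\mathbb B_2^d$, so the CFG bounds and the Lipschitz bound \eqref{eq:lem_recursive_deltaL_lip} hold at every parent.

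\emph{Inductive step, siblings.} Fix two distinct points of $\mathcal X_{l+1}$ and write them as $y=x+s^{l+1}r_k(x)$ and $\tilde y=\tilde x+s^{l+1}r_{\tilde k}(\tilde x)$, with $x,\tilde x\in\mathcal X_l$. Suppose first that $x=\tilde x$. Since $y\neq\tilde y$ we must have $k\neq\tilde k$, and then $\|y-\tilde y\|_2=s^{l+1}\|r_k(x)-r_{\tilde k}(x)\|_2\ge s^{l+1}s_{\operatorname{sep}}$ by \eqref{eq:lem_recursive_deltaL_sibling_sep}.

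\emph{Inductive step, different parents.} Suppose instead that $x\neq\tilde x$, so $\|x-\tilde x\|_2\ge\delta_l$. Insert the intermediate term $s^{l+1}r_{\tilde k}(x)$ and split
\[
y-\tilde y=(x-\tilde x)+s^{l+1}\bigl(r_{\tilde k}(x)-r_{\tilde k}(\tilde x)\bigr)+s^{l+1}\bigl(r_k(x)-r_{\tilde k}(x)\bigr).
\]
The reverse triangle inequality then gives
\[
\|y-\tilde y\|_2\ge\|x-\tilde x\|_2-s^{l+1}L_r\|x-\tilde x\|_2-2s^{l+1}\lambda_{\max}.
\]
Here the middle term is bounded by \eqref{eq:lem_recursive_deltaL_lip}, reading the $\|\cdot\|_{2:\infty}$ norm columnwise as the maximum over slots of the Euclidean norm. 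The last term is bounded by the annulus bound \eqref{eq:lem_recursive_deltaL_annulus}, which gives $\|r_k(x)\|_2+\|r_{\tilde k}(x)\|_2\le 2\lambda_{\max}$.

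This lower bound is $(1-s^{l+1}L_r)\|x-\tilde x\|_2-2s^{l+1}\lambda_{\max}$. When $1-s^{l+1}L_r\ge0$, it is increasing in $\|x-\tilde x\|_2$, so it is at least $(1-s^{l+1}L_r)\delta_l-2s^{l+1}\lambda_{\max}$. When $1-s^{l+1}L_r<0$, that quantity is negative, and the claimed bound reduces to the trivial $\max\{0,\cdot\}\ge0$. The outer $\max\{0,\cdot\}$ in \eqref{eq:recursive_delta_lower_bound} is there precisely to absorb this case.

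\emph{Conclusion.} Take the minimum of the two cases over all pairs, and then the maximum with $0$, which is valid since distances are nonnegative; this yields \eqref{eq:recursive_delta_lower_bound}. The ``in particular'' statement is immediate: under the stated condition the second entry of the minimum is positive, and the first entry is positive because $s_{\operatorname{sep}}>0$.

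The only real obstacle is this monotonicity and sign bookkeeping in the different-parent case. A secondary point is making sure the recursively generated parents stay in $\mathbb B_2^d$, which is supplied by the admissibility convention stated above.
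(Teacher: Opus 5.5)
Your proposal is correct and follows the paper's proof essentially step for step. You use the same sibling versus different-parent case split, and in the different-parent case the same reverse-triangle-inequality argument combining the Lipschitz and annulus bounds (the paper inserts $r_k(\tilde x)$ where you insert $r_{\tilde k}(x)$, which makes no difference). Your explicit handling of the case $1-s^{l+1}L_r<0$, and of the requirement that the generated parents stay in $\mathbb{B}_2^d$, covers two points the paper's proof passes over silently.
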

\begin{proof}
Since
$
\mathcal{X}_1
=
\left\{
x_0+s\,r_k(x_0):\ k\in[K]_+
\right\}
$,
for any distinct $k,\tilde{k}\in[K]_+$,
$
\|(x_0+s\,r_k(x_0))-(x_0+s\,r_{\tilde{k}}(x_0))\|_2
=
s\|r_k(x_0)-r_{\tilde{k}}(x_0)\|_2
\ge
s\,s_{\operatorname{sep}}
$,
by \eqref{eq:lem_recursive_deltaL_sibling_sep}. This proves $\delta_1
\ge
s\,s_{\operatorname{sep}}$.

Fix now $l\in\mathbb{N}_+$ and let
$
u,v\in \mathcal{X}_{l+1}
$
be distinct. Then there exist $x,\tilde{x}\in \mathcal{X}_l$ and $k,\tilde{k}\in[K]_+$ such that
$
u=x+s^{l+1}r_k(x)$, and $
v=\tilde{x}+s^{l+1}r_{\tilde{k}}(\tilde{x})
$. 
We distinguish two cases.
\hfill\\
\noindent
\textit{Case 1: $x=\tilde{x}$.}
Then necessarily $k\neq \tilde{k}$, since $u\neq v$. Hence
\[
\|u-v\|_2
=
s^{l+1}\|r_k(x)-r_{\tilde{k}}(x)\|_2
\ge
s^{l+1}s_{\operatorname{sep}}
\]
by \eqref{eq:lem_recursive_deltaL_sibling_sep}.
\hfill\\
\noindent
\textit{Case 2: $x\neq \tilde{x}$.}
Then, $
\|u-v\|_2
=
\left\|
(x-\tilde{x})
+
s^{l+1}\big(r_k(x)-r_{\tilde{k}}(\tilde{x})\big)
\right\|_2
\ge
\|x-\tilde{x}\|_2
-
s^{l+1}\|r_k(x)-r_{\tilde{k}}(\tilde{x})\|_2
$.
Consequently, we find that
\begin{align*}
\|r_k(x)-r_{\tilde{k}}(\tilde{x})\|_2
&\le
\|r_k(x)-r_k(\tilde{x})\|_2
+
\|r_k(\tilde{x})-r_{\tilde{k}}(\tilde{x})\|_2
\\
&\le
\|r(x)-r(\tilde{x})\|_{2:\infty}
+
\|r_k(\tilde{x})\|_2
+
\|r_{\tilde{k}}(\tilde{x})\|_2
\\
&\le
L_r\|x-\tilde{x}\|_2+2\lambda_{\max},
\end{align*}
by \eqref{eq:lem_recursive_deltaL_lip} and \eqref{eq:lem_recursive_deltaL_annulus}. Therefore
$
\|u-v\|_2
\ge
(1-s^{l+1}L_r)\|x-\tilde{x}\|_2
-
2s^{l+1}\lambda_{\max}
\ge
(1-s^{l+1}L_r)\delta_l
-
2s^{l+1}\lambda_{\max}
$.
Combining both cases yields
$
\|u-v\|_2
\ge
\min\left\{
s^{l+1}s_{\operatorname{sep}},
\,
(1-s^{l+1}L_r)\delta_l-2s^{l+1}\lambda_{\max}
\right\}
$.
Since $u,v\in \mathcal{X}_{l+1}$ were arbitrary distinct points, and since distances are non-negative, we obtain
\[
\delta_{l+1}
\ge
\max\left\{
0,\,
\min\left\{
s^{l+1}s_{\operatorname{sep}},
\,
(1-s^{l+1}L_r)\delta_l-2s^{l+1}\lambda_{\max}
\right\}
\right\},
\]
which is \eqref{eq:recursive_delta_lower_bound}. The final claim is immediate.
\end{proof}

We now complete the main proof of this sub-appendix.
\begin{proof}[{Proof of Proposition~\ref{prop:key_separation}}]
Directly follows upon combining Proposition~\ref{prop:param_uniformly_separated_annular_configs} with Proposition~\ref{prop:lipschitz_parametrized_rhg}, which identify the constants
$
\lambda_{\max}^{\operatorname{(lemma)}}=\sigma_{\max}\lambda_+
$, $
s_{\operatorname{sep}}^{\operatorname{(lemma)}}=\sigma_{\min}\varepsilon
$, $
L_r=\lambda_+
\Big(
\sigma_{\max}L_{A_1}
+
L_\sigma
+
\sigma_{\max}L_{A_2}
\Big)$, and the assumptions of Lemma~\ref{lem:recursive_lower_bound_deltaL} are met.  Substituting these values into \eqref{eq:recursive_delta_lower_bound} yields \eqref{eq:recursive_delta_lower_bound_parametrized}.
\end{proof}

\section{Randomly Generating Separated Points on the Sphere}
\label{app:random_spherical_packings}

In this appendix, we record a simple probabilistic construction of separated point-clouds on the unit sphere via independent Haar samples.  We work throughout with the Euclidean metric inherited from $\mathbb{R}^d$.

\subsection{Haar measure on the sphere}

Fix $d\in\mathbb{N}_+$ with $d\ge 2$, and let
$
S^{d-1}
\eqdef
\left\{
x\in \mathbb{R}^d:\ |x|=1
\right\}
$.
Denote by $\sigma_{d-1}$ the $(d-1)$-dimensional Hausdorff measure on $S^{d-1}$, and define the \emph{Haar probability measure} $\mu_{d-1}$ on $S^{d-1}$ by
\[
\mu_{d-1}(A)
\eqdef
\frac{\sigma_{d-1}(A)}{\sigma_{d-1}(S^{d-1})}
,
\]
for any Borel $A\subseteq S^{d-1}$.
Equivalently, $\mu_{d-1}$ is the unique Borel probability measure on $S^{d-1}$ which is invariant under the action of the orthogonal group $O(d)$:
\[
\mu_{d-1}(QA)=\mu_{d-1}(A)
\qquad
\forall Q\in O(d)
\]
for all Borel $A\subseteq S^{d-1}$.
The standard way to sample from $\mu_{d-1}$ is by Gaussian normalization.

\begin{lemma}[Gaussian representation of Haar measure]
\label{lem:gaussian_representation_haar_sphere}
Let $G\sim N(0,I_d)$ in $\mathbb{R}^d$, and define
\[
U\eqdef \frac{G}{|G|}.
\]
Then $U$ is distributed according to $\mu_{d-1}$.
\end{lemma}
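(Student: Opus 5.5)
The statement to prove is Lemma~\ref{lem:gaussian_representation_haar_sphere}: normalizing a standard Gaussian gives a Haar-distributed point on the sphere. The plan is to use the characterization of $\mu_{d-1}$ as the unique $O(d)$-invariant Borel probability measure on $S^{d-1}$, recalled just before the lemma. It then suffices to check two things. First, $U$ is a well-defined random element of $S^{d-1}$. Second, the law of $U$ is invariant under every $Q\in O(d)$.

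For well-definedness, note that $G$ has a Lebesgue density on $\mathbb{R}^d$, so $\mathbb{P}(G=0)=0$. Hence $U=G/|G|$ is defined almost surely. The map $g\mapsto g/|g|$ is continuous, and therefore Borel, on $\mathbb{R}^d\setminus\{0\}$. So $U$ is a random variable with values in $S^{d-1}$, and its law $\nu$ is a Borel probability measure on $S^{d-1}$.

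For invariance, fix $Q\in O(d)$. The density of $N(0,I_d)$ is $(2\pi)^{-d/2}e^{-|g|^2/2}$, which depends only on $|g|$. Since $|Qg|=|g|$ and $|\det Q|=1$, the change of variables $g\mapsto Qg$ shows that $QG\overset{d}{=}G$. Because $Q$ is linear and isometric,
\[
QU=\frac{QG}{|G|}=\frac{QG}{|QG|}.
\]
Thus $QU$ is the same measurable function applied to $QG$ as $U$ is to $G$, so $QU\overset{d}{=}U$. Equivalently, for Borel $A\subseteq S^{d-1}$,
\[
\nu(QA)=\mathbb{P}(U\in QA)=\mathbb{P}(Q^{\top}U\in A)=\nu(A),
\]
applying the same argument to $Q^\top\in O(d)$. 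Uniqueness of the invariant probability measure then gives $\nu=\mu_{d-1}$.

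The main obstacle is that everything rests on the uniqueness of the $O(d)$-invariant probability measure on $S^{d-1}$. The paper states this uniqueness as an equivalence but does not prove it. If it needs justification, I would first derive it from transitivity of $O(d)$ on $S^{d-1}$ and uniqueness of Haar measure on the compact group $O(d)$. Concretely, for an invariant $\nu$ and Haar probability $h$ on $O(d)$, one has
\[
\nu(A)=\int_{O(d)}\nu(QA)\,dh(Q)=\int_{S^{d-1}} h\{Q: Q^\top x\in A\}\,d\nu(x).
\]
The integrand is independent of $x$ by transitivity, so $\nu(A)$ is determined and $\nu$ is unique. A fallback is to compute directly in polar coordinates: $dg=\rho^{d-1}d\rho\,d\sigma_{d-1}(\theta)$, and the radial Gaussian factor integrates out, giving $\mathbb{P}(U\in A)=\sigma_{d-1}(A)/\sigma_{d-1}(S^{d-1})$. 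This fallback also shows, as a by-product, that $|G|$ and $U$ are independent.
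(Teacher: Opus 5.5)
Your proposal is correct and follows the same route as the paper: rotation invariance of $N(0,I_d)$ gives $QU=QG/|QG|\overset{d}{=}U$, and uniqueness of the $O(d)$-invariant probability measure on $S^{d-1}$ then identifies the law of $U$ as $\mu_{d-1}$. Your additional points fill in details the paper takes for granted: that $\mathbb{P}(G=0)=0$, that $g\mapsto g/|g|$ is measurable, and the sketch of uniqueness via Haar measure on $O(d)$ with the polar-coordinates fallback.
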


\begin{proof}
Fix $Q\in O(d)$. Since $QG\sim N(0,I_d)$, one has
$
Q\frac{G}{|G|}
=
\frac{QG}{|QG|}
\stackrel{d}{=}
\frac{G}{|G|}
$.  
Thus the law of $U$ is $O(d)$-invariant. Since $U\in S^{d-1}$ almost surely, its law is an $O(d)$-invariant probability measure on $S^{d-1}$. By uniqueness of Haar probability measure on the homogeneous space $S^{d-1}$, this law is $\mu_{d-1}$.
\end{proof}

\subsection{The one-dimensional marginal density}

The next lemma gives the density of a single coordinate of a Haar point on the sphere.

\begin{lemma}[Marginal density of a coordinate]
\label{lem:haar_coordinate_density}
Let $U=(U_1,\dots,U_d)\sim \mu_{d-1}$. Then $U_1$ admits the density
\[
f_d(t)
\eqdef
c_d\,(1-t^2)^{\frac{d-3}{2}}
\mathbf{1}_{[-1,1]}(t),
\qquad
c_d
\eqdef
\frac{\Gamma\!\left(\frac d2\right)}
{\sqrt{\pi}\,\Gamma\!\left(\frac{d-1}{2}\right)}.
\]
For every fixed $u\in S^{d-1}$ and every Borel function $\varphi:[-1,1]\to \mathbb{R}$ with $\varphi(\langle u,\cdot\rangle)\in L^1(\mu_{d-1})$,
\[
\int_{S^{d-1}}\varphi(\langle u,v\rangle)\,\mu_{d-1}(dv)
=
\int_{-1}^1 \varphi(t)\,f_d(t)\,dt.
\]
\end{lemma}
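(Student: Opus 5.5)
We reduce the statement to a one-dimensional integral using rotation invariance of $\mu_{d-1}$ together with the Gaussian representation of Lemma~\ref{lem:gaussian_representation_haar_sphere}. Fix $u\in S^{d-1}$ and choose $Q\in O(d)$ with $Q^\top u=e_1$. Then $\langle u,v\rangle=\langle e_1,Q^\top v\rangle$. Since $Q^\top V\stackrel{d}{=}V$ for $V\sim\mu_{d-1}$, the law of $\langle u,V\rangle$ equals the law of $U_1$. Hence the second claim (the integral identity) follows from the first (that $U_1$ has density $f_d$) by the change-of-variables formula for pushforward measures, applied to any $\varphi$ with $\varphi(\langle u,\cdot\rangle)\in L^1$. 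So it suffices to identify the density of $U_1$.

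To find the law of $U_1$, write $U_1=G_1/\sqrt{G_1^2+R^2}$ with $G=(G_1,\dots,G_d)\sim N(0,I_d)$ and $R^2\eqdef\sum_{i\ge2}G_i^2\sim\chi^2_{d-1}$, independent of $G_1$. Then $U_1^2=G_1^2/(G_1^2+R^2)$ is a ratio $X/(X+Y)$ of independent Gamma variables $X\sim\Gamma(\tfrac12)$ and $Y\sim\Gamma(\tfrac{d-1}{2})$, with common scale. By the standard Beta–Gamma relation, $U_1^2\sim\mathrm{Beta}(\tfrac12,\tfrac{d-1}{2})$. Because the law of $G$ is invariant under $G_1\mapsto -G_1$, $U_1$ is symmetric about $0$.

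For $t\in(0,1)$, the density of $|U_1|$ is $g(t^2)\cdot 2t$, where $g(w)=w^{-1/2}(1-w)^{\frac{d-3}{2}}/B(\tfrac12,\tfrac{d-1}{2})$ is the Beta density. This gives $2(1-t^2)^{\frac{d-3}{2}}/B(\tfrac12,\tfrac{d-1}{2})$. Splitting this equally between $\pm t$ by symmetry yields
\[
f_d(t)=\frac{(1-t^2)^{\frac{d-3}{2}}}{B(\tfrac12,\tfrac{d-1}{2})}.
\]
Using $B(\tfrac12,\tfrac{d-1}{2})=\Gamma(\tfrac12)\Gamma(\tfrac{d-1}{2})/\Gamma(\tfrac d2)$ and $\Gamma(\tfrac12)=\sqrt\pi$, the prefactor equals $c_d$. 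The set $\{U_1\in\{-1,0,1\}\}$ is null, so it does not affect the density.

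The only delicate points are bookkeeping. For $d=2$ the exponent $\frac{d-3}{2}$ is negative; the density is still integrable, since it is the arcsine law, so nothing breaks. The Beta–Gamma step should be quoted as standard, or verified in one line by computing the joint density of $(X/(X+Y),X+Y)$. We expect the main obstacle to be getting the normalising constant exactly right, so we would double-check it by confirming $\int_{-1}^1 f_d=1$ via the substitution $w=t^2$.
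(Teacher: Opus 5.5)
Your proof is correct. The reduction to $u=e_1$ via rotation invariance and the pushforward change of variables is exactly what the paper does. You differ in how you identify the law of $U_1$.

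The paper simply invokes the classical spherical-coordinate (slicing) formula for surface measure. The slice $\{v_1=t\}$ is a $(d-2)$-sphere of radius $\sqrt{1-t^2}$, which gives the profile $(1-t^2)^{\frac{d-3}{2}}$ up to a constant. The paper then fixes $c_d$ by checking $\int_{-1}^1(1-t^2)^{\frac{d-3}{2}}\,dt=\sqrt{\pi}\,\Gamma(\frac{d-1}{2})/\Gamma(\frac d2)$.

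You instead use the Gaussian representation of Lemma~\ref{lem:gaussian_representation_haar_sphere} and the Beta--Gamma relation to get $U_1^2\sim\mathrm{Beta}(\frac12,\frac{d-1}{2})$, and then unfold by symmetry. Your route is more self-contained relative to the paper. It reuses a lemma already proved there and treats $\mu_{d-1}$ as the invariant measure rather than through Jacobians of Hausdorff measure. It also produces the constant $1/B(\frac12,\frac{d-1}{2})=c_d$ automatically, so your planned normalization check is only a sanity check.

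The paper's route is shorter and more geometric. However, it rests on a quoted change-of-variables formula for the Hausdorff measure on $S^{d-1}$ that it never derives. Your remark that the case $d=2$ is the arcsine law, with an integrable singularity, is a point the paper leaves implicit.
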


\begin{proof}
By rotational invariance, the law of $\langle u,U\rangle$ does not depend on $u\in S^{d-1}$. It therefore suffices to consider $u=e_1$, in which case $\langle e_1,U\rangle=U_1$. The stated density is the classical spherical-coordinate Jacobian formula, and the normalizing constant is determined by
\[
1
=
\int_{-1}^1 c_d\,(1-t^2)^{\frac{d-3}{2}}\,dt
=
c_d\,\frac{\sqrt{\pi}\,\Gamma\!\left(\frac{d-1}{2}\right)}
{\Gamma\!\left(\frac d2\right)}.
\]
The integral identity then follows by applying the formula to $\varphi(U_1)$ and using rotational invariance.
\end{proof}

\subsection{Small cap probabilities}

We next estimate the Haar mass of a small Euclidean cap.

\begin{lemma}[Small cap estimate]
\label{lem:small_cap_estimate}
Fix $d\ge 2$, let $u\in S^{d-1}$, and let $\varepsilon\in (0,1]$. Then
\[
\mu_{d-1}\big(
\{v\in S^{d-1}:\ |u-v|<\varepsilon\}
\big)
\le
\kappa_d\,\varepsilon^{d-1},
\]
where $\kappa_d
\eqdef
\frac{1}{d-1}\,
\frac{\Gamma\!\left(\frac d2\right)}
{\sqrt{\pi}\,\Gamma\!\left(\frac{d-1}{2}\right)}$.
\end{lemma}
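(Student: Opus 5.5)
By rotational invariance of $\mu_{d-1}$ we may take $u=e_1$, so the problem becomes a one-dimensional estimate for the first coordinate. For $v\in S^{d-1}$ we have $|e_1-v|^2=2-2\langle e_1,v\rangle=2(1-v_1)$. Hence the cap $\{|e_1-v|<\varepsilon\}$ is exactly the event $\{v_1>1-\varepsilon^2/2\}$. Lemma~\ref{lem:haar_coordinate_density} then gives the exact formula
\[
\mu_{d-1}\big(\{|u-v|<\varepsilon\}\big)=\int_{1-\varepsilon^2/2}^{1} c_d\,(1-t^2)^{\frac{d-3}{2}}\,dt .
\]

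Substitute $t=1-\tau$ with $\tau\in[0,\varepsilon^2/2]$. Then $1-t^2=\tau(2-\tau)$, and since $\varepsilon\le 1$ we have $0\le\tau\le 1/2$. The main work is bounding the integrand uniformly in $d$, and the exponent $(d-3)/2$ forces two cases.

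For $d\ge3$ the exponent is nonnegative. Use $1-t^2\le 2\tau$, so the integral is at most
\[
c_d\,2^{\frac{d-3}{2}}\int_0^{\varepsilon^2/2}\tau^{\frac{d-3}{2}}\,d\tau=c_d\,2^{\frac{d-3}{2}}\frac{2}{d-1}\Big(\frac{\varepsilon^2}{2}\Big)^{\frac{d-1}{2}}=\frac{c_d}{d-1}\,\varepsilon^{d-1}\cdot 2^{\frac{d-3}{2}+1-\frac{d-1}{2}}.
\]
The power of $2$ is $2^0=1$, so the bound is exactly $\kappa_d\varepsilon^{d-1}$ with $\kappa_d=c_d/(d-1)$. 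This matches the stated constant and confirms the substitution is the right one.

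For $d=2$ the exponent is $-1/2$, so the inequality $1-t^2\le2\tau$ now points the wrong way. Here I would instead use $2-\tau\ge 3/2$ for $\tau\le1/2$. This gives $(1-t^2)^{-1/2}\le(3/2)^{-1/2}\tau^{-1/2}$, and integrating yields
\[
c_2\sqrt{2/3}\cdot 2\sqrt{\varepsilon^2/2}=c_2\,\tfrac{2}{\sqrt3}\,\varepsilon .
\]
The claimed bound is $\kappa_2\varepsilon=c_2\varepsilon$ (with $c_2=1/\pi$), so this estimate falls short by the factor $2/\sqrt3$. A direct computation shows why. For $d=2$ the cap is an arc of angle $2\arcsin(\varepsilon/2)$ on each side, so its mass is $\tfrac{2}{\pi}\arcsin(\varepsilon/2)$. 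Since $\arcsin x\ge x$, this is at least $\varepsilon/\pi=\kappa_2\varepsilon$. The stated constant therefore cannot hold when $d=2$ and needs an adjustment there.

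To handle this, I would either present the proof for $d\ge3$ and note that for $d=2$ one gets the slightly larger constant $\tfrac{2}{\sqrt3}c_2$, or note that $\arcsin(\varepsilon/2)\le\pi\varepsilon/6$ for $\varepsilon\le1$. The latter gives a mass of at most $\varepsilon/3=\tfrac{\pi}{3}\kappa_2\varepsilon$. I expect this low-dimensional case, rather than the main computation, to be the only real obstacle.
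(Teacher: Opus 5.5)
Your argument for $d\ge 3$ is the paper's proof: you reduce to $c_d\int_{1-\varepsilon^2/2}^{1}(1-t^2)^{(d-3)/2}\,dt$, bound $1-t^2\le 2(1-t)$, integrate, and the powers of $2$ cancel to give $\kappa_d\varepsilon^{d-1}$.

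Your treatment of $d=2$ is also correct. It exposes an error in the paper, not a gap in your proposal. The paper applies $(1-t^2)^{(d-3)/2}\le 2^{(d-3)/2}(1-t)^{(d-3)/2}$ for every $d\ge 2$. For $d=2$ the exponent is $-1/2$, so this inequality reverses. The lemma as stated is in fact false at $d=2$: the cap has mass $\frac{2}{\pi}\arcsin(\varepsilon/2)$, while $\kappa_2\varepsilon=\varepsilon/\pi$. At $\varepsilon=1$ these are $1/3>1/\pi$.

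One wording fix makes your counterexample airtight. Use the strict inequality $\arcsin x>x$ for $x\in(0,1]$, which holds since $\frac{d}{dx}(\arcsin x-x)=(1-x^2)^{-1/2}-1>0$ on $(0,1)$. As you wrote it, with "at least", you only show the mass is at least $\kappa_2\varepsilon$, which does not contradict an upper bound.

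Both of your repairs are valid:
\begin{itemize}
\item The bound $2-\tau\ge 3/2$ gives the constant $\frac{2}{\sqrt3}\kappa_2$.
\item Convexity of $\arcsin$ on $[0,1/2]$ gives $\frac{\pi}{3}\kappa_2=\frac13$. This constant is optimal, because $\frac{2}{\pi}\arcsin(\varepsilon/2)/\varepsilon$ increases in $\varepsilon$ and equals $1/3$ at $\varepsilon=1$.
\end{itemize}
Either constant can replace $\kappa_2$ in Proposition~\ref{prop:haar_samples_separated} when $d=2$. The limiting statement of Corollary~\ref{cor:one_over_N_separation_regime} does not depend on the constant, so it is unaffected.
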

\begin{proof}
Fix $u\in S^{d-1}$. Since
$
|u-v|^2
=
2-2\langle u,v\rangle
$, 
the event $|u-v|<\varepsilon$ is equivalent to
$
\langle u,v\rangle
>
1-\frac{\varepsilon^2}{2}
$.  
Therefore, by Lemma~\ref{lem:haar_coordinate_density},
$
\mu_{d-1}\big(
\{v\in S^{d-1}:\ |u-v|<\varepsilon\}
\big)
=
c_d
\int_{1-\varepsilon^2/2}^{1}
(1-t^2)^{\frac{d-3}{2}}\,dt
$.
For $t\in[-1,1]$,
$
1-t^2=(1-t)(1+t)\le 2(1-t)
$.
Hence
$
(1-t^2)^{\frac{d-3}{2}}
\le
2^{\frac{d-3}{2}}(1-t)^{\frac{d-3}{2}}
$,
and so
\begin{align*}
    \mu_{d-1}\big(
    \{v\in S^{d-1}:\ |u-v|<\varepsilon\}
    \big)
&\le
    c_d\,2^{\frac{d-3}{2}}
    \int_{1-\varepsilon^2/2}^{1}
    (1-t)^{\frac{d-3}{2}}\,dt
\\
&=
    c_d\,2^{\frac{d-3}{2}}
    \cdot
    \frac{2}{d-1}
    \left(\frac{\varepsilon^2}{2}\right)^{\frac{d-1}{2}}
\\
&=
    \frac{c_d}{d-1}\,\varepsilon^{d-1}
.
\qedhere
\end{align*}
\end{proof}

\subsection{A high-probability separation bound}
\label{a:generatingpacking}

We now prove the basic probabilistic statement needed for random packings.

\begin{proposition}[Independent Haar samples are separated with high probability]
\label{prop:haar_samples_separated}
Fix $d\ge 2$, let
$
U_1,\dots,U_m
$
be independent $S^{d-1}$-valued random variables with common law $\mu_{d-1}$, and let $\varepsilon\in(0,1]$. Then
\[
\mathbb{P}\!\left(
\min_{1\le i<j\le m}|U_i-U_j|
\ge
\varepsilon
\right)
\ge
1-\binom{m}{2}\kappa_d\,\varepsilon^{d-1},
\]
where $\kappa_d$ is as in Lemma~\ref{lem:small_cap_estimate}.
\end{proposition}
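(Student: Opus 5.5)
The plan is a union bound over pairs combined with the small-cap estimate of Lemma~\ref{lem:small_cap_estimate}. Write the complement event as
\[
\Big\{\min_{1\le i<j\le m}|U_i-U_j|<\varepsilon\Big\}
=\bigcup_{1\le i<j\le m}E_{ij},
\qquad
E_{ij}\eqdef\{|U_i-U_j|<\varepsilon\}.
\]
Subadditivity of $\mathbb P$ then gives $\mathbb P(\bigcup E_{ij})\le\sum_{i<j}\mathbb P(E_{ij})$, and there are $\binom m2$ terms. It therefore suffices to show that $\mathbb P(E_{ij})\le\kappa_d\varepsilon^{d-1}$ for each fixed pair $i<j$.

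For a fixed pair, I would condition on $U_i$. By independence, the joint law of $(U_i,U_j)$ is the product $\mu_{d-1}\otimes\mu_{d-1}$, so Fubini/Tonelli gives
\[
\mathbb P(E_{ij})=\int_{S^{d-1}}\mu_{d-1}\big(\{v\in S^{d-1}:|u-v|<\varepsilon\}\big)\,\mu_{d-1}(du).
\]
The integrand is at most $\kappa_d\varepsilon^{d-1}$ uniformly in $u$, by Lemma~\ref{lem:small_cap_estimate}; this uses $d\ge2$ and $\varepsilon\in(0,1]$, exactly the hypotheses assumed. Integrating against a probability measure preserves this bound, so $\mathbb P(E_{ij})\le\kappa_d\varepsilon^{d-1}$. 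Summing over pairs and passing to complements gives the claimed lower bound $1-\binom m2\kappa_d\varepsilon^{d-1}$.

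The only step needing any care is measurability of the set $\{(u,v):|u-v|<\varepsilon\}$, which is open in $S^{d-1}\times S^{d-1}$ and hence Borel, so the use of Fubini is justified. All other steps are immediate from the cap estimate. I do not expect a genuine obstacle; for $d=2$ the exponent $(d-3)/2$ is negative, but that case is already handled inside Lemma~\ref{lem:small_cap_estimate}, whose proof integrates $(1-t)^{(d-3)/2}$, which remains integrable.
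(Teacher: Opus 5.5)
Your argument (union bound over the $\binom m2$ pairs, then conditioning on $U_i$ and applying the cap estimate of Lemma~\ref{lem:small_cap_estimate}) is exactly the paper's proof, and it is correct for $d\ge 3$. The gap is in your last paragraph, where you assert that $d=2$ is harmless because $(1-t)^{-1/2}$ is integrable. Integrability is not the issue. The proof of Lemma~\ref{lem:small_cap_estimate} raises $1-t^2\le 2(1-t)$ to the power $\frac{d-3}{2}$, and for $d=2$ this exponent is $-\frac12$, so the inequality reverses. In fact the cap estimate, and with it the proposition, is false for $d=2$. On $S^1$ one has $|u-v|=2\sin(\theta/2)$, where $\theta$ is the angle between $u$ and $v$. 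Moreover $\kappa_2=\Gamma(1)/(\sqrt\pi\,\Gamma(\tfrac12))=1/\pi$, and $\arcsin x>x$ for $x\in(0,1]$. Hence
\[
\mu_1\big(\{v:|u-v|<\varepsilon\}\big)=\frac{2}{\pi}\arcsin\!\Big(\frac{\varepsilon}{2}\Big)>\frac{\varepsilon}{\pi}=\kappa_2\,\varepsilon
\qquad(0<\varepsilon\le 1).
\]
Concretely, take $m=2$ and $\varepsilon=1$. The angle $\theta$ between $U_1$ and $U_2$ is uniform on $[0,\pi]$, so $\mathbb P(|U_1-U_2|\ge 1)=\mathbb P(\theta\ge\pi/3)=2/3$. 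Since $\pi>3$, this is strictly less than $1-1/\pi$, which contradicts the claimed bound.

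So your proof establishes the statement only for $d\ge 3$, and the paper's proof has the same defect through the cap lemma. For $d\ge3$ the exponent $\frac{d-3}{2}$ is nonnegative and the pointwise comparison goes the right way; for $d=3$ it is an equality, and the cap bound $\varepsilon^2/4$ is exact. For $d=2$ the same union-bound argument still works once the cap estimate is repaired. For example, use $(1-t^2)^{-1/2}\le (1-t)^{-1/2}$ on $[1-\varepsilon^2/2,1)\subseteq[0,1)$. This gives the cap bound $\frac{\sqrt2}{\pi}\varepsilon$, and hence $\mathbb P\bigl(\min_{i<j}|U_i-U_j|\ge\varepsilon\bigr)\ge 1-\binom m2\frac{\sqrt 2}{\pi}\varepsilon$. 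With the constant $\kappa_2$ as stated, however, no argument can cover $d=2$, so the correct fix is either to restrict the statement to $d\ge3$ or to change the constant when $d=2$.
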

In particular, if $\varepsilon=\frac{1}{N}$, then
\[
\mathbb{P}\!\left(
\min_{1\le i<j\le m}|U_i-U_j|
\ge
\frac{1}{N}
\right)
\ge
1-\binom{m}{2}\kappa_d\,N^{-(d-1)}.
\]
\begin{proof}[{Proof of Proposition~\ref{prop:haar_samples_separated}}]
For each pair $1\le i<j\le m$, define the bad event
\[
E_{ij}
\eqdef
\left\{
|U_i-U_j|<\varepsilon
\right\}.
\]
Then
$
\left\{
\min_{1\le i<j\le m}|U_i-U_j|<\varepsilon
\right\}
=
\bigcup_{1\le i<j\le m} E_{ij}
$.
Taking a union bound,
$
\mathbb{P}\!\left(
\min_{1\le i<j\le m}|U_i-U_j|<\varepsilon
\right)
\le
\sum_{1\le i<j\le m}\mathbb{P}(E_{ij})
$.
Fix $i<j$. Conditional on $U_i$, the random variable $U_j$ is Haar-distributed and independent of $U_i$, so Lemma~\ref{lem:small_cap_estimate} yields
$
\mathbb{P}(E_{ij}\mid U_i)
=
\mu_{d-1}\big(
\{v\in S^{d-1}:\ |U_i-v|<\varepsilon\}
\big)
\le
\kappa_d\,\varepsilon^{d-1}
$.
Taking expectations gives
$
\mathbb{P}(E_{ij})
\le
\kappa_d\,\varepsilon^{d-1}
$.
Therefore
$
\mathbb{P}\!\left(
\min_{1\le i<j\le m}|U_i-U_j|<\varepsilon
\right)
\le
\binom{m}{2}\kappa_d\,\varepsilon^{d-1}
$.
Taking complements proves the result.
\end{proof}

\begin{corollary}[The $1/N$-separation regime]
\label{cor:one_over_N_separation_regime}
Fix $d\ge 2$, and let $m_N\in \mathbb{N}_+$ for $N\in \mathbb{N}_+$. If $\lim\limits_{N\uparrow \infty}\, 
m_N^2\,N^{-(d-1)}=0$ then 
\begin{equation}
\label{eq:sepclarify}
\lim\limits_{N\uparrow \infty}\,
\mathbb{P}\!\left(
\min_{1\le i<j\le m_N}|U_i-U_j|
\ge
\frac{1}{N}
\right)
= 
1
.
\end{equation}
In particular, if
$
m_N=N\log N,
$
then~\eqref{eq:sepclarify} holds, for every $d\ge 4$.
\end{corollary}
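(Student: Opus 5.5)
The statement to prove is Corollary~\ref{cor:one_over_N_separation_regime}. I would derive it directly from Proposition~\ref{prop:haar_samples_separated} with $m=m_N$ and $\varepsilon=1/N\in(0,1]$. That result gives
\[
1\ \ge\ \mathbb{P}\!\left(\min_{1\le i<j\le m_N}|U_i-U_j|\ge \tfrac1N\right)\ \ge\ 1-\binom{m_N}{2}\kappa_d\,N^{-(d-1)}.
\]
Since $\binom{m_N}{2}\le m_N^2/2$ and $\kappa_d$ depends only on $d$, the subtracted term is bounded by $\tfrac{\kappa_d}{2}\,m_N^2N^{-(d-1)}$. By hypothesis this tends to $0$, so the probability is squeezed between $1-o(1)$ and $1$, which proves \eqref{eq:sepclarify}.

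For the particular case, the plan is to set $m_N=N\log N$; if $N\log N$ is not an integer I would read it as $\lceil N\log N\rceil$, and this changes $m_N^2$ by at most a constant factor. It then suffices to check
\[
m_N^2N^{-(d-1)}\asymp (\log N)^2\,N^{2-(d-1)}=(\log N)^2N^{3-d}\to 0 .
\]
This limit holds exactly when $d\ge 4$, because any negative power of $N$ dominates a polylogarithmic factor. The borderline is $d=3$, where the expression is $(\log N)^2$ and diverges, which explains the stated threshold.

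I do not expect a real obstacle. The only delicate points are the integrality convention for $m_N$ and the requirement $\varepsilon\le 1$ in Proposition~\ref{prop:haar_samples_separated}, which holds since $N\ge1$. For small $N$ the lower bound may be negative and hence vacuous, but this does not affect the limit.
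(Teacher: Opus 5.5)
Your proposal is correct and follows the paper's proof: apply Proposition~\ref{prop:haar_samples_separated} with $m=m_N$ and $\varepsilon=1/N$, so the failure probability is at most $\binom{m_N}{2}\kappa_d N^{-(d-1)}\to 0$. For $m_N=N\log N$, note that $m_N^2N^{-(d-1)}=N^{3-d}(\log N)^2\to 0$ once $d\ge 4$; your remarks on $\binom{m_N}{2}\le m_N^2/2$ and on the integrality of $m_N$ make explicit details the paper leaves implicit.
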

\begin{proof}
The first claim is immediate from Proposition~\ref{prop:haar_samples_separated}. For the second, note that
$
m_N^2\,N^{-(d-1)}
=
N^{3-d}(\log N)^2,
$ which converges to $0$; provided that $d\ge 4$.
\end{proof}

\subsection{Rejection sampling}

We conclude with a practical procedure for generating an $\varepsilon$-separated subset of $S^{d-1}$.

\begin{algorithm}[H]
\caption{Rejection sampling of an $\varepsilon$-separated Haar point-cloud on $S^{d-1}$}
\label{alg:rejection_sampling_sphere}
\KwIn{dimension $d\in\mathbb{N}_+$ with $d\ge 2$, target size $K\in\mathbb{N}_+$, separation radius $\varepsilon>0$}
\KwOut{a set $\mathcal{P}=\{u_1,\dots,u_M\}\subseteq S^{d-1}$ with pairwise distances at least $\varepsilon$, where $M\le K$}
Initialize $\mathcal{P}\leftarrow \emptyset$\;
\While{$|\mathcal{P}|<K$}{
    Sample $G\sim N(0,I_d)$ in $\mathbb{R}^d$\;
    Set $u\leftarrow G/|G|$\;
    \If{$|u-v|\ge \varepsilon$ for every $v\in \mathcal{P}$}{
        Update $\mathcal{P}\leftarrow \mathcal{P}\cup\{u\}$\;
    }
}
\Return $\mathcal{P}$\;
\end{algorithm}

By construction, every accepted point lies on $S^{d-1}$ and every two accepted points are at Euclidean distance at least $\varepsilon$.  Thus Algorithm~\ref{alg:rejection_sampling_sphere} always returns an $\varepsilon$-separated subset of the sphere.

\section{Supplement Details}

\subsection{The Graphical Model}
We attach Figure~\ref{fig:lecluster} to help explain the infinite classification field.
\begin{figure}[htbp]
    \centering
    \includegraphics[width=0.8\linewidth]{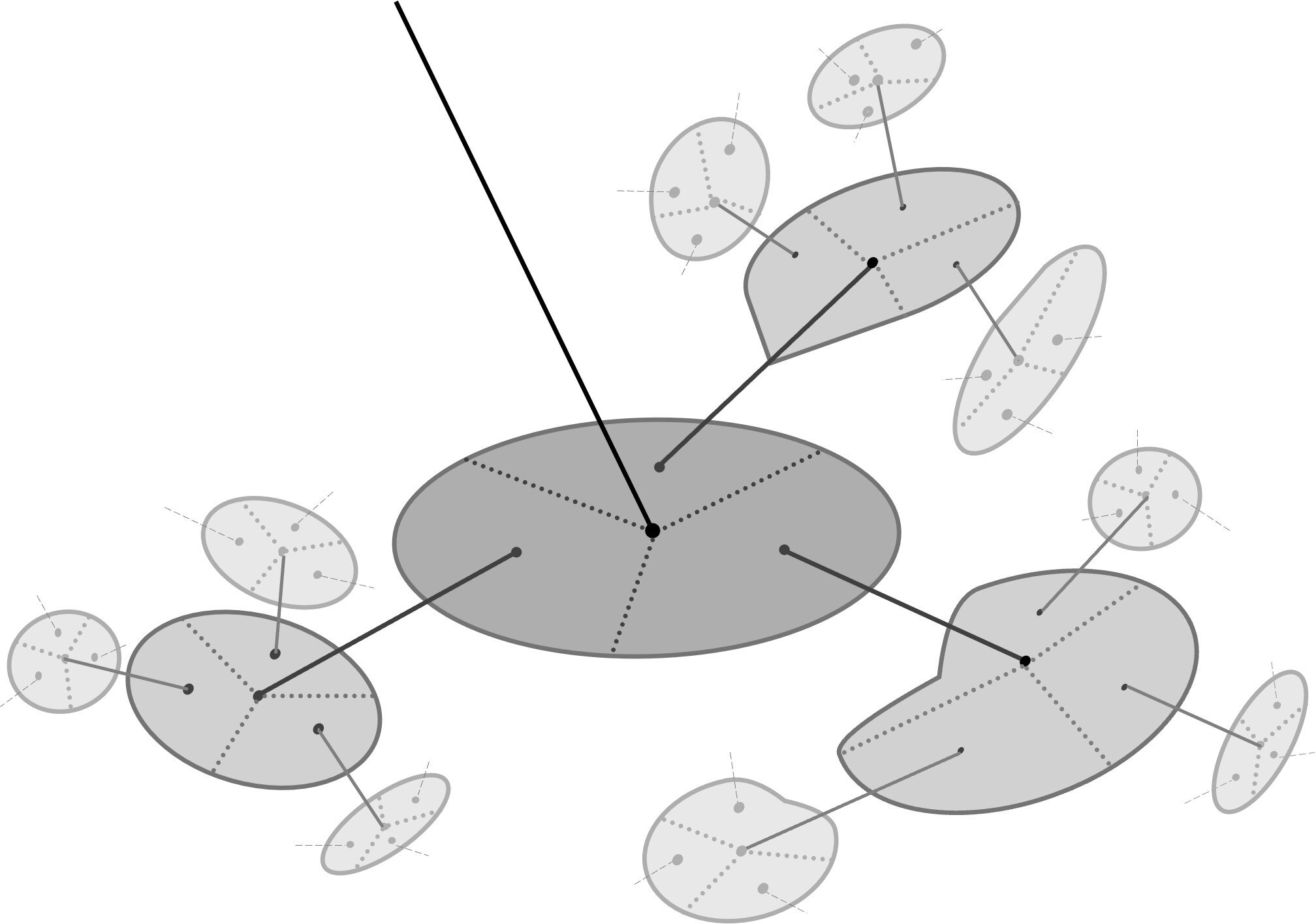}
    \caption{Two views of an infinite classification field. Iterating the CFG first produces a rooted $K$-ary tree. When cells that occupy the same geometric region are identified, this tree becomes the metric DAG used to define our cell and point based distances.}
    \label{fig:lecluster}
\end{figure}

\subsection{Metric definitions}
\label{A:Metric}
Let $\mathcal H_l=\{x^{(l)}_1,\dots,x^{(l)}_{n_l}\}$ denote the ground-truth nodes at level $l$, and let $\widehat{\mathcal H}_l=\{\hat x^{(l)}_1,\dots,\hat x^{(l)}_{n_l}\}$ denote the predicted nodes at the same level. In the synthetic fractal benchmarks, children are generated and stored parent by parent in a fixed map order, so a level-wise pointwise comparison is well defined.

\paragraph{MSE.}
At level \(l\), we report the coordinate-averaged mean squared error
\[
\operatorname{MSE}(\widehat{\mathcal H}_l,\mathcal H_l)
=
\frac{1}{d n_l}
\sum_{i=1}^{n_l}
\|\widehat x_i^{(l)}-x_i^{(l)}\|_2^2 .
\]
This convention matches the implementation, which averages squared error over both
nodes and ambient coordinates.

\paragraph{Permutation-Invariant Chamfer Distance (PI-CD).}
To measure set-level geometric agreement without relying on slot correspondence, we use the symmetric Chamfer distance with squared Euclidean cost:
\[
\mathrm{PI\text{-}CD}(\widehat{\mathcal H}_l,\mathcal H_l)
=
\frac{1}{|\widehat{\mathcal H}_l|}
\sum_{\hat x\in\widehat{\mathcal H}_l}
\min_{y\in\mathcal H_l}\|\hat x-y\|_2^2
+
\frac{1}{|\mathcal H_l|}
\sum_{y\in\mathcal H_l}
\min_{\hat x\in\widehat{\mathcal H}_l}\|y-\hat x\|_2^2.
\]

\paragraph{$d_{\mathrm{pt}}$-distortion.}
For a hierarchy truncated at level $l$, let
\[
\mathcal T_{\le l}
=
\bigcup_{j=0}^{l}\mathcal H_j,
\qquad
\widehat{\mathcal T}_{\le l}
=
\bigcup_{j=0}^{l}\widehat{\mathcal H}_j.
\]
We assign each tree edge weights given by Euclidean parent-child distances. This induces a shortest-path metric on the nodes: for any two nodes $u,v$ in the truncated hierarchy,
\[
d_{\mathrm{pt}}(u,v)
=
d_{\mathrm{root}}(u)+d_{\mathrm{root}}(v)-2\,d_{\mathrm{root}}(\mathrm{lca}(u,v)),
\]
where $d_{\mathrm{root}}(u)$ is the weighted path length from the root to $u$, and $\mathrm{lca}(u,v)$ is the lowest common ancestor of $u$ and $v$.

Let $D^{(l)}_{\mathrm{true}}$ and $D^{(l)}_{\mathrm{pred}}$ denote the corresponding pairwise distance matrices on $\mathcal T_{\le l}$ and $\widehat{\mathcal T}_{\le l}$. We define the normalized $d_{\mathrm{pt}}$-distortion by
\[
\mathrm{Dist}_{d_{\mathrm{pt}}}^{(l)}
=
\frac{
\displaystyle
\frac{1}{\binom{N_l}{2}}
\sum_{1\le i<j\le N_l}
\left|
D^{(l)}_{\mathrm{pred}}(i,j)-D^{(l)}_{\mathrm{true}}(i,j)
\right|
}{
\displaystyle
\max_{1\le i<j\le N_l}
D^{(l)}_{\mathrm{true}}(i,j)
},
\]
where $N_l = |\mathcal T_{\le l}|$. The numerator is the mean absolute discrepancy over all unordered node pairs, and the denominator is the diameter of the ground-truth truncated hierarchy.
The cell metric is used in the theory; in the IFS and image-induced experiments, explicit Voronoi cells are either unavailable or not the object being evaluated, so we report centre-based surrogates: ordered MSE, permutation-invariant chamfer distance, and \(d_{\rm pt}\)-distortion.

\subsection{Experiments Details}
\label{s:Experiments_Details}
Unless stated otherwise, CFG-generated hierarchies use $d=2$, $K=3$,
$s=0.5$, root $x_0=(0,0)$, and observed depth $L_{\mathrm{train}}=2$.
Each method is rolled out for nine unseen levels. We report results over
$9$ independent trials, resampling both the reference child template $C$ and the
ground-truth CFG parameters in each trial. 
The CFG uses
$\nu=3.25$, $\sigma_{\min}=0.1$, $\sigma_{\max}=1.0$, hidden dimension
$500$, and residual-norm check threshold \(\lambda_{\mathrm{chk}}=1.0\). The CFP is a 4-layer MLP with hidden width $64$ trained with Adam for $3000$ epochs at learning rate $10^{-3}$. 
Baselines are chosen to isolate what is gained by learning an input-dependent refinement field: an average residual template, a learnable constant residual template, and an affine residual predictor, as shown in Appendix~\ref{A:Baseline}.
For image-induced hierarchies, the same learning protocol is applied in CLIP embedding space, with details provided in Appendix~\ref{A:cifar_details}. 
All experiments were run on one NVIDIA L40S GPU with 48 GB memory, using Python 3.11.5, PyTorch 2.11.0, and CUDA 12.9. The main experimental runs take approximately one hour in total.

\subsubsection{CFG-generated hierarchies}
\label{A:hierarchiesplots}

For the CFG extrapolation experiments, we generated the training hierarchies shown in Figure~\ref{fig:hierarchy}. Each panel corresponds to one independently generated hierarchy observed up to depth $L_{\mathrm{train}}$ across the $9$ trials.

\begin{figure}[htbp]
    \centering
    \includegraphics[width=0.7\linewidth]{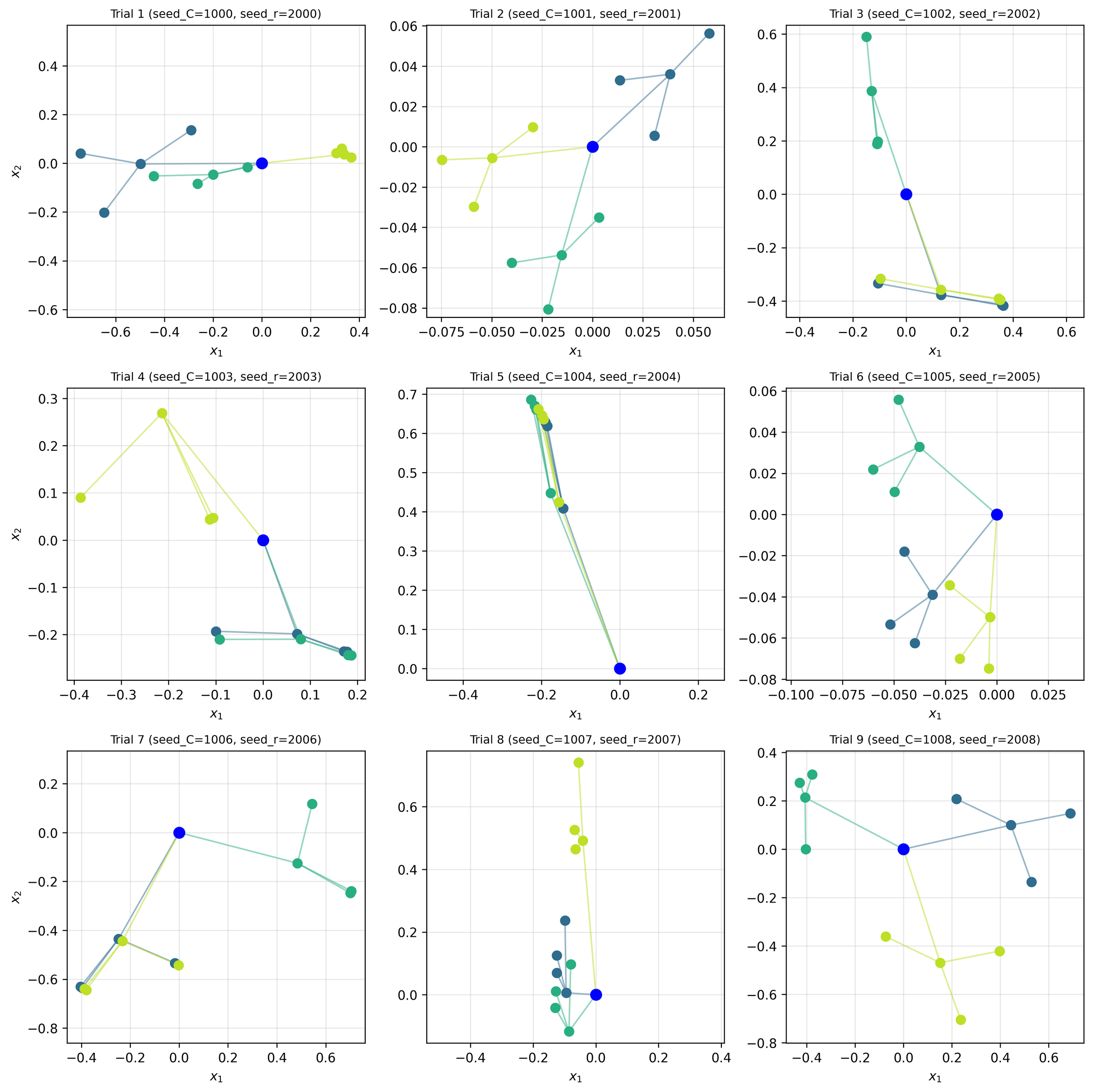}
    \caption{Observed CFG training prefixes across 9 independent trials. Each panel shows the finite hierarchy observed during training before recursive rollout. The diversity across panels reflects resampling of both the reference child template and the ground-truth CFG parameters.}
    \label{fig:hierarchy}
\end{figure}

\begin{figure}[htbp]
    \centering
    \includegraphics[width=0.7\linewidth]{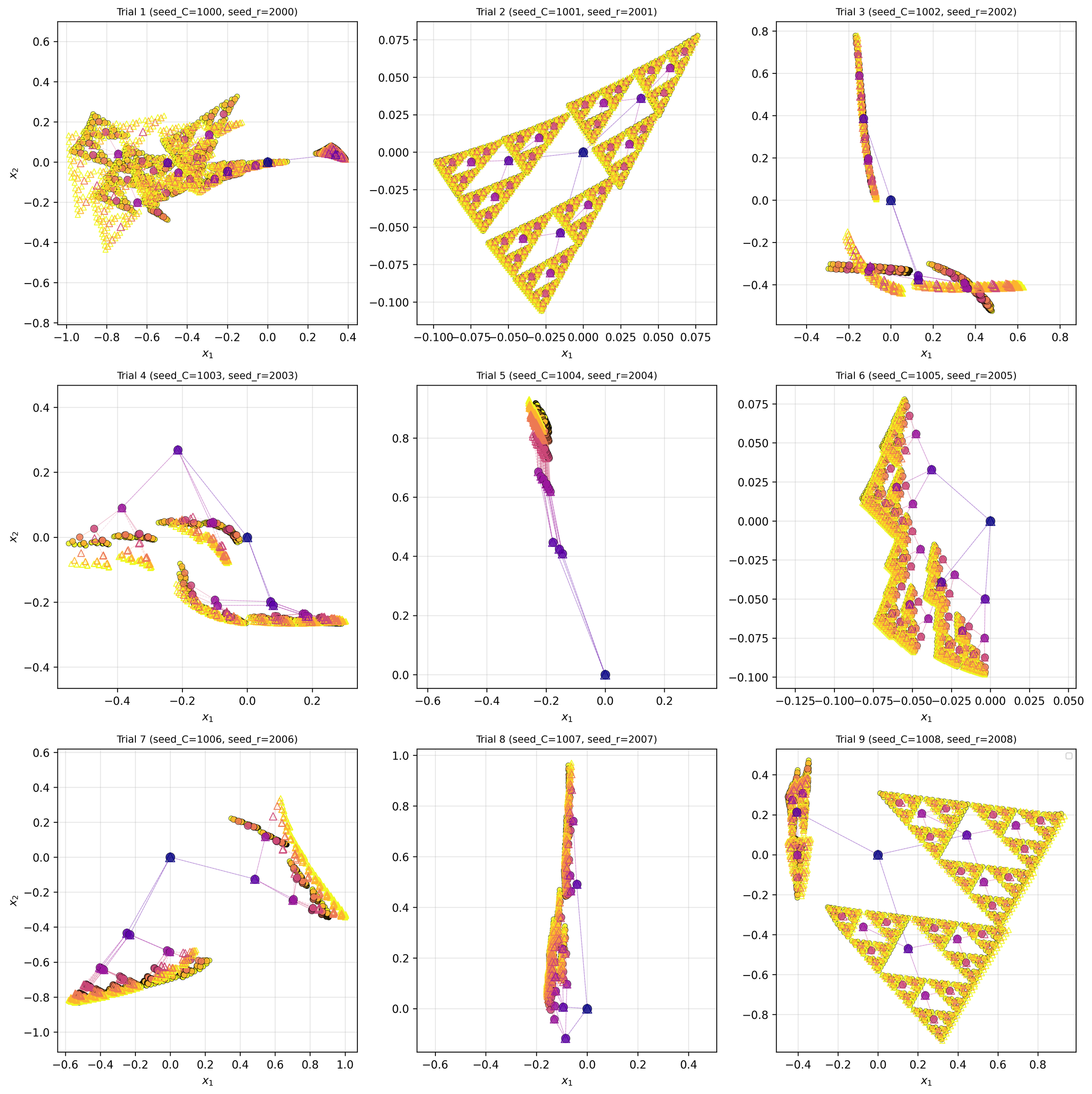}
    \caption{Qualitative CFP rollouts on CFG-generated hierarchies. Each panel shows a CFP trained on the observed prefix and recursively rolled out to unseen depths. The plots illustrate that low extrapolation error corresponds to coherent geometric completion, not only isolated one-step prediction.}
    \label{fig:6hierarchy}
\end{figure}
Figure~\ref{fig:6hierarchy} shows qualitative rollouts of the CFP across the $9$ independent trials. 
The predicted hierarchies continue
to track the global branching structure of the ground truth beyond the
observed prefix, illustrating that the low MSE in Figure~\ref{fig:MSE}
corresponds to coherent geometric completion rather than isolated pointwise
matches.
The following compact notation and Algorithm~\ref{alg:matched_cfg_experiment_compact}
give the exact generation, training, and rollout protocol used for our CFG experiments.
\begin{algorithm}[H]
\small
\DontPrintSemicolon
\caption{Matched CFG-generated hierarchy experiment}
\label{alg:matched_cfg_experiment_compact}
\KwIn{
\(d=2\), \(K=3\), \(s=1/2\), \(x_0=0\), \(L_{\mathrm{train}}=2\),
\(L_{\mathrm{tot}}=11\), \(T=9\);
CFG hyperparameters
\((R_C,\nu,\sigma_{\min},\sigma_{\max},h,\lambda_{\mathrm{chk}})
=(1,3.25,0.1,1.0,500,1.0)\);
CFP width \(64\), hidden depth \(3\), Adam learning rate \(10^{-3}\), epochs \(3000\).
}
\KwOut{
Held-out level-wise errors \(\{e_{\ell,m}^{(t)}\}\). Figure~\ref{fig:MSE}
reports their geometric means with \(\pm 1\) standard deviation bands in
\(\log_{10}\) MSE.
}

Define model classes
\[
\mathcal F_{\mathrm{const}}:\widehat r(x)_k=c_k,\qquad
\mathcal F_{\mathrm{aff}}:\widehat r(x)_k=A_kx+b_k,\qquad
\mathcal F_{\mathrm{CFP}}:\text{ReLU MLP}.
\]

\For{\(t=0,\ldots,T-1\)}{
    Sample \(r_t\leftarrow \operatorname{SampleCFG}(t)\)\;

    Generate the target hierarchy
    \[
        \mathcal H^{(t)}
        \leftarrow
        \operatorname{Rollout}(r_t,x_0,L_{\mathrm{tot}},s).
    \]
    Assert
    \[
        \max_{\ell<L_{\mathrm{tot}}}
        \max_{\alpha\in[K]^\ell}
        \max_{k\in[K]}
        \|r_t(x_\alpha^{(t)})_k\|_2
        \le \lambda_{\mathrm{chk}} .
    \]

    Build observed parent-to-children tuples
    \[
        \mathcal D_t
        \leftarrow
        \Big\{
        \big(x_\alpha^{(t)},(x_{\alpha k}^{(t)})_{k=1}^K,s^{|\alpha|+1}\big)
        :
        |\alpha|<L_{\mathrm{train}}
        \Big\}.
    \]

    Fit the average-residual baseline by
    \[
        \bar r_k
        \leftarrow
        \frac{1}{N_{\mathrm{par}}}
        \sum_{\ell=0}^{L_{\mathrm{train}}-1}
        \sum_{\alpha\in[K]^\ell}
        \frac{x_{\alpha k}^{(t)}-x_\alpha^{(t)}}{s^{\ell+1}},
        \qquad
        N_{\mathrm{par}}=\sum_{\ell=0}^{L_{\mathrm{train}}-1}K^\ell ,
    \]
    and set \(\widehat r_{\mathrm{avg}}(x)_k\equiv \bar r_k\)\;

    \For{\(m\in\{\mathrm{const},\mathrm{aff},\mathrm{CFP}\}\)}{
        Train
        \[
            \widehat r_m
            \leftarrow
            \operatorname{AdamTrain}
            \big(\mathcal F_m,\mathcal L(\cdot;\mathcal D_t)\big).
        \]
    }

    \For{\(m\in\{\mathrm{avg},\mathrm{const},\mathrm{aff},\mathrm{CFP}\}\)}{
        Recursively roll out
        \[
            \widehat{\mathcal H}^{(t)}_m
            \leftarrow
            \operatorname{Rollout}
            (\widehat r_m,x_0,L_{\mathrm{tot}},s).
        \]

        \For{\(\ell=L_{\mathrm{train}}+1,\ldots,L_{\mathrm{tot}}\)}{
            Record
            \[
                e_{\ell,m}^{(t)}
                \leftarrow
                \frac{1}{dK^\ell}
                \sum_{\alpha\in[K]^\ell}
                \big\|
                \widehat x_{\alpha,m}^{(t)}
                -
                x_\alpha^{(t)}
                \big\|_2^2 .
            \]
        }
    }
}

\Return{the raw errors \(e_{\ell,m}^{(t)}\); aggregate them as specified in each
figure or table.}
\end{algorithm}

\paragraph{Compact notation for Algorithm~\ref{alg:matched_cfg_experiment_compact}}
In trial \(t\), \(\operatorname{SampleCFG}(t)\) uses seeds \(1000+t\) and \(2000+t\).
It first samples a reference template \(C_t=(c_1|\cdots|c_K)\) by
\(c_k=R_C g_k/\|g_k\|_2\), \(g_k\sim N(0,I_d)\). It then samples neural
parameters \(\Theta_t\) and defines
\[
\begin{aligned}
\widetilde A_i(x)
&=
\operatorname{mat}_{d\times d}
\!\left(
\nu\,\rho\!\left(B_i\tanh(W_i x+b_i)\right)
\right),\quad
S_i(x)=\frac{\widetilde A_i(x)-\widetilde A_i(x)^\top}{2},\\
\sigma(x)
&=
\sigma_{\min}
+
(\sigma_{\max}-\sigma_{\min})
\rho\!\left(B_\sigma\tanh(W_\sigma x+b_\sigma)\right),\\
r_{\Theta_t}(x)
&=
\exp(S_1(x))\,\operatorname{diag}(\sigma(x))\,\exp(S_2(x))\,C_t .
\end{aligned}
\]
Here \(\rho(z)=(1+\exp(-z))^{-1}\) is applied coordinatewise. All entries of
\(W_i,b_i,B_i,W_\sigma,b_\sigma,B_\sigma\) are sampled independently from
\(N(0,1)\). The matrices \(B_i\) have shape \(d^2\times h\), while
\(B_\sigma\) has shape \(d\times h\). This is the parametrized CFG form in Eq.~(5), with the second orthogonal factor
written in the implementation convention. 
Indeed, since \(S_2(x)\) is skew-symmetric, the displayed implementation equals
Eq.~(5) after identifying \(A_1(x)=S_1(x)\) and \(A_2(x)=-S_2(x)\), so that
\(\exp(A_2(x))^\top=\exp(S_2(x))\).
We also write
\[
\operatorname{Rollout}(g,x_0,L,s)_0=\{x_0\},\qquad
\operatorname{Rollout}(g,x_0,L,s)_\ell
=
\{x+s^\ell g(x)_k:x\in H_{\ell-1},\,k\in[K]\},
\]
with children stored in parent-by-parent and child-slot order. For a training set
\(\mathcal D\), define
\[
\mathcal L(\widehat r;\mathcal D)
=
\sum_{(x,y_{1:K},a)\in\mathcal D}
\sum_{k=1}^K
\|x+a\,\widehat r(x)_k-y_k\|_2^2 .
\]

\subsubsection{Baseline definitions}
\label{A:Baseline}
The average-residual
baseline estimates a location-independent refinement template,
\[
\bar r_k
=
\frac{1}{N_{\mathrm{par}}}
\sum_{l=0}^{L_{\mathrm{train}}-1}
\sum_{x\in \mathcal H_l}
\frac{\mathrm{chld}(x)_k-x}{s^{\,l+1}},
\qquad k=1,\dots,K,
\]
where
$N_{\mathrm{par}}=\sum_{l=0}^{L_{\mathrm{train}}-1}|\mathcal H_l|$.
The learnable-constant baseline uses $\hat r(x)_k=c_k$, and the linear affine
baseline uses $\hat r(x)_k=A_kx+b_k$. All methods are evaluated by the same
recursive rollout rule, $\widehat{\mathrm{chld}}(x)_k=x+s^{\,l+1}\hat r(x)_k.$
The learnable-constant and linear affine baselines are trained with the same
hierarchy loss and recursive rollout protocol as CFP; the average-residual
baseline is fit in closed form by averaging normalized observed residuals.

\subsubsection{IFS benchmark definitions}
\label{A:ifs_details}
All synthetic IFS benchmarks are generated in \(\mathbb R^2\) with rollout scale
\(s=1\), so the hierarchy update reduces to
\[
x+r(x)_k=f_k(x).
\]
Our main out-of-family benchmark suite uses smooth nonlinear deformations of
classical fractal IFS families.  Each nonlinear map is obtained by starting from
an affine contraction \(A_kx+b_k\) and adding a child-specific sinusoidal warp:
\[
f_k(x)
=
A_kx+b_k
+\alpha_k
\begin{bmatrix}
\sin(\langle w^{(1)}_k,x\rangle+\phi_k)\\
\cos(\langle w^{(2)}_k,x\rangle-0.37\phi_k)
\end{bmatrix}.
\]
The implementation also supports a quadratic perturbation, but we set its
coefficient to zero in all reported experiments.  Thus the residual rule
\(r(x)_k=f_k(x)-x\) is genuinely nonlinear while remaining a smooth perturbation
of the corresponding contractive IFS.

For the nonlinear Sierpi\'nski benchmark, the base maps are the standard
three-map Sierpi\'nski IFS with contraction ratio \(1/2\), using vertices
\[
(0,0),\qquad (1,0),\qquad (1/2,\sqrt{3}/2),
\]
and root initialized at their centroid.  For child index \(k=0,1,2\), we set
\[
\alpha_k=0.080,\qquad
w^{(1)}_k=(3.00+0.30k,\;2.10-0.20k),
\]
\[
w^{(2)}_k=(-1.80+0.20k,\;3.00+0.25k),
\qquad
\phi_k=0.80k.
\]
For the nonlinear Cantor dust benchmark, the base maps are
\[
A_k=I/3,\qquad
b_k\in
\left\{
(0,0),\,
(2/3,0),\,
(0,2/3),\,
(2/3,2/3)
\right\},
\]
with root \(x_0=(0.5,0.5)\).  We use the same frequencies and amplitude as
above, with phase \(\phi_k=0.60k\).  For the nonlinear Koch curve benchmark, the
base maps are
\[
A_1=I/3,\quad A_2=R(60^\circ)/3,\quad
A_3=R(-60^\circ)/3,\quad A_4=I/3,
\]
with translations
\[
(0,0),\qquad (1/3,0),\qquad (1/2,\sqrt{3}/6),\qquad (2/3,0),
\]
and root \(x_0=(0.5,0)\).  We again use \(\alpha_k=0.080\) and the same
frequency schedule, with phase \(\phi_k=0.70k\).

Finally, the random nonlinear IFS benchmark samples \(K=3\) maps independently.
For each child map,
\[
A_k=c_kR(\theta_k),\qquad
c_k\sim \mathrm{Unif}(0.20,0.38),\qquad
\theta_k\sim \mathrm{Unif}(0,2\pi),
\]
\[
b_k\sim \mathrm{Unif}([-0.45,0.45]^2).
\]
The two warp directions are sampled by drawing Gaussian vectors, normalizing
them, and scaling each by an independent \(\mathrm{Unif}(2.5,3.5)\) factor.  The
phase and amplitude are sampled as
\[
\phi_k\sim \mathrm{Unif}(0,2\pi),
\qquad
\alpha_k\sim \mathrm{Unif}(0.055,0.085),
\]
with root \(x_0=0\).  For the first three nonlinear benchmarks, the underlying
IFS is fixed across trials.  For the random nonlinear benchmark, a new set of
maps is sampled in each trial.

\paragraph{Contractivity of the nonlinear IFS maps.}
The sinusoidal perturbations are chosen small enough that each nonlinear map
remains contractive. For
\[
f_k(x)
=
A_kx+b_k+\alpha_k
\begin{bmatrix}
\sin(\langle w_k^{(1)},x\rangle+\phi_k)\\
\cos(\langle w_k^{(2)},x\rangle-0.37\phi_k)
\end{bmatrix},
\]
the Jacobian is
\[
Df_k(x)
=
A_k
+
\alpha_k
\begin{bmatrix}
\cos(\langle w_k^{(1)},x\rangle+\phi_k)(w_k^{(1)})^\top\\
-\sin(\langle w_k^{(2)},x\rangle-0.37\phi_k)(w_k^{(2)})^\top
\end{bmatrix}.
\]
Hence, using $\|\cdot\|_2\leq \|\cdot\|_F$,
\[
\|Df_k(x)\|_2
\le
\|A_k\|_2
+
\alpha_k
\sqrt{
\|w_k^{(1)}\|_2^2+\|w_k^{(2)}\|_2^2
}
\;=:\; L_k .
\]
We choose parameters so that $L_k<1$ for every child map. In the fixed
nonlinear Sierpi\'nski benchmark this bound is at most
\[
\frac12
+
0.08
\sqrt{3.6^2+1.7^2+1.4^2+3.5^2}
<0.95 .
\]
For the nonlinear Cantor dust and Koch curve benchmarks, the base contraction
satisfies $\|A_k\|_2\le 1/3$, and the largest frequency vectors in the schedule
give
\[
\frac13
+
0.08
\sqrt{3.9^2+1.5^2+1.2^2+3.75^2}
<0.80 .
\]
For the random nonlinear IFS family, the sampled parameters satisfy
$\|A_k\|_2\le 0.38$, $\alpha_k\le 0.085$, and
$\|w_k^{(1)}\|_2,\|w_k^{(2)}\|_2\le 3.5$, so
\[
0.38+0.085\sqrt{3.5^2+3.5^2}<0.81 .
\]
Therefore each nonlinear child map is globally Lipschitz with constant strictly
smaller than one, and the warped systems remain contractive IFSs.

We additionally prepare affine counterparts of these benchmarks as a diagnostic
linear IFS suite.  These remove the sinusoidal warp and use only the base maps.
The affine Sierpi\'nski benchmark uses the standard three-map IFS with
contraction ratio \(1/2\).  The affine Cantor dust benchmark uses
\[
f_k(x)=x/3+b_k,
\qquad
b_k\in
\left\{
(0,0),\,
(2/3,0),\,
(0,2/3),\,
(2/3,2/3)
\right\},
\]
with root \(x_0=(0.5,0.5)\).  The affine Koch curve benchmark uses
\[
f_1(x)=x/3,\qquad
f_2(x)=R(60^\circ)x/3+(1/3,0),
\]
\[
f_3(x)=R(-60^\circ)x/3+(1/2,\sqrt{3}/6),\qquad
f_4(x)=x/3+(2/3,0),
\]
with root \(x_0=(0.5,0)\).  In the random affine benchmark, we sample
\[
f_k(x)=s_kR(\theta_k)x+b_k,
\]
where
\[
s_k\sim \mathrm{Unif}(0.2,0.45),\qquad
\theta_k\sim \mathrm{Unif}(0,2\pi),\qquad
b_k\sim \mathrm{Unif}([-0.5,0.5]^2),
\]
and use \(x_0=0\).  For the first three affine benchmarks, the IFS is fixed
across trials, while the random affine benchmark resamples the maps in each
trial.

\begin{table*}[t]
\centering
\scriptsize
\setlength{\tabcolsep}{4pt}
\renewcommand{\arraystretch}{0.95}
\caption{
These benchmarks remove the sinusoidal warp from the nonlinear IFS suite and use only affine contraction maps.  We report these results as a linear counterpart to the nonlinear IFS suite.
}
\label{tab:fractal_benchmarkwise}
\begin{tabular}{lccccc}
\toprule
Method & Sierpi\'nski & Cantor Dust & Koch Curve & Rand. Affine IFS & Macro Avg. \\
\midrule
\multicolumn{6}{l}{\textit{MSE} $\downarrow$} \\
CFP (Ours) & \textbf{0.80 $\pm$ 0.28} & \textbf{1.13 $\pm$ 0.33} & \textbf{0.34 $\pm$ 0.02} & \textbf{2.75 $\pm$ 3.29} & \textbf{1.26 $\pm$ 1.78} \\
Affine Residual Predictor & 13.01 $\pm$ 6.12 & 13.93 $\pm$ 8.49 & 6.02 $\pm$ 1.81 & 8.60 $\pm$ 2.88 & 10.39 $\pm$ 6.04 \\
Learnable Const & 67.33 $\pm$ 0.00 & 241.99 $\pm$ 0.00 & 81.05 $\pm$ 0.00 & 133.25 $\pm$ 74.68 & 130.90 $\pm$ 78.36 \\
Avg Residual & 67.33 $\pm$ 0.00 & 241.99 $\pm$ 0.00 & 81.05 $\pm$ 0.00 & 133.25 $\pm$ 74.68 & 130.91 $\pm$ 78.36 \\
\midrule
\multicolumn{6}{l}{\textit{PI-CD} $\downarrow$} \\
CFP (Ours) & \textbf{1.87 $\pm$ 0.50} & \textbf{2.49 $\pm$ 0.67} & \textbf{0.39 $\pm$ 0.03} & \textbf{3.07 $\pm$ 3.49} & \textbf{1.96 $\pm$ 1.94} \\
Affine Residual Predictor & 16.57 $\pm$ 7.53 & 16.16 $\pm$ 8.33 & 4.40 $\pm$ 1.55 & 8.08 $\pm$ 5.38 & 11.31 $\pm$ 7.87 \\
Learnable Const & 102.34 $\pm$ 0.00 & 286.86 $\pm$ 0.00 & 54.34 $\pm$ 0.00 & 113.56 $\pm$ 58.15 & 139.27 $\pm$ 94.21 \\
Avg Residual & 102.34 $\pm$ 0.00 & 286.86 $\pm$ 0.00 & 54.34 $\pm$ 0.00 & 113.56 $\pm$ 58.15 & 139.28 $\pm$ 94.21 \\
\midrule
\multicolumn{6}{l}{\textit{$d_{\mathrm{pt}}$-Distortion} $\downarrow$} \\
CFP (Ours) & \textbf{8.43 $\pm$ 1.34} & \textbf{6.21 $\pm$ 0.68} & \textbf{5.79 $\pm$ 0.48} & \textbf{10.59 $\pm$ 4.57} & \textbf{7.75 $\pm$ 2.97} \\
Affine Residual Predictor & 60.77 $\pm$ 17.14 & 42.22 $\pm$ 16.18 & 42.37 $\pm$ 13.07 & 51.03 $\pm$ 41.67 & 49.10 $\pm$ 24.08 \\
Learnable Const & 72.88 $\pm$ 0.00 & 89.04 $\pm$ 0.00 & 95.13 $\pm$ 0.00 & 131.65 $\pm$ 23.40 & 97.17 $\pm$ 24.53 \\
Avg Residual & 72.88 $\pm$ 0.00 & 89.04 $\pm$ 0.00 & 95.13 $\pm$ 0.00 & 131.65 $\pm$ 23.40 & 97.17 $\pm$ 24.53 \\
\bottomrule
\end{tabular}
\end{table*}

\begin{figure}[htbp]
    \centering
    \includegraphics[width=\linewidth]{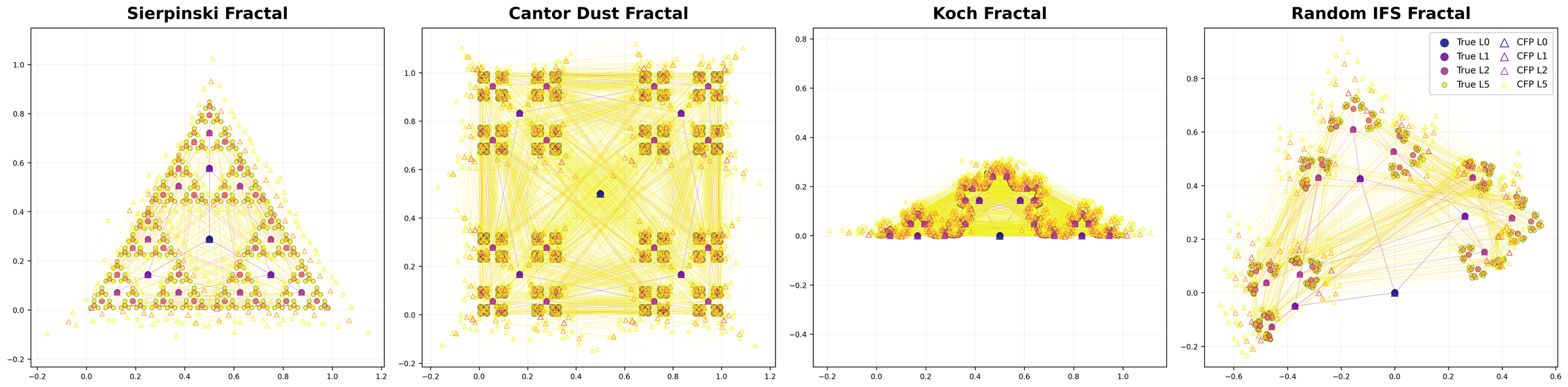}
    \caption{Recursive rollout on linear IFS-induced classification fields. From left to right: Sierpiński triangle, 2D Cantor dust, Koch curve, and random affine IFS. Models are trained only on levels 0–2 and rolled out to level 5. Filled circles and solid edges denote ground truth; hollow triangles and dashed edges denote CFP predictions. These examples test whether learned local refinement rules preserve recursive geometry outside the matched CFG family.}
    \label{fig:benchmark}
\end{figure}

\subsubsection{Implementation details for CIFAR hierarchies}
\label{A:cifar_details}

For the image benchmarks, the hierarchy is induced from representation space rather than generated by a known recursive law. We first extract frozen CLIP ViT-B/32 embeddings \cite{radford2021learningtransferablevisualmodels} from the CIFAR training images and then construct a finite ternary hierarchy recursively. In principle, this hierarchy-construction step is not tied to a single deep clustering algorithm: any method that produces a recursive partition together with cluster representatives could be used in the same pipeline. In this work, we instantiate this step with an existing deep clustering method~\cite{li2025lightsimplifyingimageclustering}, with loss weight $\alpha = 1$, which we apply recursively to obtain a ternary hierarchy.

Because the resulting hierarchy is data-induced, child slots are not automatically aligned across different parents. To reduce this ambiguity, we canonicalize the order of the children at each split using a PCA-based geometric rule before training the residual predictor. Concretely, let $u_1,u_2\in\mathbb{R}^d$ denote the first two principal directions obtained by fitting PCA to the centroids from all observed training levels. For a parent centroid $p$ and one of its children $c_k$, we form the displacement
\[
\delta_k = c_k - p,
\]
and project it onto the PCA plane,
\[
z_k =
\begin{pmatrix}
u_1^\top \delta_k \\
u_2^\top \delta_k
\end{pmatrix}.
\]
We then assign the child a canonical angular coordinate
\[
\theta_k = \operatorname{atan2}\!\big((z_k)_2,(z_k)_1\big),
\]
and order the children of the same parent by increasing $\theta_k$, using $\|\delta_k\|_2$ only as a tie-breaker when needed. This preprocessing step is used only to improve slot consistency across parents; it does not change the underlying hierarchy itself.

After hierarchy construction, we convert the tree into level-wise centroid
arrays and canonicalize the child order. 
For CIFAR, the predictor operates in the CLIP embedding space and is implemented as a bottleneck MLP with width $1024$, depth $7$, and GELU activations. We trained on levels 0 to 5 and report the performance of predicting the unseen level 6.
We use GELU only in this high-dimensional image-induced setting, where the deeper pure MLP was less stable with ReLU. 
This choice does not change the theory.
Our theoretical results are stated for ReLU because it admits explicit,
parameter efficient constructive approximation and interpolation guarantees,
while general finite sample interpolation results also apply to broad classes
of non-polynomial activations~\cite{petersen2024mathematical}. 
This is also consistent with approximation theoretic work showing that alternative
activation choices can reduce approximation error in sufficiently complex settings, although optimization and generalization remain important considerations~\cite{zhang2022deep}. 
We compare against the same linear, learnable constant, and average residual baselines used in the synthetic experiments.

\subsubsection{Ablation studies}
\label{A:ablation}
We next examine three factors that are especially relevant to recursive hierarchy completion.

\textbf{Effect of observed depth.}
Figure~\ref{fig:Ltrain} shows that extrapolation quality improves substantially as the observed training depth $L_{\mathrm{train}}$ increases. When only one refinement level is observed ($L_{\mathrm{train}}=1$), the  CFP behaves similarly to the average-residual baseline, which suggests that a single observed split does not provide enough information to identify a reusable input-dependent refinement rule. Once the model is trained on two or more levels, the gap becomes clear. For $L_{\mathrm{train}}=2$,  CFP achieves test MSE roughly one order of magnitude lower than the baseline, and for $L_{\mathrm{train}}=3$ the gap widens further. The error curves also remain nearly flat across deeper rollout levels, indicating stable recursive extrapolation rather than only better local fitting.

\begin{figure}[htbp]
    \centering
    \includegraphics[width=\linewidth]{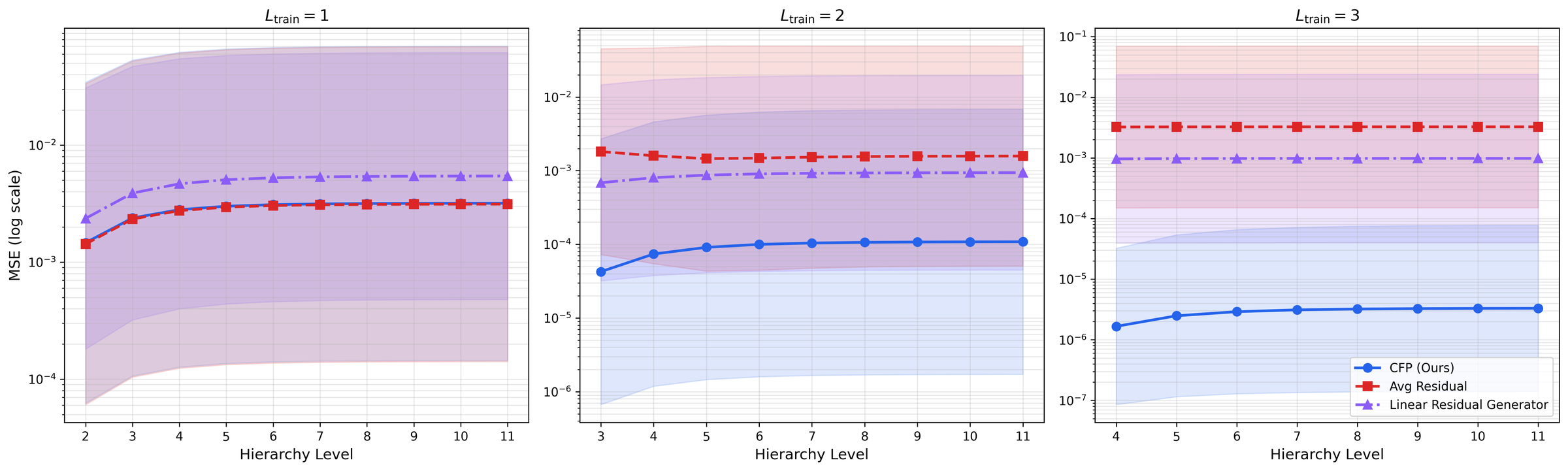}
    \caption{Effect of the observed training depth $L_{\mathrm{train}}$ on extrapolation MSE. As additional levels are observed during training, the CFP achieves substantially lower error on deeper unseen levels.}
    \label{fig:Ltrain}
\end{figure}

\textbf{Effect of scale factor.}
We next vary the scale factor $s\in\{0.3,\,0.5,\,0.7,\,0.85\}$ while fixing $d=2$, $K=3$, and $L_{\mathrm{train}}=2$. Because changing $s$ also changes the overall geometric scale of the hierarchy, we report a normalized MSE obtained by dividing the level-wise prediction error by
\[
\frac{1}{|\mathcal H_l|}\sum_{i=1}^{|\mathcal H_l|}\|x_i^{(l)}-x_0\|_2^2.
\]
This normalization keeps the results comparable across different scale regimes. Figure~\ref{fig:s_ablation} shows that extrapolation becomes harder as $s$ increases, since larger refinement steps make recursive error accumulation more pronounced for all methods. Even in this more difficult setting, the CFP remains consistently more accurate than the average-residual and affine baselines, especially at low and moderate values of $s$.

\begin{figure}[htbp]
    \centering
    \includegraphics[width=\linewidth]{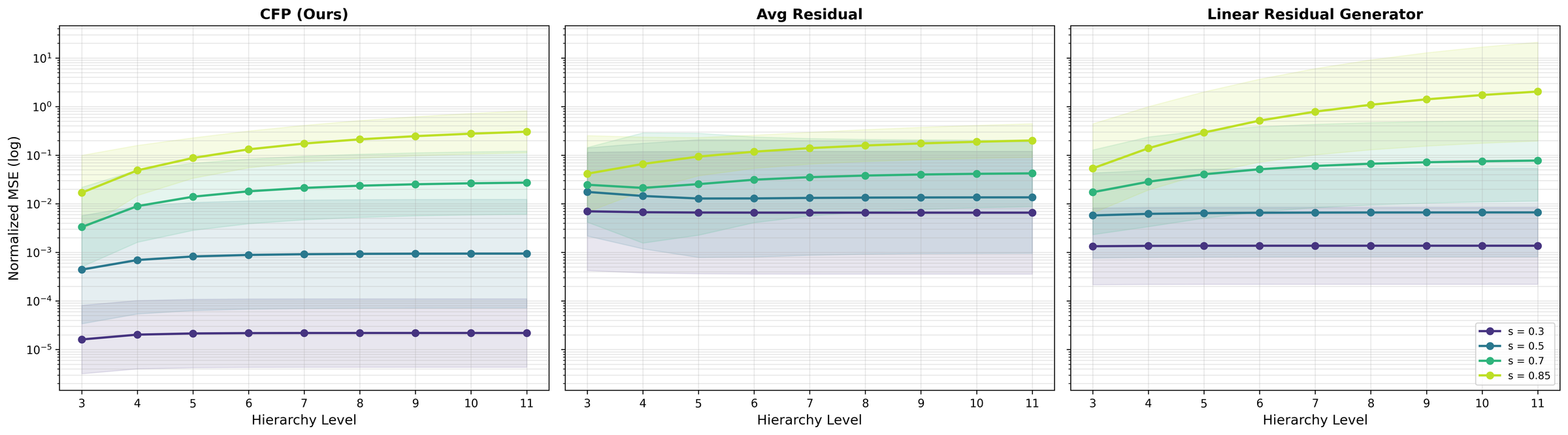}
    \caption{Effect of the scale factor $s$ on extrapolation performance in CFG-generated hierarchies. We fix $d=2$, $K=3$, and $L_{\mathrm{train}}=2$, and report normalized MSE across unseen levels for different values of $s$. Larger values of $s$ make recursive rollout substantially more difficult for all methods, while the CFP remains the most robust across the full range.}
    \label{fig:s_ablation}
\end{figure}

\textbf{Effect of child ordering.}
We also study how sensitive  CFP is to child-slot correspondence. For each trial, we construct three versions of the same ground-truth hierarchy: original ordered, which preserves the generator-induced child ordering; permuted, in which the $K$ children of each parent are independently and randomly permuted together with their descendant subtrees; and permuted $+$ canonicalized, in which the permuted children are reordered by a deterministic geometric rule based on the polar angle of $(\mathrm{child}-\mathrm{parent})$, with Euclidean distance used to break ties. Figure~\ref{fig:childO} shows that the original ordered setting gives the best performance, while random local permutations cause a clear drop. This indicates that consistent child correspondence matters for hierarchy extrapolation.

\begin{figure}[htbp]
    \centering
    \includegraphics[width=\linewidth]{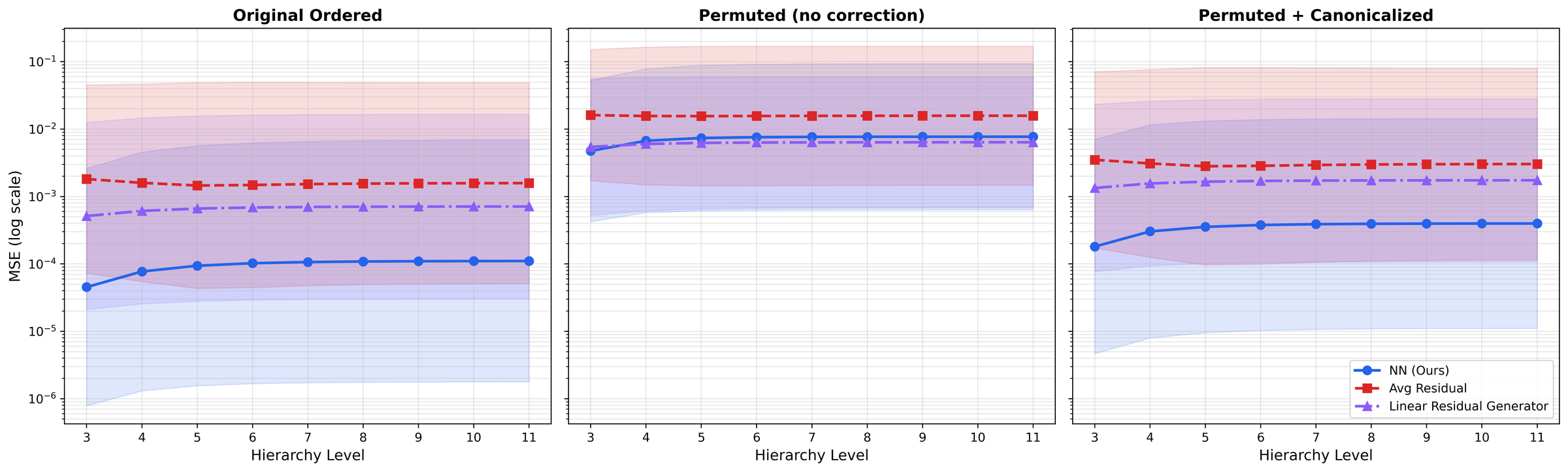}
    \caption{Sensitivity to child-slot correspondence. We compare the generator-induced ordering, independently permuted child slots, and permuted slots followed by geometric canonicalization. Random local permutations degrade rollout accuracy, while canonicalization substantially recovers performance, showing that consistent child slots are important for order-sensitive CFP training.}
    \label{fig:childO}
\end{figure}

\subsubsection{Limitations}
\label{A:limit}
Our framework assumes a fixed branching factor, sufficient separation, and a refinement process that is reusable across branches. Ordered child slots are also required for order-sensitive training. For data-induced hierarchies this requires canonicalization, and weakly aligned slots can degrade rollout. The IFS and CIFAR experiments are therefore best viewed as validation probes outside the exact theorem setting rather than evidence that every hierarchical clustering problem admits a classification field. Extending the framework to noisy, variable-branching, weakly aligned, or non-self-similar hierarchies remains an important direction.


\end{document}